\documentclass[onecolumn,draftcls]{IEEEtran}
%
\usepackage{amsfonts}
\usepackage{times}		
\usepackage{amsmath}
\usepackage{amsthm}
\usepackage[linesnumbered,ruled,vlined]{algorithm2e}
\usepackage{multicol}
\usepackage{graphicx}
\usepackage{subfigure}
\usepackage{colortbl,booktabs}
\usepackage{threeparttable}
\usepackage{subeqnarray}
\usepackage{cases}
\usepackage{multicol}
\usepackage{multirow}
\DeclareMathOperator*{\argmin}{argmin}

\newcommand{\dhline}{\hline\hline}

\newtheorem*{thmA}{Theorem 1}
\newtheorem{theorem}{Theorem}
\newtheorem{lemma}{Lemma}


%
\usepackage{ifpdf}
 \ifpdf
   \pdfoutput=1
 \else
 \fi


%
\ifCLASSOPTIONcompsoc
\else
\fi
%

%
\ifCLASSINFOpdf
\else
\fi
\hyphenation{op-tical net-works semi-conduc-tor}

\begin{document}
%
\title{Distortion-driven Turbulence Effect Removal using Variational Model}
%
%
%
%

\author{Yuan~Xie,~\IEEEmembership{Member,~IEEE,}
        Wensheng~Zhang,
        Dacheng~Tao,~\IEEEmembership{Senior~Member,~IEEE},
        Wenrui Hu,
        Yanyun~Qu,
        and Hanzi~Wang,~\IEEEmembership{Senior~Member,~IEEE,}
\IEEEcompsocitemizethanks{
\IEEEcompsocthanksitem Y. Xie, W. Zhang and W. Hu are with the State Key Lab. of Complex Systems and Intelligence Science, Institute of Automation, Chinese Academy of Sciences, Beijing, 100190, China; E-mail: \{yuan.xie, wensheng.zhang, wenrui.hu\}@ia.ac.cn
\IEEEcompsocthanksitem D. Tao is with the Center for Quantum Computation \& Intelligent Systems and the Faculty of Engineering \& Information Technology, University of Technology, Sydney, Australia; E-mail: dacheng.tao@uts.edu.au
\IEEEcompsocthanksitem Y. Qu and H. Wang are with School of Information Science and Technology, Xiamen University, Fujian, 361005, China; E-mail: \{yyqu, hanzi.wang\}@xmu.edu.cn}
\thanks{}}

%
%

\markboth{SUBMIT TO IEEE TRANSACTIONS ON PATTERN ANALYSIS AND MACHINE INTELLIGENCE}%
{Shell \MakeLowercase{\textit{et al.}}: Bare Demo of IEEEtran.cls for Computer Society Journals}
%


\IEEEcompsoctitleabstractindextext{%
\begin{abstract}
It remains a challenge to simultaneously remove geometric distortion and space-time-varying blur in frames captured through a turbulent atmospheric medium. To solve, or at least reduce these effects, we propose a new scheme to recover a latent image from observed frames by integrating a new variational model and distortion-driven spatial-temporal kernel regression. The proposed scheme first constructs a high-quality reference image from the observed frames using low-rank decomposition. Then, to generate an improved registered sequence, the reference image is iteratively optimized using a variational model containing a new spatial-temporal regularization. The proposed fast algorithm efficiently solves this model without the use of partial differential equations (PDEs). Next, to reduce blur variation, distortion-driven spatial-temporal kernel regression is carried out to fuse the registered sequence into one image by introducing the concept of the near-stationary patch. Applying a blind deconvolution algorithm to the fused image produces the final output. Extensive experimental testing shows, both qualitatively and quantitatively, that the proposed method can effectively alleviate distortion and blur and recover details of the original scene compared to state-of-the-art methods.
\end{abstract}

\begin{keywords}
Image restoration, atmospheric turbulence, variational model, distortion-driven kernel
\end{keywords}}

\maketitle

\IEEEdisplaynotcompsoctitleabstractindextext

%
\IEEEpeerreviewmaketitle

\section{Introduction}
%
%

%
%
%
%

\begin{figure}[!htbp]
\setlength{\abovecaptionskip}{0pt}  
\setlength{\belowcaptionskip}{0pt} 
\renewcommand{\figurename}{\footnotesize{Figure}}
\centering
\includegraphics[width=0.6\textwidth]{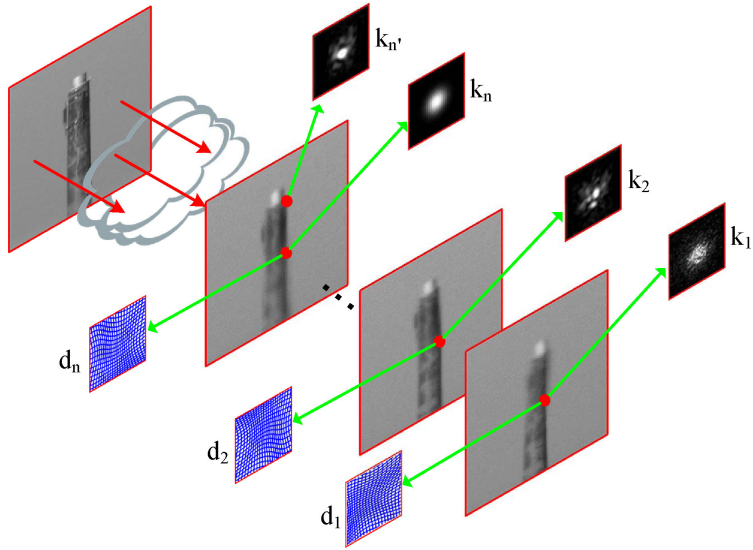}
\caption{\footnotesize{Illustration of the visual effects caused by atmospheric turbulence. $d_1$ to $d_n$ are local deformation fields, which means that a distortion effect exists. Blur kernels $k_1$ to $k_n$ and $k_n'$ indicate the space-time-varying blur effect.}}
\label{fig:visual-effect}
\end{figure}
\IEEEPARstart{A}{tmospheric} turbulence can severely degrade the quality of images produced by long range observation systems, rendering the images unsuitable for vision applications such as surveillance or scene inspection. The main visual effects caused by atmospheric turbulence are geometric distortion and space-time-varying blur (see examples in Fig. \ref{fig:visual-effect}). The distortion is primarily generated by (1) optical turbulence and (2) scattering and absorption by particulates; aerosols, for example, diffuse light will also cause blur \cite{optics1}. Several approaches have been used to restore images, including adaptive optics techniques ({\it e.g.} \cite{optics2,optics3}) and pure image processing-based methods (such as \cite{vorontsov1,vorontsov2,xzhu1,xzhu2,hotair}). Due to random fluctuations in turbulence, calculating a reasonable estimation of atmospheric modulation transfer function (MTF) is extremely difficult. However, this function is of critical importance for optics-based restoration methods. Therefore, in this article, we only focus on using image processing to handle the degradation caused by turbulence. Supposing that the scene and the image sensor are both static, we adopt the mathematical model used in \cite{xzhu2,EFF,base-model} to interpret imaging processing through the turbulence:
\begin{equation}\label{degrede_model}
    f_i(\mathbf{x}) = D_{i,blur(\mathbf{x})} \Big(H(u(\mathbf{x}))\Big) + \varepsilon_i, \quad \forall i, \quad
    i\in [1,\ldots,N]
\end{equation}
where $u$ is the static original scene needed to be retrieved, $f_i$ is the observed frame at time $i$, $N$ represents the number of observed frames, the vector $\mathbf{x} = (x,y)^{T}$ is a $2$-D spatial location, and $\varepsilon_i$ denotes the sensor noise. $H$ is a blurring operator, which is caused by sensor optics, and corresponds to a space-invariant diffraction-limited point spread function (PSF) $h$. $D$ is a deformation operator, and its subscript indicates that both the local deformation and the space-varying blur (the PSF is characterized by $h_{i,x}$) exist concurrently.

Simultaneously removing space-time-varying blur and distortion is a nontrivial problem. Li {\it et al.} \cite{atomspheric-pca} explicitly formulated multichannel image deconvolution as a principal component analysis (PCA) problem, and applied it to the restoration of atmospheric turbulence-degraded images. However, this spectral method does not fully correct the deformation. Moreover, due to the fact that high-frequency information is discarded, the local texture of the true scene is also poorly recovered. Hirsch {\it et al.} \cite{EFF} utilized space-varying blind deconvolution to alleviate turbulence distortion. While reasonably effective, it does not take sensor noise into account when estimating the local PSF, which results in deblurring artifacts.

Image selection and fusion methodologies have been deployed to produce a high-quality latent image. The ``lucky frame'' methods \cite{luckyframe1,luckyframe2,luckyregion1,luckyregion2} select the relatively high quality frame from a degraded sequence by using sharpness as the image quality measurement. However, the so-called ``lucky frame'' is unlikely to exist in a short exposure video stream. To alleviate this problem, Aubailly {\it et al.} \cite{luckyregion3} proposed a local version of the ``lucky frame'' method referred to as the ``lucky region'' method. In this method, a high-quality image from the output is fused with many small lucky regions detected using a local sharpness metric. However, even though the space-time-varying blur can be removed during the fusion process, the final output is still susceptible to the blur caused by the diffraction-limited PSF \cite{xzhu2}.

The success of some recently proposed turbulence removal methods stems from the use of diffeomorphic warping and image sharpening techniques \cite{hotair,temporal-mean,D-Frakes,bregman_distort}. Shimizu {\it et al.} \cite{hotair} applied a temporal median filter to build a reference image, and then fixed the geometric distortion using B-spline non-rigid registration associated with an additional stabilization term. A high-resolution latent image is then obtained by employing a super-resolution method. Mao and Gilles \cite{bregman_distort} combined optical flow-based geometric correction with a non-local total variance-based regularization process to recover the original scene. However, this method only focuses on correcting distortion and ignores the rich information in multiple frames when recovering the detail of the image.

Zhu {\it et al.} \cite{xzhu2} proposed the diffeomorphic warping and image sharpening approach by combining a symmetric constraint-based B-spline registration associated with near-diffraction-limited image reconstruction to reduce the space and time-varying problem to a shift invariant one. However, this method has two main limitations: (1) the method uses the temporal mean of the observed frame to calculate the reference image, which leads to a poor registration result (especially in the case of strong turbulence); and (2) the method constructs a single image from the near-diffraction-limited detection based on the assumption that the distortion can be effectively removed, meaning that the noise introduced by registration error cannot be reduced.

In this paper, we propose a new scheme for restoring a single sharp image of the original scene from a sequence of observed frames degraded by atmospheric turbulence. The proposed method can effectively remove both local deformation and spatially-varying blur and recover the image details, even in the case of strong turbulence. The scheme consists of the following four steps. First, we construct a reference image by \textbf{\emph{low-rank decomposition}}, resulting in a sharper and less noisy image than the traditional method. Second, the reference image is iteratively optimized and enhanced using a variational model involving a new \textbf{\emph{spatial-temporal regularization}}, which helps generate an improved registered sequence. Moreover, we design \textbf{\emph{a simple but highly efficient algorithm}} without the use of PDEs in order to solve the variational model, which is supported by a rigorous proof of convergence. Third, by introducing the concept of the near-stationary patch, a single image with reduced space-varying blur is produced using a fusion process. In this step, to avoid noise effects, \textbf{\emph{distortion-driven spatial-temporal kernel regression}} is employed to eliminate the noise both in the image and the temporal domains. To the best of our knowledge, using the information from the deformation field to guide the construction of the local kernel has not previously been described. Finally, a blind space-invariant deconvolution algorithm can be used to generate the final output.

\begin{figure*}[htbp]
\setlength{\abovecaptionskip}{0pt}  
\setlength{\belowcaptionskip}{0pt} 
\renewcommand{\figurename}{Figure}
\centerline{
\begin{minipage}[b]{1.1\textwidth}
\centerline{
\includegraphics[width=\textwidth]{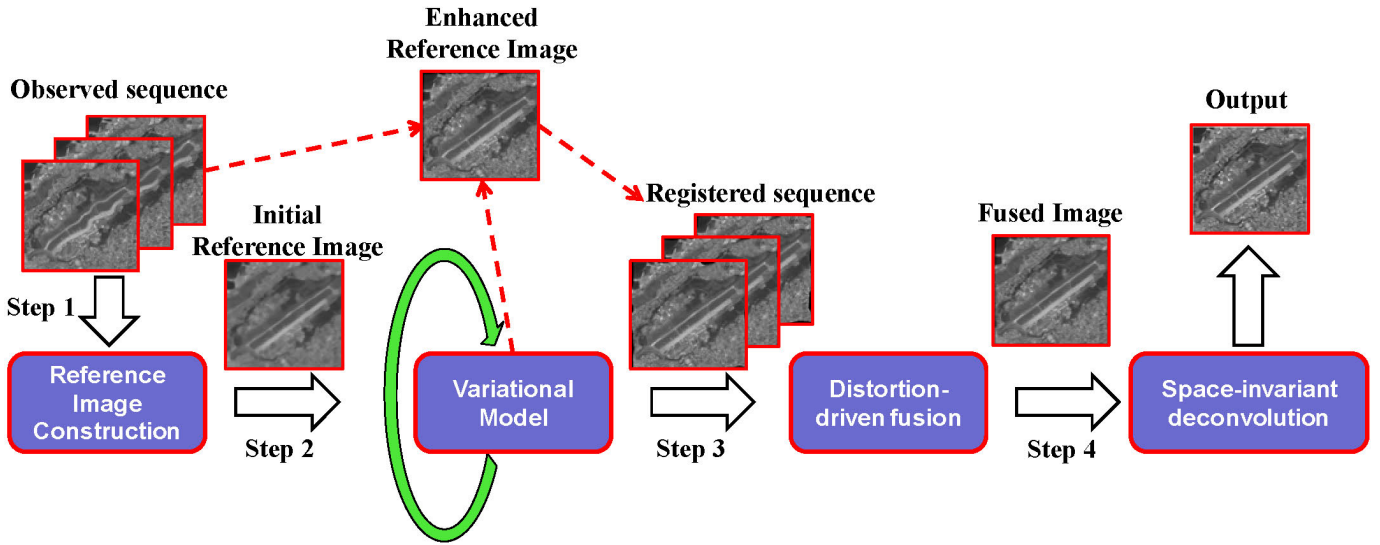}}
\caption{Block diagram of the proposed restoration scheme. (1) Step 1: construct a high-quality reference image from observed sequence using low-rank decomposition; this reference image will be used as the initial value of the variational model in the next step. (2) Step 2: enhance the reference image iteratively using a new variational model (as the green arrow shown). After the convergence of optimization, the observed sequence is registered to the enhanced reference image to produce a registered sequence with little local deformation (as the dashed arrow line indicated). (3) Step 3: a distortion-driven fusion process is employed to fuse the registered sequence to one image with only space-invariant blur effect existing. (4) Step 4: final restored image is produced by a commonly space-invariant deconvolution method.}
\label{fig:overview}
\end{minipage}}
\end{figure*}

This paper is structured as follows. Section \ref{frame_work} provides an overview of the proposed restoration algorithm. In Section \ref{Spatial-Temporal-Regularization}, we detail the method used to optimize the reference image. Image fusion from the registered sequence is presented in Section \ref{image_recon}, while single image deconvolution is presented in Section \ref{final-deconvolution}. Experimental results are presented in Section \ref{experiment} and we discuss the methods and conclude in Section \ref{discussion-and-conclusion}.

\section{Restoration Scheme Overview}\label{frame_work}
The proposed restoration framework has four main steps (see Fig. \ref{fig:overview}). Given an observed sequence $\{f_i\}$, step $1$ applies the low-rank decomposition to $\{f_i\}$ to generate a high-quality reference image. In multi-frame registration, the reference image is usually obtained by computing the temporal mean of the observed frames \cite{xzhu1,xzhu2,hotair,temporal-mean,D-Frakes,bregman_distort,direct-inverse}. An averaged image such as this is always blurry and noisy, especially when turbulence is strong. Inspired by \cite{RPCA}, which utilizes sparse decomposition for background modeling, we use matrix decomposition to obtain the low-rank part and construct the reference image for registration. Our rationale for using this approach is that the low-rank part is a stable component of a "dancing image", and therefore corresponds to the original scene to some extent. The decomposition can be defined as follows:
\begin{equation}\label{low_rank_decomp1}
    \text{minimize   } ||L||_{\ast} + \lambda ||S||_{1} \qquad \text{subject to   } L + S = G
\end{equation}
where $G\in R^{m\times n}$ is the distorted sequence matrix with each column being a distorted frame vector $f_i$, $m$ denotes the total pixels in each frame, and $n$ denotes the number of frames in the sequence. $L\in R^{m\times n}$ is the low-rank component of $G$, $S\in R^{m\times n}$ is the sparse component of $G$, $||\cdot||_{\ast}$ is the nuclear norm defined by the sum of all singular values, $||\cdot||_{1}$ is the $l_1$-norm defined by the component-wise sum of absolute values of all the entries, and $\lambda$ is a constant providing a trade-off between the sparse and low-rank components. Some decomposed results are illustrated in Fig. \ref{fig:low_rank_decomp}. We exploit robust principal component analysis (RPCA) to estimate the reference image, which acts as the initial value for the next step.

Step $2$ enhances the reference image using a variational model employing spatial-temporal regularization. This optimization is the inverse process of the following problem
\begin{equation}\label{deform_model}
    f_i = \Phi_i u + \varepsilon, \quad \forall i \in [1,\ldots,N]
\end{equation}
where $\Phi_i$ is the linear operator corresponding to the deformation field $\overleftarrow{F_i}$ obtained by B-spline registration \cite{ffd,mirt} and $u$ represents the reference image to be enhanced. The variational model optimizes   with only a few rounds of non-rigid registration and subproblem optimization. After the convergence of the optimization procedure, the observed frames $\{f_i\}$ are registered to the enhanced reference image to achieve the registered sequence $\{R_i\}$. This step effectively removes local deformation.

Step $3$ restores a single image $Z$ from the registered sequence $\{R_i\}$ by distortion-driven fusion in order to eliminate the space-time-varying blur. A near-stationary patch can be detected for each local region from the patch sequence through the temporal domain. Spatial-temporal kernel regression is then carried out to reduce the noise caused by optics and registration error.  Fusing all the denoised near-stationary patches generates an image $Z$. While the output $Z$ is still a blurred image, it can be approximately restored using a common spatially invariant deblurring technique.

In the final step, based on the statistical prior of natural images, a single image blind deconvolution algorithm is implemented on $Z$ to further remove blur and enhance image quality. Details of steps $3$, $4$ and $5$ will be given in the following sections.

\begin{figure}[!htbp]
\setlength{\abovecaptionskip}{0pt}  
\setlength{\belowcaptionskip}{0pt} 
\renewcommand{\figurename}{\footnotesize{Figure}}
\centering
\includegraphics[width=0.6\textwidth]{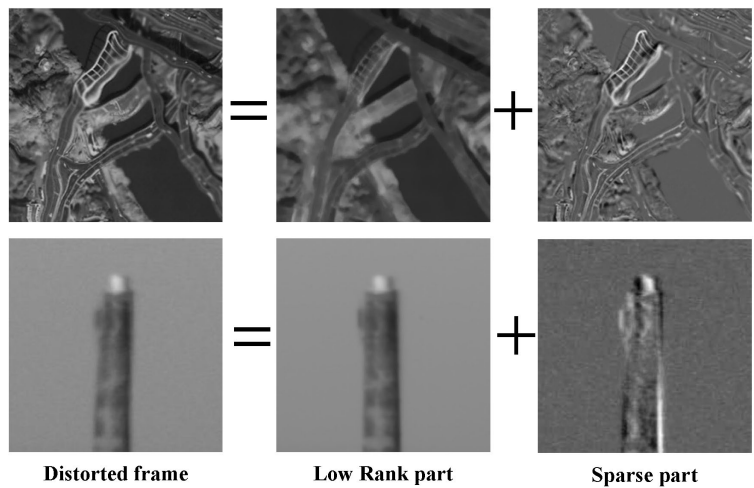}
\caption{\footnotesize{Low-rank decomposition of the distorted frames in city\_strong and chimney sequences}}
\label{fig:low_rank_decomp}
\end{figure}

\section{Optimize the Reference Image Using a Variational Model}\label{Spatial-Temporal-Regularization}
The obtained reference image from RPCA is not completely ready for non-rigid registration because the inner structure of the reference image cannot be easily recovered. In other words, the edges representing the object's profile tend to be well defined, while the edges that characterize the inner structure of the object tend to be intermittent and noisy. Therefore, the reference image needs to be refined. Assuming that the observed image sequence is $\{f_i\}_{i=1}^{N}$ and the reference image that we want to optimize is $u$, then directly reconstructing $u$ from Eq. (\ref{deform_model}) is an ill-posed problem. There is a need to introduce a regularization technique. If we denote the regularization term of the image as $J(u)$, then we can get an unconstrained optimization problem
\begin{align}\label{eqn:res_mixed_model}
    &\min_{u} E(u) +  J(u) \\ \label{concept_opt}
    &= \min_{u} \underbrace{\sum \nolimits_{i} \| \Phi_{i} u - f_{i} \|_{2}^{2}}_{E(u)} + \underbrace{\Big(\mu_1 J_{s}(u) + \mu_2 J_{t}(u)\Big)}_{J(u)} \\
    &= \min_{u} \underbrace{\sum \nolimits_{i} \| \Phi_{i} u - f_{i} \|_{2}^{2}}_{E(u)} + \underbrace{\Big(\mu_1 \overbrace{|u|_{NLTV}}^{J_{s}(u)} + \mu_2 \overbrace{|(u - u_{p})|_{TV}}^{J_{t}(u)} \Big)}_{J(u)} \label{concept_opt1}
\end{align}
where $E(u)$ measures the fidelity of the observed data, and $J(u)$ includes the spatial regularizer $J_s(u)$ and the temporal regularizer $J_t(u)$ to constraint the solution of $u$. $J_s(u)$ is the non-local total variation which can explore repetitive structures to preserve important details of an image while effectively removing artifact. $J_t(u)$ is the standard total variation of the difference between two consecutively optimized results ($u_p$ denotes the result obtained by the previous step), which does not only restrict the smoothness between the iterations but also forces the local energy of the optimized result to converge to a local structure ({\it e.g.} the edge of the object). The parameters $\mu_1$ and $\mu_2$ are chosen as a trade-off between the two regularizers. Because $J(u)$ applies the sparsifying transform in both the spatial and temporal domains, we refer to it as the \textbf{spatial-temporal regularization}. In the following subsections, we develop a fast algorithm to solve the optimization problem (\ref{concept_opt1}).

\subsection{Optimization by Bregman Iteration}
Bregman iteration is a concept that originated from functional analysis and is commonly used to find the extrema of convex functions \cite{Bregman}. It was introduced in \cite{Bregman_Iter} for total variation-based image processing, before being extended to other applications such as wavelet-based denoising \cite{wavelet} and magnetic resonance imaging \cite{MR}. In this subsection, we briefly describe how to restore the optimization problem (\ref{concept_opt1}) using this technique.

The Bregman distance \cite{Bregman} associated with a convex function $J$ between points $u$ and $v$ is
\begin{equation}\label{Bregman_dis}
    D_{J}^{p}(u,v) = J(u) - J(v) - \langle p, u-v \rangle \geq 0, \text{where } p\in \partial J(v)
\end{equation}
where $p$ is the subgradient of $J$ at $v$. Because $D_{J}^{p}(u,v)\neq D_{J}^{p}(v,u)$, $D_{J}^{p}(u,v)$ is not a distance in the usual sense. However, it measures the closeness between $u$ and $v$ in the sense that $D_{J}^{p}(u,v)>0$, and $D_{J}^{p}(u,v)\geq D_{J}^{p}(w,v)$ for any point $w$ being a convex combination of $u$ and $v$. According to \cite{Bregman_Iter}, and using the Bregman distance, the optimization problem (\ref{concept_opt1}) can be solved by the following iteration
\begin{subequations}
    \begin{numcases}{}
        u^{k} =\argmin_{u} \mu J(u) + \sum_{i=1}^{N} \|\Phi_i u - f_{i}^{k}\|^{2}  \label{split_bregman_subproblem1}\\
        f_{i}^{k+1} = f_{i}^{k} + f_{i} - \Phi_{i} u^{k} \label{split_bregman_subproblem2}
    \end{numcases}
\end{subequations}
where $f_{i}^{0} = f_i$. It has been proved in \cite{Bregman_Iter} that the above iteration converges to the solution of (\ref{concept_opt1}). Moreover, as shown in \cite{XZhang1,XZhang2}, Bregman iteration is actually equivalent to alternatively decreasing the primal variable and increasing the dual variable of the Lagrangian of the problem (\ref{concept_opt1}).

In subproblem (\ref{split_bregman_subproblem1}), the data-fidelity term $E(u)$ co-occurs with the regularization term $J(u)$, which leads to a sophisticated solution. However, this subproblem can be solved efficiently using the forward-backward operator splitting method \cite{Lions,PL,GB}:
\begin{subequations}
    \begin{numcases}{}
        v^{k+1} = u^{k} - \delta \sum_{i=1}^{N} \Phi_{i}^{T} (\Phi_{i} u^{k} - f_{i}^{k}) \label{FB_step1} \\
        u^{k+1} = \argmin_{u} \bigg( \mu J(u) + \frac{1}{2\delta} \| u - v^{k+1} \|^2 \bigg) \label{FB_step2}
    \end{numcases}
\end{subequations}
where $\Phi_{i}^{T}$ denotes the adjoint operator of $\Phi_{i}$. The parameter $\delta$ is set to $1$ in experiments. The advantage of the above method is the separation of $\Phi_{i}$ and $J(u)$. The first step is called the forward step, which is actually the gradient descent of the data-fidelity term $E(u)$, and the second step is called the backward step, which can be solved efficiently for many choices of regularizer $J(u)$. For example, if we choose the total variation as the regularizer, then the subproblem is actually a standard Rudin-Osher-Fatemi (ROF) model \cite{rof}, which can be efficiently solved via the graph-cut \cite{graph_cut} or the split Bregman method \cite{split_bregman}. In our method, $J(u)$ is a mixed regularizer that contains non-local total variation (TV) and TV, so we refer to the subproblem (\ref{FB_step2}) as the mixed-ROF model.

\subsection{Finding an Efficient Solution for the Mixed-ROF Model}\label{fast_bregman}
No matter which ROF-like model we choose, current approaches always involve solving PDEs in each iteration, including the split Bregman method \cite{XZhang1,split_bregman}. Here, we extend the method proposed in \cite{fast_bregman}, which only handles the standard ROF model, to the general mixed ROF case, and provide a very simple but highly efficient and effective optimization algorithm without using PDEs. By replacing $J(u)$ with $J_s(u)$ and $J_t(u)$, the subproblem (\ref{FB_step2}) can be expanded as follows:
\begin{equation}\label{split_bregman_rof}
    \min_{u} \mu_{1} |\nabla_{w} u|_{1} + \mu_{2} \Big(|\nabla_{x} (u-u_{p})|_{1} + |\nabla_{y} (u-u_{p})|_{1}\Big) + \frac{1}{2} \|u - v\|_{2}^{2}
\end{equation}
The following new iteration schema can find the unique solution $u^{\ast}$ efficiently for the minimization problem (\ref{split_bregman_rof}).

Let $b_{w}^{0} = 0, b_{x}^{0} = 0, b_{y}^{0} = 0$ and $u^{1} = v$, for $k=1,2,\ldots$, the iteration is as follows:
\begin{align}\label{}
   &b_{w}^{k} = cut(\nabla_{w} u^{k} + b_{w}^{k-1}, \frac{\mu_1}{\lambda_1})\label{new_step1_in}\\
   &b_{x}^{k} = cut(\nabla_{x} u^{k} - \nabla_{x} u_p + b_{x}^{k-1}, \frac{\mu_2}{\lambda_2})\label{new_step2_in}\\
   &b_{y}^{k} = cut(\nabla_{y} u^{k} - \nabla_{y} u_p + b_{y}^{k-1}, \frac{\mu_2}{\lambda_2})\label{new_step3_in}\\
   &u^{k+1} = v + \frac{\lambda_1}{\mu_1} div_{w} b_{w}^{k} - \frac{\lambda_2}{\mu_2} (\nabla_{x}^{T} b_{x}^{k} + \nabla_{y}^{T} b_{y}^{k}) \label{new_step4_in}
\end{align}
where $shrink(x, \gamma) = x - \frac{x}{|x|}\ast \max(|x| - \gamma, 0)$.

In the above equations, $\nabla_{x}$ and $\nabla_{y}$ are the difference operators along the $x$ and $y$ directions, respectively, $\nabla_{w}$ denotes the non-local gradient operator, and $\text{div}_{w}$ is the non-local graph divergence operator (see the detailed mathematical definitions of all these operators in the Appendix). We prove the convergence of the proposed fast algorithm in the following theorem:
\begin{theorem}
\label{main_theorem}
For $k=1,2,\ldots$, let $b_{w}^{k}, b_{x}^{k},b_{y}^{k}$ and $u^{k+1}$ be given by the iteration (\ref{new_step1_in}) to (\ref{new_step4_in}). If $0< 20\lambda_1 + 4\lambda_2 <1$, then $\lim_{k\rightarrow \infty} u^{k} = u^{\ast}$.
\end{theorem}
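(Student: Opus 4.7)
The plan is to interpret the iteration (\ref{new_step1_in})--(\ref{new_step4_in}) as a primal-dual / operator-splitting scheme for the saddle-point reformulation of (\ref{split_bregman_rof}), and to establish convergence through a Lyapunov (energy) argument that exploits the nonexpansive nature of the clipping operator $\mathrm{cut}(\cdot,\gamma)$, which is simply the Euclidean projection onto the ball of radius $\gamma$.

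First I would rewrite (\ref{split_bregman_rof}) in its min-max form by dualizing each $\ell_1$ term via $|z|_1=\sup_{|p|_\infty\le 1}\langle p,z\rangle$. After rescaling the resulting three dual variables by $\mu_1/\lambda_1$ and $\mu_2/\lambda_2$ respectively, the KKT conditions of the mixed-ROF model would read
$$u^{\ast} - v - \tfrac{\lambda_1}{\mu_1}\mathrm{div}_w b_w^{\ast} + \tfrac{\lambda_2}{\mu_2}\bigl(\nabla_x^T b_x^{\ast} + \nabla_y^T b_y^{\ast}\bigr)=0,$$
$$b_w^{\ast}=\mathrm{cut}\bigl(\nabla_w u^{\ast}+b_w^{\ast},\tfrac{\mu_1}{\lambda_1}\bigr),$$
and analogous identities for $b_x^{\ast},b_y^{\ast}$ with $\nabla_x(u^{\ast}-u_p)$ and $\nabla_y(u^{\ast}-u_p)$ inside the cut. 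This shows that any joint fixed point of (\ref{new_step1_in})--(\ref{new_step4_in}) is a primal-dual optimizer of (\ref{split_bregman_rof}), so it suffices to prove iterate convergence to a fixed point.

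Next I would introduce the error tuple $e_u^k=u^k-u^{\ast}$, $e_w^k=b_w^k-b_w^{\ast}$, $e_x^k=b_x^k-b_x^{\ast}$, $e_y^k=b_y^k-b_y^{\ast}$. Because $\mathrm{cut}(\cdot,\gamma)$ is $1$-Lipschitz, subtracting the fixed-point identities from (\ref{new_step1_in})--(\ref{new_step3_in}) gives
$$\|e_w^k\|^2 \le \|e_w^{k-1}+\nabla_w e_u^k\|^2, \qquad \|e_\alpha^k\|^2 \le \|e_\alpha^{k-1}+\nabla_\alpha e_u^k\|^2 \quad (\alpha\in\{x,y\}).$$
I would then substitute the $u$-update into $e_u^{k+1}$, expand the inner products, and introduce the composite energy
$$\mathcal{L}^k = \|e_u^k\|^2 + \tfrac{\lambda_1}{\mu_1}\|e_w^{k-1}\|^2 + \tfrac{\lambda_2}{\mu_2}\bigl(\|e_x^{k-1}\|^2+\|e_y^{k-1}\|^2\bigr),$$
with the prefactors chosen so that the cross-terms between $e_u^k$ and the previous dual errors cancel after differencing. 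The residue $\mathcal{L}^{k+1}-\mathcal{L}^k$ reduces to a quadratic form in $e_u^k$ which, after invoking the standard finite-difference bounds $\|\nabla_x\|^2,\|\nabla_y\|^2\le 4$ and a corresponding bound $\|\nabla_w\|^2\le 20$ on the non-local graph gradient defined in the Appendix, is majorized by $-(1-20\lambda_1-4\lambda_2)\|e_u^k\|^2$. Summing these non-positive increments over $k$ forces $\|e_u^k\|\to 0$, i.e., $u^k\to u^{\ast}$.

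The principal obstacle I anticipate lies in the non-local gradient $\nabla_w$: unlike the local operators $\nabla_x,\nabla_y$, its operator norm depends on the weight normalization of the graph, and the constant $20$ in the admissibility condition must be derived from the precise definitions of $\nabla_w$ and $\mathrm{div}_w$ given in the Appendix, together with the adjoint relation $\nabla_w^{T}=-\mathrm{div}_w$ and a uniform bound on the valence (number of nonzero weights per vertex) through a Gershgorin/column-sum argument. A secondary difficulty is the algebraic choice of the Lyapunov weights so that all mixed terms vanish exactly; the specific combination $20\lambda_1+4\lambda_2$ appearing in the hypothesis is the signature of that cancellation and ultimately dictates the admissible step sizes.
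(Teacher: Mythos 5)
Your strategy is sound but it is genuinely different from the paper's. The paper follows the Jia--Zhao template: it first establishes (in four lemmas proved in the supplement) positive definiteness of $I+2\lambda_1\Delta_w+\lambda_2\Delta$ under $0<20\lambda_1+4\lambda_2<1$, boundedness of all iterates, asymptotic regularity $u^{k+1}-u^k\to 0$, and $d^k-\nabla u^k\to 0$; it then extracts a convergent subsequence, passes to the limit in a sum of subgradient inequalities for $F(u)=\tfrac12\|u-v\|^2$ and $G(d)=\|d\|_1$ to show every cluster point satisfies the variational inequality characterizing the minimizer of (\ref{split_bregman_rof}), and concludes by uniqueness (strong convexity) plus boundedness. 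You instead run a Fej\'er/Lyapunov argument against an explicitly constructed primal--dual fixed point: this buys a quantitative statement ($\sum_k\|u^k-u^\ast\|^2<\infty$) and avoids the compactness-plus-cluster-point machinery, at the cost of having to exhibit the dual optimum $b^\ast$ as a fixed point of the $\mathrm{cut}$ updates (routine in finite dimensions, but it must be said) and to bound $\|\nabla_w\|^2$ from the appendix definitions, exactly as you anticipate. Two bookkeeping points if you carry this out. First, your Lyapunov function double-counts: since (\ref{new_step4_in}) makes $e_u^{k+1}$ an explicit linear function of the dual errors, $e_u^{k+1}=\frac{\lambda_1}{\mu_1}\mathrm{div}_w e_w^{k}-\frac{\lambda_2}{\mu_2}(\nabla_x^Te_x^{k}+\nabla_y^Te_y^{k})$, the clean monotone quantity is the weighted dual error alone, $\mathcal{E}^k=\frac{\lambda_1}{\mu_1}\|e_w^{k}\|^2+\frac{\lambda_2}{\mu_2}(\|e_x^{k}\|^2+\|e_y^{k}\|^2)$; nonexpansiveness of $\mathrm{cut}$ plus the adjoint identity $\langle\mathrm{div}_w p,q\rangle=-\langle p,\nabla_w q\rangle$ give $\mathcal{E}^k\le\mathcal{E}^{k-1}-2\|e_u^k\|^2+\langle e_u^k,B^T\Lambda B\,e_u^k\rangle$ with the cross-term collapsing to $-2\|e_u^k\|^2$ exactly as you intend, and no separate primal term is needed. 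Second, the decrement you obtain is governed by $2I-B^T\Lambda B$ rather than $I-B^T\Lambda B$, so the factor of two and the weights $\mu_1,\mu_2$ must be reconciled with the paper's normalization of $\Delta_w$ and $\Delta$ before the constant $20\lambda_1+4\lambda_2$ drops out; this is a definitional matter, not a conceptual gap.
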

\begin{proof}
    See the proof in Section \ref{proof-B}.
\end{proof}

\section{Distortion-Driven Image Fusion}\label{image_recon}

\begin{figure*}[htbp]
\setlength{\abovecaptionskip}{0pt}  
\setlength{\belowcaptionskip}{0pt} 
\renewcommand{\figurename}{\footnotesize{Figure}}
\centering
\includegraphics[width=0.8\textwidth]{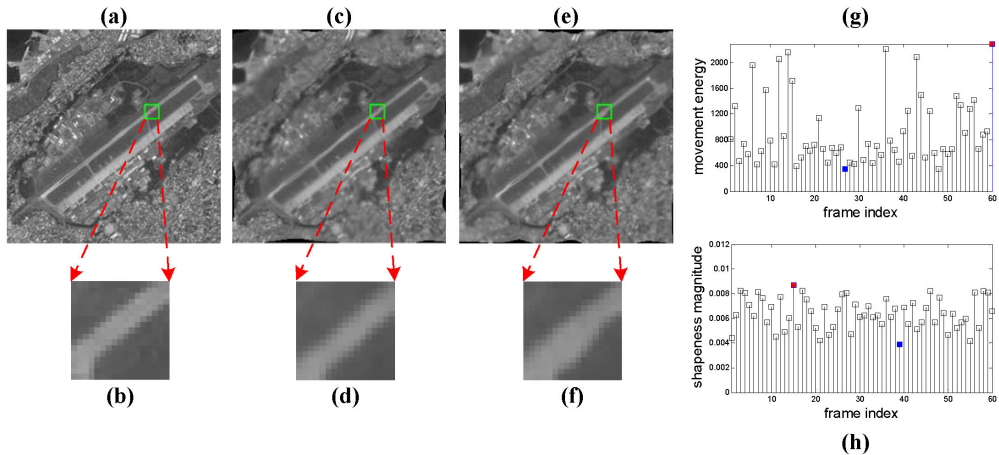}
\caption{\footnotesize{Near-stationary patch vs. near-diffraction-limited patch in the airport sequence. (a) Ground truth; (b) Zoomed ground truth patch from (a); (c) Frame contains near-stationary patch; (d) Zoomed near-stationary patch from (c); (e) Frame contains near-diffraction-limited patch; (f) Zoomed near-diffraction-limited patch from (e); (g) Movement energy of the local patch over 60 frames; (h) Intensity variance of local patch over 60 frames.}}
\label{fig:stationary-vs-diffraction}
\end{figure*}


\begin{algorithm}
 \DontPrintSemicolon
    Given a registered sequence $\{R_k\}$ and the push-forward motion field sequence $\{ \overleftarrow{F_{k}}\}$ by inversing $\{ \overrightarrow{F_{k}}\}$, divide each frame into $L\times L$ overlapping patches centered at each pixel, and calculate the intensity variance and the movement energy of each patch as a local stationary measure.\\

    For a patch sequence $\{r_k\}$ centered at location $\mathbf{x}$, detect the most stationary one $r_{k^{*}}$ by local variance and movement energy.\\

    Set $r_{k^{*}}$ as a reference patch, and restore its center pixel value using spatial-temporal kernel regression. Assign this value to the pixel $Z[\mathbf{x}]$.\\

    Go to the next pixel and return to step 2.
 \caption{Procedure for reconstructing a stationary image from registered frames}\label{algorithm:sir}
\end{algorithm}

In this section, by introducing the concept of the \textbf{near-stationary patch}, we fuse the registered sequence $\{R_k\}$ into a single image $Z$, which can be deblurred using a space-invariant deconvolution method. The fusion steps are summarized in Algorithm \ref{algorithm:sir}. Different from our approach, \cite{xzhu2} builds the image $Z$ by fusing the diffraction-limited patches, which can be detected using local sharpness measures. However, due to the registration error, some detected diffraction-limited patches still contain local distortion. Examples are shown in Fig. \ref{fig:stationary-vs-diffraction} where the patches (d) and (f) are obtained by near-stationary detection and near-diffraction-limited detection, respectively. Compared with (f), the near-stationary patch (d) shows less local deformation and more closely resembles the latent true patch (b). Therefore, employing a local sharpness measure alone is insufficient to recover the image details. In the following subsections, we will first describe how to detect near-stationary patches, and then detail the proposed distortion-driven spatial-temporal kernel regression used for reducing the noise caused by sensor and registration errors during the fusion process.

\subsection{Near-Stationary Patch Detection}\label{nspd}
In turbulence free or near turbulence free conditions, the light reflected from the scene through the atmosphere is perpendicular to the camera. Therefore, any scene inside local regions should be clearly observed by the camera without distortion or blur. Such local regions are referred to as stationary patches. Suppose the $k^{\ast}$th patch is found to be near-stationary: we propose to detect this near-stationary patch by taking both local sharpness and local movement energy into account. According to \cite{xzhu2}, the local sharpness of a $L\times L$ patch is determined by its variance in intensity, which is
\begin{equation}\label{local_sharpness}
s_{\mathbf{x}}^{k} = \frac{1}{L^{2}-1}\sum_{\mathbf{x}} (r_k[\mathbf{x}] - \bar{r}_k)^{2}
\end{equation}
where $\bar{r}_k$ represents the mean value of patch $r_k$. For further details on the $s_{\mathbf{x}}^{k}$ calculation, please refer to \cite{xzhu2}.

The local movement energy of an $L\times L$ patch $r_{k}$ can be calculated as:
\begin{equation}\label{movement_energy}
e_{\mathbf{x}}^k = \sum_{\mathbf{x}\in \Omega(r_k)}\mid \overleftarrow{f_{k}}[\mathbf{x}] \mid^{2}
\end{equation}
where $\overleftarrow{f_{k}}[\mathbf{x}]$ denotes the deformation vector at position $\mathbf{x}$ from the $k$-th deformation field in sequence $\{\overleftarrow{F_i}\}$ (the deformation field $\overrightarrow{F_i}$, which warps the distorted frame $f_i$ to the reference image, and its inverse field $\overleftarrow{F_i}$ are calculated from B-spline registration), and $\Omega(r_k)$ is the support region of the local patch $r_k$. Consequently, given the stationary measurement, we select the energy patch $r_{k^{\ast}}$ with the lowest movement as the near-stationary patch from the ten sharpest ones in the patch sequence $\{r_k\}$. Then, $r_{k^{\ast}}$ is used as a reference patch to restore its center pixel value; this is described in the next subsection.

\subsection{Fusion by Spatial-Temporal Kernel Regression}
This subsection describes how one single image $Z$ is generated by fusing the detected near-stationary patches. To avoid possible artifacts that may appear during subsequent deblurring, noise in the selected near-stationary patches needs to be suppressed. Zhu and Milanfar \cite{xzhu2} employed zero-order kernel regression to estimate the value of a pixel at $\mathbf{x}$ in its reference frame $\delta$ by
\begin{equation}\label{eqn:kernel_regression}
    \widehat{q}_\delta[\mathbf{x}] = \frac{\sum_{k} U(\mathbf{x};k,\delta)r_k[\mathbf{x}]}{\sum_{k} U(\mathbf{x};k,\delta)}
\end{equation}
where $U(\cdot)$ denotes a patch-wise photometric distance which can be calculated by a Gaussian kernel function
\begin{equation}\label{}
    U(\mathbf{x};k,\delta) = \exp\bigg(\frac{2\sigma_n^2}{\mu^2} - \frac{\parallel r_k - r_\delta \parallel^2}{L^2 \mu^2}\bigg),
\end{equation}
where $\sigma_{n}^2$ is the noise variance (we set $\sigma_{n}^2 = 2$ in our experiments), $L^2$ denotes the total number of pixels in the patch, and the scalar $\mu$ is the smoothing parameter \cite{kernelregression1}.

However, this denoising method only takes temporal information into account and ignores the registration error in the image domain. From Fig. \ref{fig:stationary-vs-diffraction} we can see that the more the movement energy of a patch, the less likely it is to have its deformation thoroughly removed. Therefore, $r_k[\mathbf{x}]$ in Eq. (\ref{eqn:kernel_regression}) may not represent the true value of the pixel $\mathbf{x}$. Hence, to reconstruct a single image $Z$, a spatial-temporal kernel regression is used to restore each pixel. In the spatial domain (image domain), distortion-driven asymmetric steering kernel regression is used to remove the deviation of each central pixel of the patch of $r_k$. In the temporal domain, patch-wise temporal regression is used to correct the central pixel and further reduce the noise level \cite{xzhu2}. The details of the spatial-temporal kernel regression are described in the following two subsections.

\subsubsection{Point-Wise Spatial Asymmetric Steering Kernel Regression}
\begin{figure*}[htbp]
\setlength{\abovecaptionskip}{0pt}  
\setlength{\belowcaptionskip}{0pt} 
\renewcommand{\figurename}{\footnotesize{Figure}}
\centering
\includegraphics[width=0.6\textwidth]{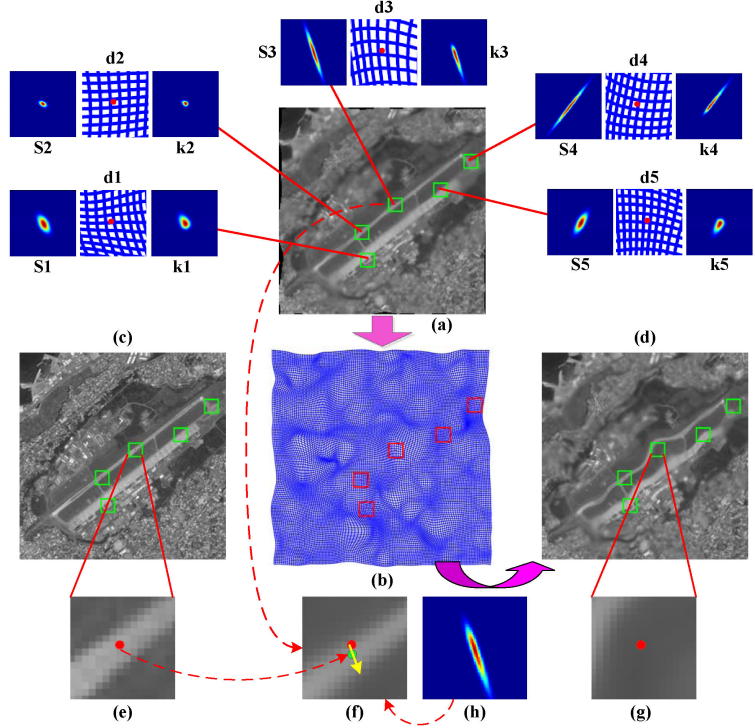}
\caption{\footnotesize{Illustration of distortion-driven asymmetric steering kernel.}}
\label{fig:show_adskr_kernel}
\end{figure*}

Directly finding a true pixel value is a nontrivial problem. We resort to calculating $\widetilde{r}_k[\mathbf{x}]$, which denotes the corrected value of a pixel at position $\mathbf{x}$ in the registered frame $R_k$ to replace $r_k[\mathbf{x}]$ in Eq. (\ref{eqn:kernel_regression}). Then, $\widetilde{r}_k[\mathbf{x}]$ can be estimated using the zero-order kernel regression (Nadaraya-Watson estimator \cite{zero-regression})
\begin{equation}\label{regression_model}
    \widetilde{r}_k[\mathbf{x}] = \frac{\sum\limits_{x_j\in r_k} K_{dskr}(\mathbf{x}_j - \mathbf{x}) r_{k}[\mathbf{x}]}{\sum\limits_{x_j\in r_k} K_{dskr}(\mathbf{x}_j - \mathbf{x})}
\end{equation}
where $K_{dskr}(\cdot)$ represents a distortion-driven kernel function that depends on the local deformation field. Such a kernel can indicate a pixel's direction of deviation; in other words, the weights should be larger in the deviated direction (see Fig. \ref{fig:show_adskr_kernel} $s1$ to $s5$). Inspired by the steering kernel \cite{kernelregression2}, we propose a two-step approach to construct a distortion-driven kernel function.

First, measure the dominant orientation of the local deformation field and build the symmetric kernel function. The orientation is the singular vector corresponding to the largest (nonzero) singular value of the local deformation matrix
\begin{equation}\label{formular:dominant_direction}
M_k = \begin{bmatrix}
\vdots & \vdots\\
\overleftarrow{f_{x}}(\mathbf{x}_{j}) & \overleftarrow{f_{y}}(\mathbf{x}_{j})\\
\vdots & \vdots
\end{bmatrix}
= U_{k}S_{k}V_{k}^{T},~~~\mathbf{x}_{j}\in \Omega(r_{k}),
\end{equation}
where $\overleftarrow{f_{x}}$ and $\overleftarrow{f_{y}}$ are the displacement vectors along the $x$ and $y$ direction, and $\Omega(r_{k})$ is the local support region of $r_k$. $U_{k}S_{k}V_{k}^{T}$ is the truncated singular value decomposition of $M_{k}$, and $S_{k}$ is a diagonal $2\times 2$ matrix representing the energy in the dominant direction. Then, in the first column of the orthogonal matrix $V_k$, $v_{1}=[\nu_{1},\nu_{2}]^{T}$ defines the dominant orientation angle $\theta_{k}= \arctan(\frac{\nu_{2}}{\nu_{1}})$. This means that the singular vector corresponding to the largest singular value of $M_k$ represents the dominant orientation of the local deformation field. The elongation parameter $\sigma_i$ and scaling parameter $\gamma_i$ are calculated by
\begin{equation}\label{formular:elongation1}
    \sigma_i = \frac{s_1 + \lambda^{'}}{s_2 + \lambda^{'}}, \quad \lambda^{'}\geq 0, \quad
    \gamma_i = \bigg(\frac{s_1 s_2 + \lambda^{''}}{M}\bigg)^{\frac{1}{2}},
\end{equation}
where $s_1$ and $s_2$ are the diagonal elements of the matrix $S_{k}$, and $\lambda^{'}$ and $\lambda^{''}$ denote the regularization parameters. Given the parameters mentioned above, we can calculate the steering kernel function as in \cite{kernelregression2} (hereafter, the symmetric steering kernel is referred to as $K^{s}_{dskr}(\cdot)$). Fig. \ref{fig:show_adskr_kernel} is a visual illustration of the steering kernel footprints for different local deformation regions. Image (a) is one registered frame from $\{R_i\}$, and the patches $d1$ to $d5$ (cropped from deformation field (b)) are the local deformation fields of the pixels marked by red dots in image (a). As shown in Fig. \ref{fig:show_adskr_kernel}, the size and shape of each symmetric steering kernel ($s1$ to $s5$) are locally adapted to the corresponding deformation field. For example, $s2$ is blunt due to the small movement energy of its corresponding deformation field, while $s3$ is sharp because the movement energy of $d3$ is much larger than that of $d2$.

Second, construct the asymmetric steering kernel using the dominant orientation. State-of-the-art non-rigid registration approaches are usually unable to correct a local deformation when the movement energy is very large. Nevertheless, due to the smoothness of the B-spline-based registration, the true pixel value $\widetilde{r}_k[\mathbf{x}]$ often appears in the opposite direction of the dominant orientation. As shown in Fig. \ref{fig:show_adskr_kernel}, patches (g) and (f) are cropped from one observed frame (d) and its corresponding registered frame (a), respectively, and patch (e) is cropped from the ground truth image (c) (the central pixels of (e), (f), and (g) are marked by red dots). Comparing patch (e) with patch (f), the two central pixels represent different positions due to the registration error. In fact, the central pixel of patch (e) corresponds to the pixel marked in green in patch (f). Consequently, to estimate $\widetilde{r}_k[\mathbf{x}]$ more precisely, we need to further shrink the footprint of the local kernel; the weights along the inverse dominant orientation should be larger than those along the dominant orientation. Hence, we construct a local asymmetric kernel function $K^{a}_{dskr}(\cdot)$ by utilizing the {\it Asymmetric Gaussian} proposed in \cite{asymmetric-kernel}, where the dimension needed to be asymmetric can be calculated by
\begin{equation}\label{nbp}
\begin{aligned}
\textit{A}(z_i; \mu^{z}_i, \rho^{2}_i, r_i) = \frac{2}{\sqrt{2\pi}} \frac{1}{\sqrt{\rho^{2}_i}(r_i + 1)}
\left\{
   \begin{aligned}
   &exp\Big( -\frac{(z_i - \mu^{z}_i)^{2}}{2\rho^{2}_i} \Big) \qquad \text{if} \quad z_i >  \mu^{z}_i\\
   &exp\Big( -\frac{(z_i - \mu^{z}_i)^{2}}{2r^{2}_i \rho^{2}_i} \Big) \qquad \text{otherwise} \\
   \end{aligned}
\right.
\end{aligned}
\end{equation}
where $z_i$ is along the dominant orientation of the symmetric kernel $K^{s}_{dskr}(\cdot)$, $\mu^{z}_i$ and $\rho_i^{2}$ are the mean and variance corresponding to $z_i$, and $r_i$ denotes the asymmetric coefficient with $r_i = 1$ being equivalent to the ordinary Gaussian. In our implementation, $r_i$ is usually set to $0.5\sqrt{\sigma_i}$, where $\sigma_i$ is the elongation coefficient defined in Eq. (\ref{formular:elongation1}). An example of an asymmetric kernel is shown in Fig. \ref{fig:show_adskr_kernel} $k1$ to $k5$. Due to the computational complexity, we only use spatial kernel regression for the pixels whose movement energy is greater than a pre-defined threshold. For the remaining pixels with relatively low movement energy, we directly assign $r_k[\mathbf{x}]$ to $\widetilde{r}_k[\mathbf{x}]$.

\subsubsection{Patch-Wise Temporal Kernel Regression}
Once the corrected pixel value $\widetilde{r_k}[\mathbf{x}]$ in each frame has been calculated, we can further estimate the pixel value at position $\mathbf{x}$ by means of spatial-temporal kernel regression for a single image $Z$
\begin{equation}\label{spatial-temporal-regression}
\begin{aligned}
    \widehat{q}_\delta[\mathbf{x}] = \frac{\sum_{k} U(\mathbf{x};k,\delta)\widetilde{r_k}[\mathbf{x}]}{\sum_{k} U(\mathbf{x};k,\delta)}
    = \frac{\sum_{k} U(\mathbf{x};k,\delta)\frac{\sum\limits_{x_j\in r_k} K^{a}_{dskr}(\mathbf{x}_j - \mathbf{x}) r_{k}[\mathbf{x}]}{\sum\limits_{x_j\in r_k} K^{a}_{dskr}(\mathbf{x}_j - \mathbf{x})}}{\sum_{k} U(\mathbf{x};k,\delta)}.
\end{aligned}
\end{equation}
Spatial-temporal kernel regression is illustrated in Fig. \ref{fig:spatial_temporal_regression}: the yellow box denotes the spatial regression that exploits the local deformation structure to restore a certain pixel only in the image domain, and the red line represents the temporal regression, which fuses the information across all the corrected pixels $\{\widetilde{r_k}[\mathbf{x}]\}_{k=1}^{N}$ in the temporal domain. Due to utilization of both spatial and temporal information, we refer to such regression as the \textbf{Spatial-Temporal Kernel Regression}.

\begin{figure}[htbp]
\setlength{\abovecaptionskip}{0pt}  
\setlength{\belowcaptionskip}{0pt} 
\renewcommand{\figurename}{Figure}
\centering
\includegraphics[width=0.5\textwidth]{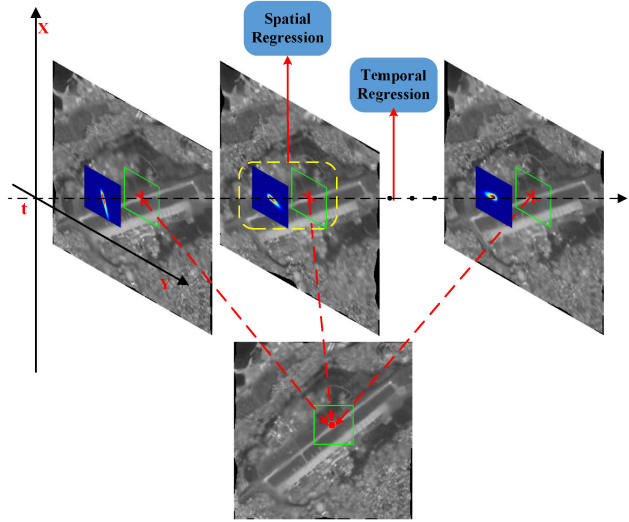}
\caption{Illustration of distortion-driven spatial-temporal kernel regression.}
\label{fig:spatial_temporal_regression}
\end{figure}

\section{Space-invariant Deconvolution}\label{final-deconvolution}		
A post-process is needed to correct the diffraction-limited blur that still exists in $Z$. We exploit the blind deconvolution algorithm \cite{deblur-1} to calculate a final output (corresponding to step $18$ in Algorithm \ref{propose_alg}). In the degradation model
\begin{equation}\label{blur}
    Z = L\otimes h + \varepsilon
\end{equation}
the deblurring procedure can be described using the following optimization problem
\begin{equation}\label{deblur-optimization}
    \argmin_{F,h}\|Z - L\otimes h\|_{2}^{2} + \gamma_1 R_{f}(L) + \gamma_2 R_{h}(h)
\end{equation}
where $\varepsilon$ denotes the error caused by the fusion process in Section \ref{image_recon}, and $R_f$ and $R_h$ are the regularizers used to restrict the latent sharp image $L$ and the blur kernel $h$ based on their own prior knowledge. As suggested in \cite{deblur-1}, the regularization term for $L$ is defined as
\begin{equation}\label{L-term}
    R_f(L) = \|\rho(\partial_x L) + \rho(\partial_y L)\|_1
\end{equation}
where $\partial_x L$ and $\partial_y L$ represent the derivatives of $L$ in horizontal and vertical directions, respectively. The $\rho(\cdot)$ is defined as
\begin{equation}\label{h-term}
\rho(x)
\left\{
   \begin{aligned}
   &-\theta_1|x| \qquad &x\leq l_t\\
   &-(\theta_2 x^2 + \theta_3) \qquad &x>l_t \\
   \end{aligned}
\right.
\end{equation}
where $l_t$, $\theta_1$, $\theta_2$, and $\theta_3$ are all fixed parameters. The sparse constraint is also imposed on the PSF $h$: $R_h(h) = \|h\|_1$. The detail of the algorithm used for solving the optimization problem (\ref{deblur-optimization}) can be found in \cite{deblur-1}. In our experiments, two types of parameter setting are used for the simulated data and the real data, respectively. In simulated experiments, ``kernelWidth'', ``kernelHeight'', ``noiseStr'', and ``deblurStrength'' are set to $5$, $5$, $0.03$, and $0.2$, respectively; all other parameters use the default settings, and the description of all the parameters can be found in \cite{deblur-1}. For real sequences, setting the same parameters to $(7, 7, 0.03, 0.5)$ can reproduce the displayed results. Finally, we summarize the proposed atmospheric turbulence removal algorithm in Algorithm \ref{propose_alg}.

\SetAlFnt{\footnotesize}{
\begin{algorithm}[]\label{propose_alg}
\SetAlgoLined
\caption{Atmospheric Turbulence Removal Algorithm}
\KwIn{Distorted image set $V\in \mathbb{R}^{w\times h \times n} = \{f_1,\ldots, f_n\}$ \\}
\KwOut{Restored image $L$\\}
\BlankLine
$u \leftarrow \text{Low\_Rank\_Decomposition}(V)$\;
\tcp*[h]{out\_loop}\\
\While{$iter < iter\_max$ }
{
    \tcp*[h]{middle\_loop}\\
    \While{$\sum_{i} \| \Phi_{i}u - f_{i} \| > \varepsilon_{1}$}
    {
        Initialize the $\tilde{f}_{i} = f_{i}$\;
        Compute each $\Phi_i$ which warps $u$ to the observed image $f_i$ using B-spline registration\;
        \tcp*[h]{inner\_loop}\\
        \While{$\sum_{i} \| \Phi_{i}u - \tilde{f}_{i} \| > \varepsilon_{2}$}
        {
            \BlankLine
            $v = u - \delta \sum_{i} \Phi_{i}^{T} (\Phi_i u - \tilde{f}_{i})$ \;
            Solve $u = \argmin \nolimits_{u} \mu J(u) + \frac{1}{2\delta} \| u - v \|^2 $ by iteration in Section 3.2\;
        }
        $\tilde{f}_{i} \leftarrow \tilde{f}_{i} + f_{i} - \Phi_{i}u$\;
    }
    $\hat{u} = u$\;
    $\{R_i\} \leftarrow$ register each frame $f_i$ to $\hat{u}$\;
    $V \leftarrow \{R_i\}$\;
    $u \leftarrow \text{Low\_Rank\_Decomposition}(V)$\;
}

Construct single frame $Z$ from $\{R_i\}$ using Spatial-Temporal kernel regression in Section 4\;
Deblur the image $Z$ to get the latent sharp image $L$ in Section 5\;
\textbf{Return} $L$\;
\end{algorithm}}

\section{Experimental Results and Analysis}\label{experiment}
This section presents extensive experimental validation of the proposed restoration method. We first show that low-rank decomposition improves the quality of the reference image compared to traditional methods. Next, we compare the results optimized using the proposed variational model to those using a spatial regularizer alone, to illustrate the advantages of our model. Finally, both qualitative and quantitative methods are used to evaluate the performance of the proposed method in comparison with several state-of-the-art methods. For quantitative evaluation, the {\it Peak Signal to Noise Ratio} (PSNR) and {\it Structural Similarity Index} (SSIM) are adopted to objectively evaluate the quality of the restored images.

For all the experiments, the intervals of the control points in the registration are set to $\varepsilon_x = \varepsilon_y = 16$ pixels, and the patch size $L$ of $r_k$ is set to $13$. For non-local total variance, the local patch size is set to $5\times 5$ (the support region for a certain pixel), the search window size is set to $21\times 21$ (the region for searching for similar patches), and the number of best neighbors is set to $10$ (the number of accepted similar pixels in the search window). Moreover, for the number of optimized iterations (Algorithm \ref{propose_alg} in Section \ref{final-deconvolution}), $out\_loop = 1$ is sufficient for all the test sequences and most of degradation cases, and the $middle\_loop$ and $inner\_loop$ are set to $3$ and $10$, respectively. Furthermore, in the variational model, the parameters $\lambda_1$ and $\lambda_2$ should satisfy the condition defined in Lemma \ref{lemma1} in Section \ref{proof-B}, so they are chosen to be equal: $\lambda_1 = \lambda_2 = 0.02$. With respect to the trade-off between spatial and temporal regularization, we choose $\mu_1 = 0.5$ and $\mu_2 = 0.25$. The proposed method is implemented in Matlab with MEX, and all the experiments are performed on a standard Intel Core i7 2.8GHz computer. The code and data of the proposed method are available at \textbf{https://sites.google.com/site/yuanxiehomepage/}.

We compare the proposed method with five representative algorithms: the lucky region method \cite{luckyregion3} (\textbf{Lucky region}), principal components analysis for atmospheric turbulence \cite{atomspheric-pca} (\textbf{PCA}), the data-driven two-stage approach for image restoration \cite{twostage} (\textbf{Twostage}), Bregman iteration and non-local total variance for atmospheric turbulence stabilization \cite{bregman_distort} (\textbf{BNLTV}), and near-diffraction-limited-based image reconstruction for removing turbulence \cite{xzhu2} (\textbf{NDL}). It is worth noting that the Twostage method was originally designed for recovering the true image of an underwater scene from a sequence distorted by water waves. Even though the medium is different, Twostage still achieves reasonable results on images distorted by air turbulence, and therefore the comparison and inclusion of this algorithm is valid. The respective authors provide the Twostage \cite{twostage} and NDL \cite{xzhu2} code, and the parameters remain unchanged. We have implemented the code for the other algorithms, which remains faithful to the original papers.

\subsection{Quality of the Low-rank-based Reference Image}\label{experiment-low-rank}

\begin{figure*}[htbp]
\setlength{\abovecaptionskip}{0pt}  
\setlength{\belowcaptionskip}{0pt} 
\renewcommand{\figurename}{Figure}
\centering
\includegraphics[width=\textwidth]{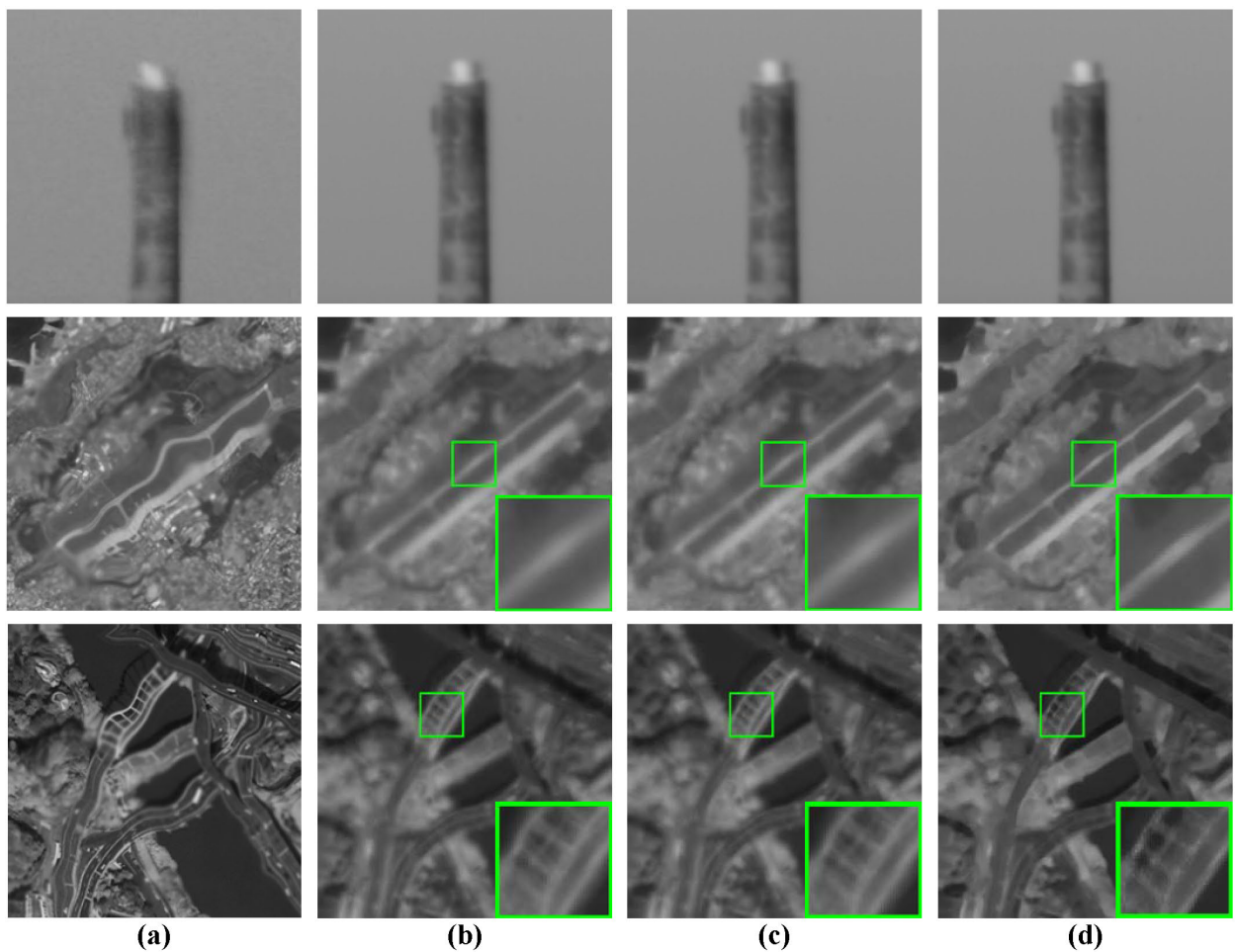}
\caption{Comparison of the reference images constructed by three methods}
\label{fig:low_rank_vs_temporal}
\end{figure*}
In this subsection we compare the visual quality of the reference images generated by our proposed method, temporal averaging, and the lucky region approach \cite{luckyregion3}. The reference images are presented in Fig. \ref{fig:low_rank_vs_temporal}, where the first column (a) contains three observed frames from the 'chimney', 'airport', and 'city\_strong' sequences; the other three columns are the results of temporal averaging (b), lucky-region (c), and low-rank decomposition (d), respectively. In the chimney sequence, the low-rank decomposed reference image is slightly sharper than the other two images. Nevertheless, due to the image being affected by only weak turbulence, the observed difference between the three reference images is small. However, in the case of severe turbulence (airport sequence), the effects are more noticeable. The magnified area of the region of interest (marked by the green box) can be seen in the lower right hand corner of each image. Low-rank decomposition produces sharp edges, while the images produced by temporal averaging and the lucky region methods remain blurred; similar results are seen in the city\_strong sequence. In addition, the other methods produce heavy edge artifacts, {\it e.g.}, in the region of the bridge, with much sharper and clearer edges produced by our method, even in the case of relatively indistinct edges. These indistinct edges can subsequently be enhanced by using the variational model.

\subsection{Advantages of the Spatial-Temporal Regularization}

\begin{figure*}[htbp]
\setlength{\abovecaptionskip}{0pt}  
\setlength{\belowcaptionskip}{0pt} 
\renewcommand{\figurename}{Figure}
\centering
\includegraphics[width=0.9\textwidth]{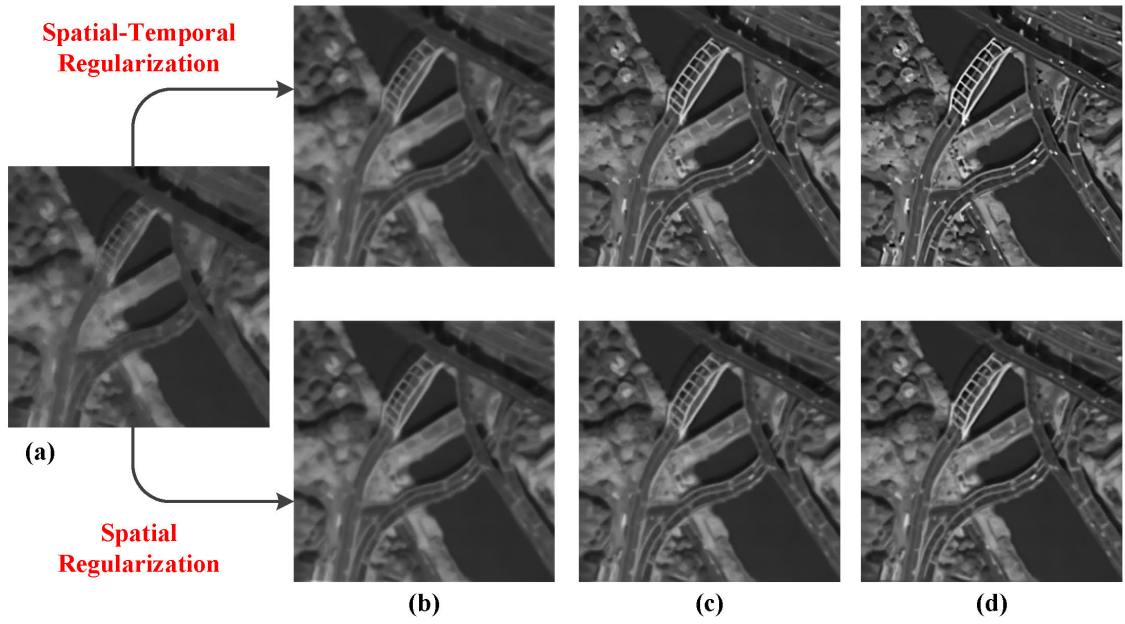}
\caption{Optimization of the two regularizers for the city\_strong sequence.}
\label{fig:city_strong_compare_sp_vs_s}
\end{figure*}
\begin{figure*}[htbp]
\setlength{\abovecaptionskip}{0pt}  
\setlength{\belowcaptionskip}{0pt} 
\renewcommand{\figurename}{Figure}
\centering
\includegraphics[width=0.9\textwidth]{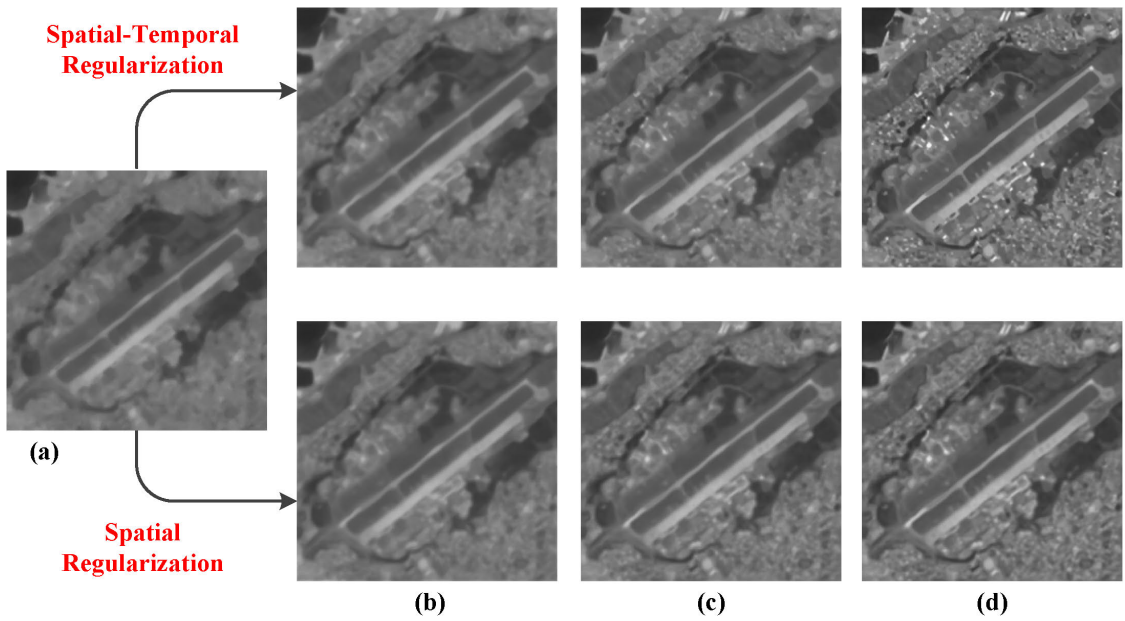}
\caption{Optimization of the two regularizers for the airport sequence.}
\label{fig:airport_compare_sp_vs_s}
\end{figure*}
This subsection illustrates the advantages of spatial-temporal regularization. We therefore compare the reference image $u$ optimized using the two regularization terms: (1) the spatial-temporal regularizer as proposed; and (2) the spatial regularizer (traditional NLTV). Figs. \ref{fig:city_strong_compare_sp_vs_s} and \ref{fig:airport_compare_sp_vs_s} show the process of optimization using the two regularizers on the city\_strong and airport sequences, respectively. Image (a) is the initial reference image obtained by low-rank decomposition, and the other three columns from left to right present the first, second, and the final optimization steps (the three steps correspond to the $middle\_loop$ in Algorithm \ref{propose_alg}).

As can be seen by comparing (a) with the bottom row of (d) in Fig. \ref{fig:city_strong_compare_sp_vs_s}, the spatial regularizer enhances 'strong' edges (we refer to object's profile as the strong edges, and consider the edges which are inside an object to describe local structures as the weak edges), but the weak edges are not optimally recovered and remain blurry. In contrast, the spatial-temporal regularizer sharpens both strong and weak edges (top row Fig. \ref{fig:city_strong_compare_sp_vs_s}). This phenomenon can be explained by the following two factors: 1) NLTV is limited by its preservation of texture and local structures in an image due to the smoothness caused by the weighted averaging of non-local self-similar patterns. Unfortunately, these local structures usually include weak edges, resulting in different degrees of recovery of strong and weak edges. 2) The total variation employed by $J_t(u)$ can force the energy difference of two optimized results to converge on the sparse discontinuities, on which the edges lie in the functional space. Therefore, $J_t(u)$ can enhance both strong and weak edges. In summary, combining the non-local TV-based spatial regularizer with the local TV-based temporal regularizer does noticeably improve the quality of the reference image. More illustrative results can be seen in Fig. \ref{fig:airport_compare_sp_vs_s}.

Additionally, we analyze the computational complexity and CPU time of the proposed fast algorithm and the split Bregman method (the iterative optimization algorithm is presented in Section \ref{SB} (from Eqs. (\ref{sb-opt1}) to (\ref{sb-opt4})). For a fixed number of iterations, both the split Bregman method and our approach are linear in $N$ (the number of pixels), since each step only contains addition and scalar multiplication operations. Therefore, we compare them in a different way by accounting for the number of atom operators in the key steps of each method, which in this case are Eqs. (\ref{new_step4_in}) and (\ref{sb-opt1}), respectively, because the complexity difference between $cut(\cdot)$ and $shrink(\cdot)$ is a constant. Considering one pixel access as an atom operator ({\it ao}), we can compare the complexity of the two methods in detail. $\nabla_{x}$ and $\nabla_{x}^{T}$ are both $2$ {\it aos}, as are $\nabla_{y}$ and $\nabla_{x}^{T}$. According to $\Delta = -\nabla_{x}^{T} \nabla_{x} - \nabla_{y}^{T} \nabla_{y}$, $\Delta$ is $8$ {\it aos}. $\nabla_{w}$, $\Delta_{w}$ and $\text{div}_{w}$ are $20$ {\it aos} if we choose ten similar patterns in NLTV. Therefore, solving Eq. (\ref{sb-opt1}) requires $86N$ {\it aos}, while Eq. (\ref{new_step4_in}) only requires $25$ {\it aos}. So, for a large image or a large number of optimized iterations, the proposed algorithm can significantly reduce the computational time. Table \ref{cpu-compare} compares the split Bregman method and the proposed fast algorithm for solving the spatial-temporal regularizations in terms of the CPU seconds. The absence of PDEs leads to at least a one-quarter reduction in running time, and the proposed fast algorithm is therefore highly efficient.

\begin{figure}[htbp]
\setlength{\abovecaptionskip}{0pt}  
\setlength{\belowcaptionskip}{0pt} 
\renewcommand{\figurename}{Figure}
\centering
\includegraphics[width=0.7\textwidth]{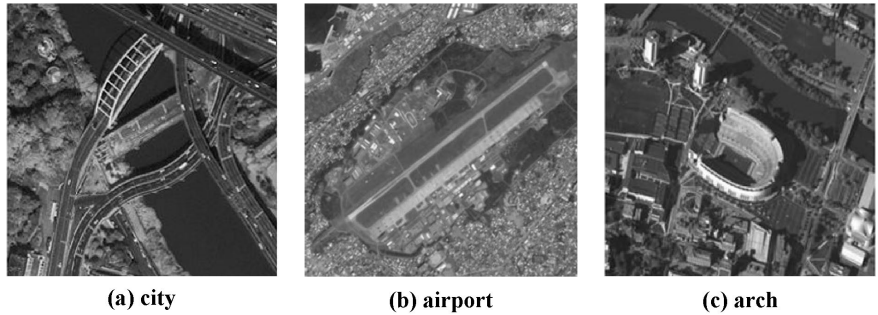}
\caption{Three sharp images used in simulated experiments.}
\label{fig:simulated_gt}
\end{figure}
\begin{table}[!htbp]
\caption{Comparison of CPU time (average) for the split Bregman and the proposed fast algorithm for differently sized images}
\centering
\label{cpu-compare}
{
\begin{tabular}{|l||r|r|r|}
\hline
 & $240\times 240$ & $260\times 260$ & $320\times 240$ \\
\hline\hline

\multirow{1}*{Split Bregman}
      & 0.6496 & 0.6231 & 1.1588 \\\hline
\multirow{1}*{Proposed}
      & 0.4482 & 0.4693 & 0.7805 \\\hline
\multirow{1}*{Reduction}
      & $31.00\%$ & $24.68\%$ & $32.65\%$ \\\hline
\end{tabular}
}%
\end{table}

\subsection{Simulated Experiments}\label{experiment-simulate}
\begin{figure*}[!htbp]
\setlength{\abovecaptionskip}{0pt}  
\setlength{\belowcaptionskip}{0pt} 
\renewcommand{\figurename}{Figure}
\centering
\includegraphics[width=0.8\textwidth]{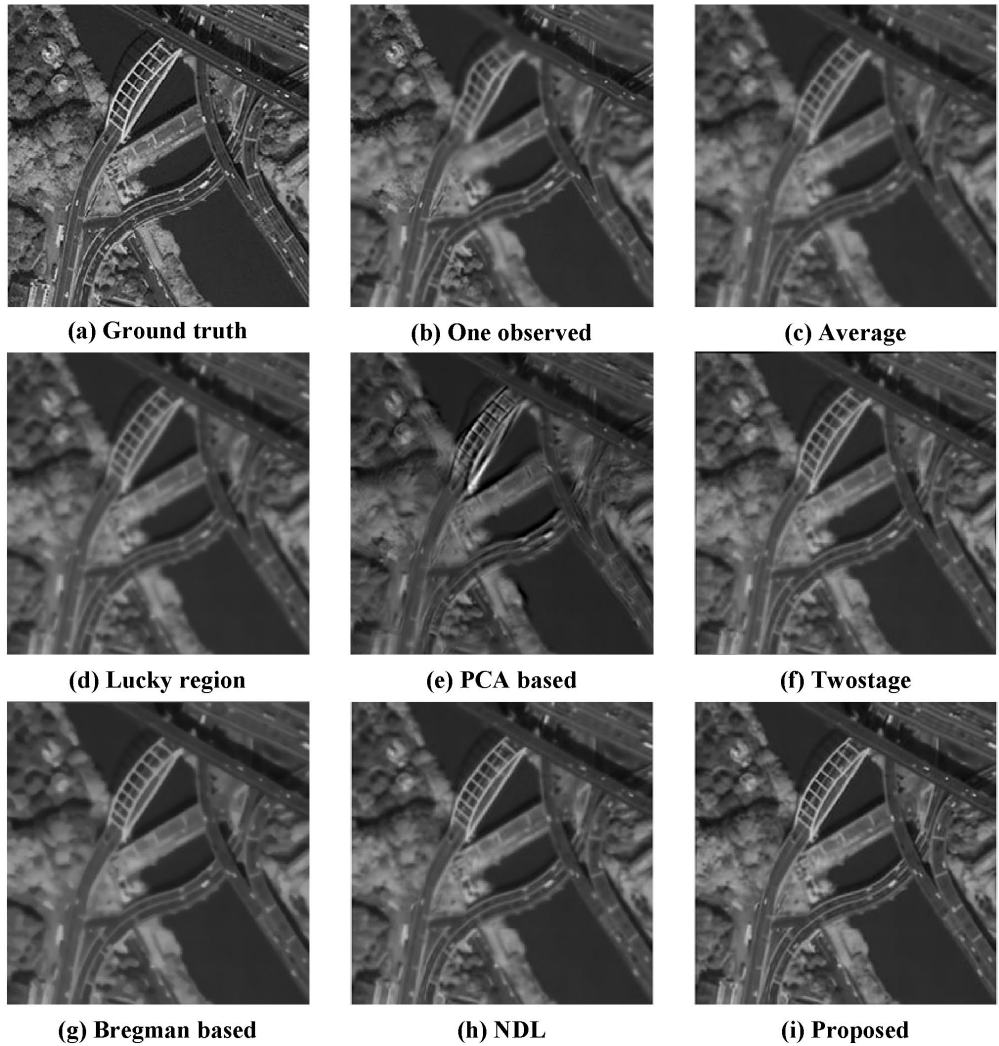}
\caption{Image restoration results on the simulated city\_weak sequence.}
\label{fig:city_weak_compare}
\end{figure*}

\begin{figure*}[htbp]
\setlength{\abovecaptionskip}{0pt}  
\setlength{\belowcaptionskip}{0pt} 
\renewcommand{\figurename}{Figure}
\centering
\includegraphics[width=0.8\textwidth]{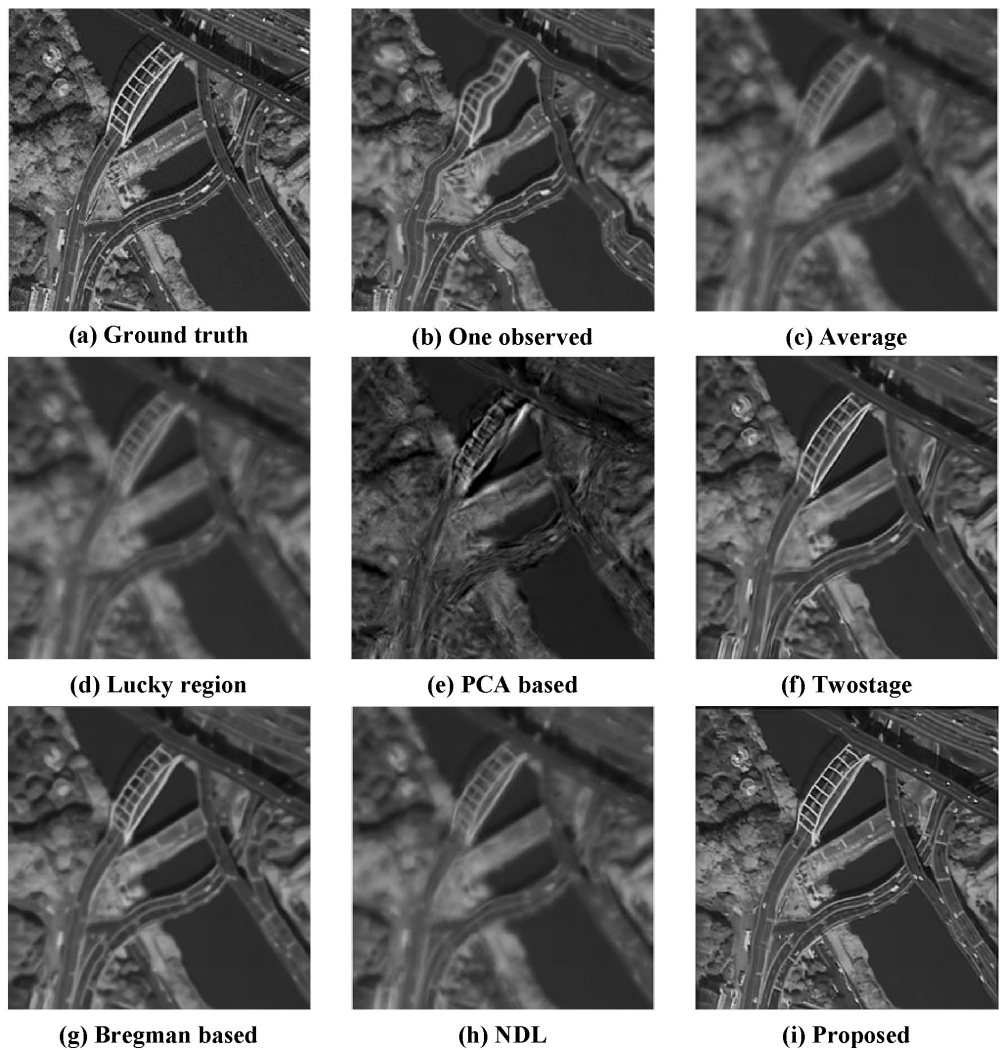}
\caption{Image restoration results on the simulated city\_strong sequence.}
\label{fig:city_strong_compare}
\end{figure*}

\begin{table*}\footnotesize
\caption{Comparison of PSNR and SSIM obtained by different methods}
\centering
\label{compare-table}
{
\begin{tabular}{l||r|r|r|r|r|r|r}
\hline
{\itshape Sequence} & Average & Lucky-Region & PCA Based & Twostage & BNLTV & NDL & Proposed\\
\hline\hline

\multirow{2}*{Airport}
      & $69.7747$ & $69.7758$ & $66.7340$ & $68.7511$ & $69.3926$ & $68.3036$ & $\mathbf{69.9754}$ \\\cline{2-8}
      & $0.2745$ & $0.2731$ & $0.2042$ & $0.2566$ & $0.3830$ & $0.3520$ & $\mathbf{0.4011}$\\\cline{2-8}
      \dhline
\multirow{2}*{City\_strong}
      & $68.2606$ & $66.9073$ & $63.2749$ & $66.4798$ & $67.0916$ & $\mathbf{68.6392}$ & $67.4820$ \\\cline{2-8}
      & $0.2479$ & $0.2405$ & $0.1329$ & $0.2378$ & $0.3413$ & $0.2854$ & $\mathbf{0.3544}$ \\\cline{2-8}
      \dhline
\multirow{2}*{City\_weak}
      & $68.8989$ & $67.3415$ & $64.8207$ & $67.4399$ & $67.3981$ & $\mathbf{68.9629}$ & $68.7218$ \\\cline{2-8}
      & $0.3611$ & $0.3531$ & $0.2215$ & $0.2686$ & $0.3631$ & $0.3956$ & $\mathbf{0.4022}$ \\\cline{2-8}
      \dhline
\multirow{2}*{Architecture}
      & $70.2634$ & $68.1599$ & $64.1939$ & $68.8656$ & $68.3711$ & $67.9243$ & $\mathbf{70.5919}$ \\\cline{2-8}
      & $0.4299$ & $0.4282$ & $0.2929$ & $0.3656$ & $0.4349$ & $0.4340$ & $\mathbf{0.5851}$ \\\cline{2-8}
      \dhline
\multirow{2}*{Chimney}
      & $78.9262$ & $79.1731$ & $64.4229$ & $78.7460$ & $75.0367$ & $79.5622$ & $\mathbf{79.9611}$ \\\cline{2-8}
      & $0.1194$ & $0.1137$ & $0.0539$ & $0.1193$ & $0.1120$ & $0.1075$ & $\mathbf{0.1257}$ \\\cline{2-8}
      \dhline
\multirow{2}*{Building}
      & $72.1991$ & $72.1987$ & $62.4768$ & $73.2032$ & $72.9388$ & $74.3453$ & $\mathbf{74.3843}$ \\\cline{2-8}
      & $0.3201$ & $0.3271$ & $0.1858$ & $0.3882$ & $0.3855$ & $0.4857$ & $\mathbf{0.5023}$ \\\cline{2-8}
      \hline
\end{tabular}
}%
\end{table*}

\begin{figure*}[htbp]
\setlength{\abovecaptionskip}{0pt}  
\setlength{\belowcaptionskip}{0pt} 
\renewcommand{\figurename}{Figure}
\centering
\includegraphics[width=0.8\textwidth]{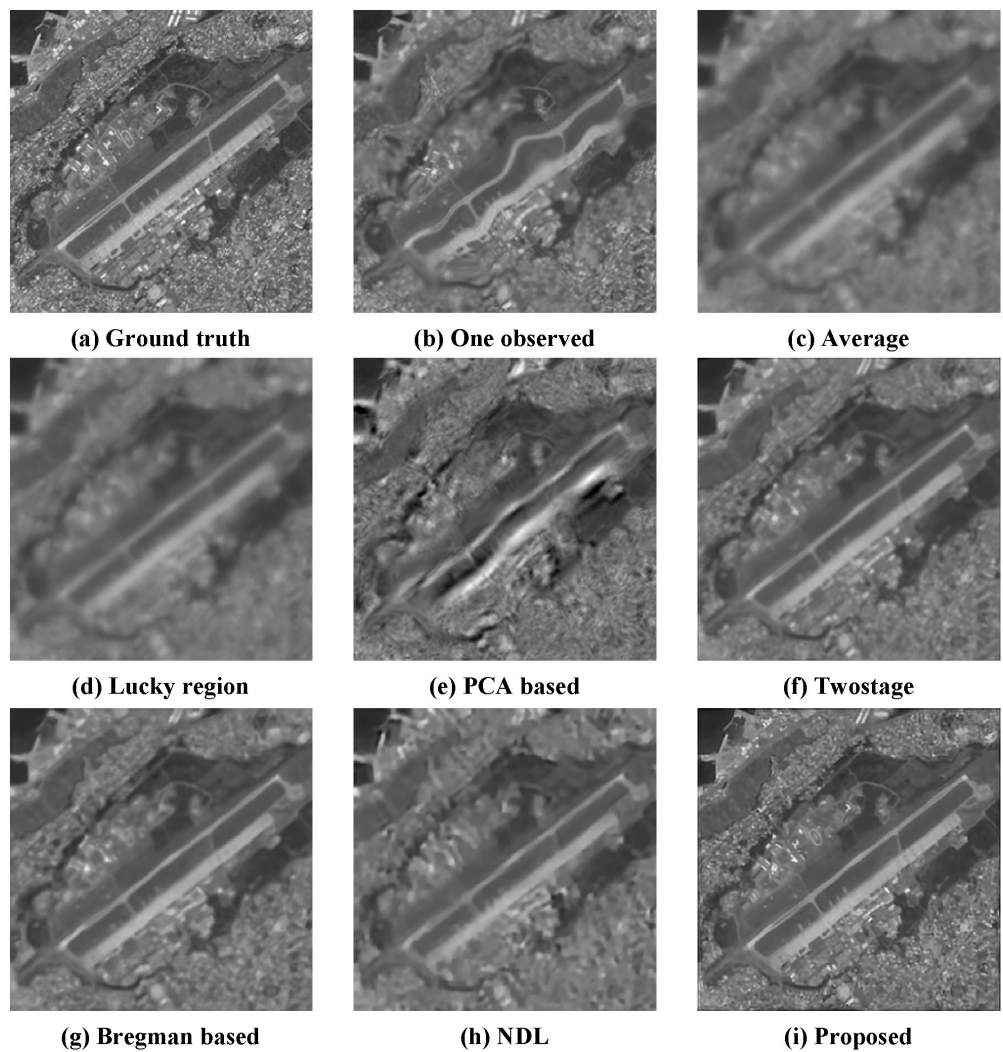}
\caption{Image restoration results on the simulated airport sequence.}
\label{fig:airport_compare}
\end{figure*}
To quantitatively evaluate the performance of the proposed method, we generated a set of simulated degraded sequences with different degrees of turbulence. The latent sharp images ($260\times 260$) are shown in Fig. \ref{fig:simulated_gt}. The simulation algorithm is similar to \cite{xzhu2} but with a slight difference. The algorithm includes three key components: the deformation field, spatially variant PSFs, and a spatially-invariant diffraction-limited PSF. The deformation field is determined by a set of control points whose random offsets have a Gaussian distribution with mean $\mu_d = 0$ and variance $\sigma_d^2$, with $\sigma_d^2$ indicating the turbulence strength. The space-varying blur is generated by convolution with a set of PSFs, each of which is also a Gaussian function with variance proportional to the magnitude of the corresponding local motion energy. In our implementation, the number of control points is characterized by their interval $d_g$, and the number of PSFs is equal to the number of control points. To efficiently apply a spatially-variant blur, we use the fast algorithm described in \cite{olv1}, which is based on overlap-add convolution schemes \cite{olv2} and linear interpolation of measured PSFs for spatially-invariant blurs. The spatially-invariant diffraction-limited blur is produced using a disc function.

To test the performance of the different restoration methods with different degrees of turbulence, we produced two degraded sequences for the same true image (Fig. \ref{fig:simulated_gt} (a)) named city\_weak and city\_strong, respectively. Considering limited page length, we only simulate two kinds of turbulence for one image (city); the other two images are used to produce the airport sequence with strong turbulence and the arch sequence with weak turbulence, respectively. The parameter configuration for the two cases are listed as follows: for weak turbulence,  $\sigma_d^{2} = 4$, $d_g = 32$, and the variance of the added Gaussian noise $\sigma_n^{2} = 3$; for strong turbulence, $\sigma_d^{2} = 10$, $d_g = 16$, and the variance of the added Gaussian noise $\sigma_n^{2} = 16$. Table \ref{compare-table} compars the PSNR and SSIM values for all the outputs of the seven different restoration algorithms. Each sequence has two sub rows, the top one denoting the PSNR values, and the bottom one denoting the SSIM values.

The restoration results of the city\_weak sequence are illustrated in Fig. \ref{fig:city_weak_compare}. The lucky region method (Fig. \ref{fig:city_weak_compare} (d)) does not seem to offer an improvement over the mean image (Fig. \ref{fig:city_weak_compare} (c)). The PCA-based method produces unnatural components due to the loss of high frequency information. BNLTV restores the structure of the objects well but cannot recover local details. This is due in large part to the smoothness caused by the weighted averaging of non-local self-similar patterns. Consequently, the smoothness prevents the restoration algorithm from recovering the local details. Twostage and NDL achieve similar results (Figs. \ref{fig:city_weak_compare} (f) and (h)); the distortion has been corrected thoroughly but some blur still exists. The proposed approach significantly improves visual quality and recovers many high-frequency details in the image. Even more noticeable differences are shown in Fig. \ref{fig:city_strong_compare}, depicting the results of the city\_strong sequence with severe turbulence (see one of the observed frames in Fig. \ref{fig:city_strong_compare} (b)). Twostage and BNLTV are superior to NDL, as shown in Fig. \ref{fig:city_strong_compare} (h), which produces many artifacts. Only the proposed method can remove large deformations and simultaneously recover detail. The superiority of the proposed algorithm is due to the high-quality reference image and its subsequently optimized version facilitating the removal of distortion, and the spatial-temporal kernel regression recovering sharp local details as well as reducing the noise introduced by the sensor and the registration error. More experimental results are shown in Fig. \ref{fig:airport_compare} and Fig. 1 in the Appendix.

\subsection{Real Video Experiments}\label{experiment-real}
\begin{figure*}[htbp]
\setlength{\abovecaptionskip}{0pt}  
\setlength{\belowcaptionskip}{0pt} 
\renewcommand{\figurename}{Figure}
\centerline{
\begin{minipage}[b]{1.1\textwidth}
\centerline{
\includegraphics[width=\textwidth]{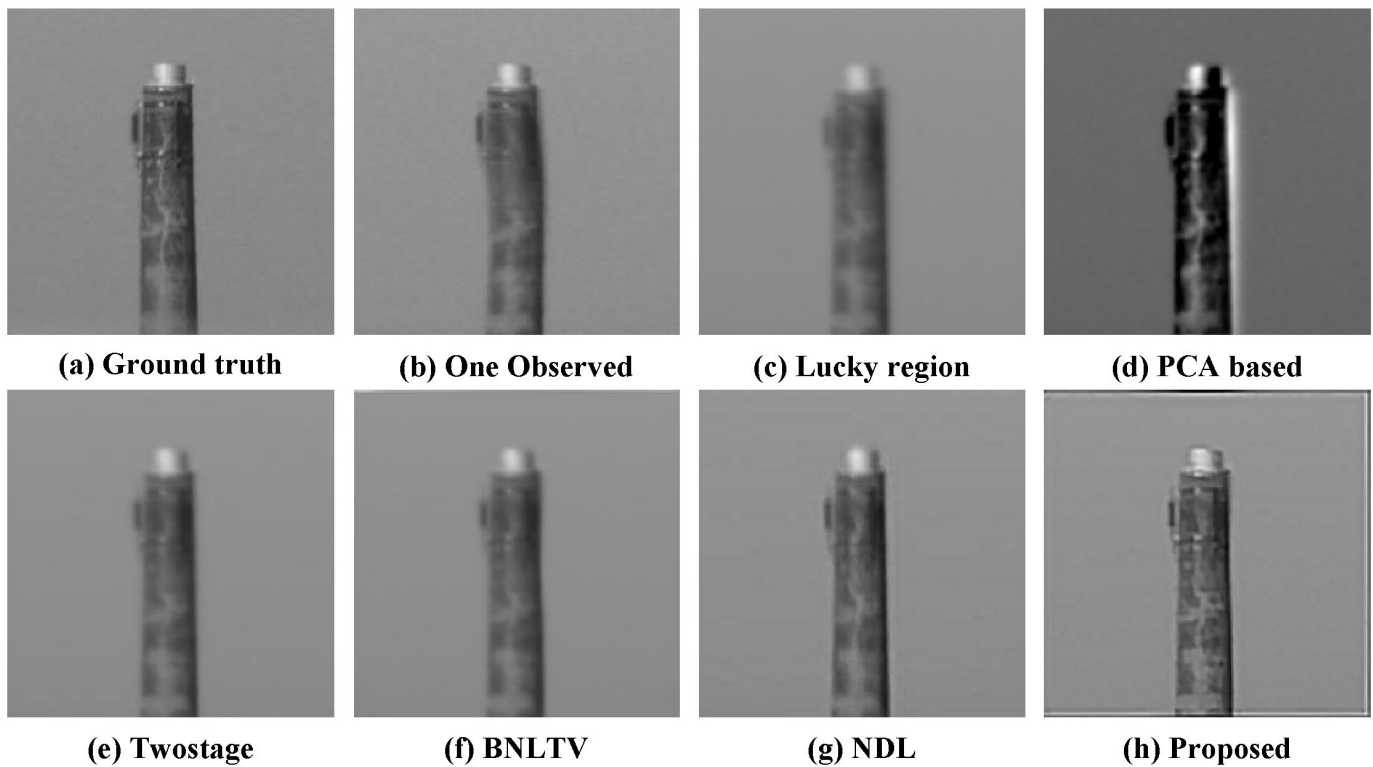}}
\caption{Image restoration results on the chimney sequence.}
\label{fig:chimney_compare}
\end{minipage}}
\end{figure*}
We test several video sequences captured through real atmospheric turbulence to illustrate the performance of the proposed restoration algorithm. In the first sequence, a chimney is captured through hot air exhausted by a building's vent (the original size of the videos 'chimney' and 'building' is $237\times237\times100$; we resize them to $240\times240\times100$ in order to facilitate non-rigid registration). The final outputs of all the methods are shown in Fig. \ref{fig:chimney_compare}: the proposed algorithm provides the best restoration result and faithfully recovers details of the object. The PSNR and SSIM values also indicate that the proposed method outperforms the other methods for the chimney sequence. A noteworthy phenomenon is that all the PSNR values of the average outputs are relatively high (shown in Table \ref{compare-table}), because PSNR is known to sometimes correlate poorly with human perception \cite{GMSD}.

Similar restoration results are seen in the experiment on the second test sequence, 'building' ($240\times240\times100$). Most restoration methods cannot remove diffraction-limited blur, except for NDL and our algorithm. Both of these methods can produce sharp restored images, but the NDL output (Fig. \ref{fig:building_compare} (g)) contains halo artifact near the edges, such as around the windows of the building. Such artifacts can be attributed to the limited accuracy of the PSF estimation and the existence of noise caused by sensor and registration error. As mentioned earlier, NDL and the proposed method utilize the same deconvolution algorithm with identical parameters to produce the final outputs. Since the major cause of artifact appears to be noise, we can conclude that the noise can be effectively reduced by spatial-temporal kernel regression. More real data experimental results are presented in the Appendix.

\begin{figure*}[htbp]
\setlength{\abovecaptionskip}{0pt}  
\setlength{\belowcaptionskip}{0pt} 
\renewcommand{\figurename}{Figure}
\centerline{
\begin{minipage}[b]{1.1\textwidth}
\centerline{
\includegraphics[width=\textwidth]{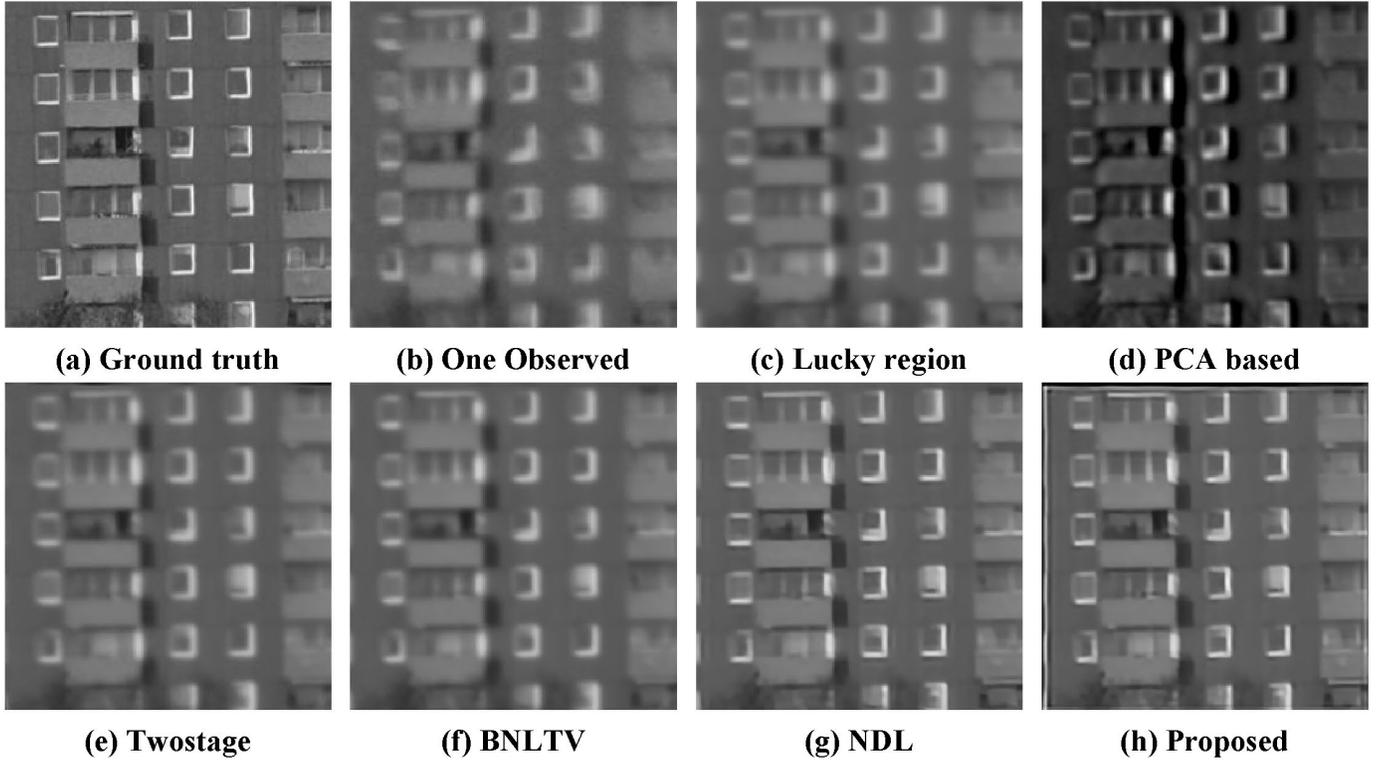}}
\caption{Image restoration results on the building sequence.}
\label{fig:building_compare}
\end{minipage}}
\end{figure*}


\section{Discussion and Conclusions}\label{discussion-and-conclusion}
In this paper, we propose a new method to restore a high-quality image from a given image sequence degraded by atmospheric turbulence. Geometric distortion and space-time-varying blur are the major challenges that need to be overcome during restoration. To effectively remove local distortion, the proposed method constructs a high-quality reference image using low-rank decomposition. Then, to further improve the registration, the proposed method applies a variational model with a novel spatial-temporal regularization term to iteratively optimize the reference image; the proposed fast algorithm can efficiently solve this variational model. To reduce blur variation, the registered frames can be fused into a single image with reduced PSF variation by applying near-stationary patch detection and distortion-driven spatial-temporal kernel regression. The fused image can be deblurred by a space-invariant blind deconvolution method in order to produce the final output. Thorough empirical studies on a set of simulated data and real videos demonstrate the effectiveness and the efficiency of the new framework for recovering a single image from a degraded sequence.

Additionally, we test the performance of the proposed restoration method on images distorted by water rather than air (see the Appendix in supplemental material), with promising results. This method is therefore suitable for use with multiple distorting media, although further adaptations of the proposed method are likely.

\section{Proof}

\subsection{Split Bregman for Solving Mixed ROF Model}\label{SB}
The split Bregman method for solving the mixed ROF subproblem (\ref{split_bregman_rof}) is presented below:

Let $b_{w}^{0} = 0, b_{x}^{0} = 0, b_{y}^{0} = 0$ and $u^{0} = v$, for $k=1,2,\ldots$, the iteration is as follows
\begin{align}\label{sb-opt1}
   u^{k+1} = &(I - 2\lambda_1 \Delta_{w} - \lambda_2\Delta)^{-1} (v + \lambda_1 \text{div}_{w} (b_{w}^{k} - d_{w}^{k}) + \lambda_2 \nabla_{x}^{T} (d_{x}^{k} + \nabla_{x} u_p - b_{x}^{k})\\ \nonumber
   &+ \lambda_2 \nabla_{y}^{T}(d_{y}^{k} + \nabla_{y} u_p - b_{y}^{k}))
\end{align}
\begin{align}\label{}
   &d_{w}^{k+1} = shrink\Big( \nabla_{w}u^{k+1} + b_{w}^{k}, \frac{\mu_1}{\lambda_1}\Big) \label{sb-opt2} \\
   &d_{x}^{k+1} = shrink\Big( \nabla_{x}u^{k+1} - \nabla_{x}u_p + b_{x}^{k}, \frac{\mu_2}{\lambda_2}\Big) \label{sb-opt3}\\
   &d_{y}^{k+1} = shrink\Big( \nabla_{y}u^{k+1} - \nabla_{y}u_p + b_{y}^{k}, \frac{\mu_2}{\lambda_2}\Big) \label{sb-opt4}
\end{align}
where $\nabla_{x}$ and $\nabla_{y}$ are the difference operators along the $x$ and $y$ directions, respectively, $\Delta := -\nabla_{x}^{T}\nabla_{x}- \nabla_{y}^{T}\nabla_{y}$ is the discrete Laplacian operator, $\nabla_{w}$ denotes the non-local gradient operator and $\Delta_{w}$ represents the non-local graph Laplacian, and $\text{div}_{w}$ is the non-local graph divergence operator (see the mathematical definitions of all those operators in Appendix). Moreover, the function $shrink(\cdot,\cdot)$ is defined as:
\begin{equation}\label{shrink_op}
 shrink(x, \gamma) = \frac{x}{|x|}\ast \max(|x| - \gamma, 0)
\end{equation}

\subsection{Proof of Theorem 1}\label{proof-B}
The detailed proof of the theorem \ref{main_theorem} is given below.
\begin{lemma}\label{lemma1}
  Given $0< 20\lambda_1 + 4\lambda_2 <1$, the real symmetric linear operator $I + 2\lambda_1\Delta_{w} + \lambda_2\Delta$ is positive definite.
\end{lemma}

\begin{lemma}\label{lemma2}
  For $k=1,2,\ldots$, let $d_{w}^{k}, d_{x}^{k}, d_{y}^{k}, b_{w}^{k}, b_{x}^{k},b_{y}^{k}$ and $u^{k+1}$ be given by the iteration (\ref{new_step1_in}) to (\ref{new_step4_in}). Then $\lim_{k\rightarrow \infty} (u^{k+1} - u^{k}) = 0$, and
  \begin{align*}\label{}
    &b_{w}^{k} = cut(\nabla_{w} u^{k} + b_{w}^{k-1}, \frac{\mu_1}{\lambda_1})\\
    &b_{x}^{k} = cut(\nabla_{x} u^{k} - \nabla_{x} u_p + b_{x}^{k-1}, \frac{\mu_2}{\lambda_2})\\
    &b_{y}^{k} = cut(\nabla_{y} u^{k} - \nabla_{y} u_p + b_{y}^{k-1}, \frac{\mu_2}{\lambda_2})
  \end{align*}
\end{lemma}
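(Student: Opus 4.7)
The lemma makes two assertions: (i) the iterates are asymptotically stationary, i.e., $u^{k+1} - u^k \to 0$; and (ii) the Bregman multiplier updates, obtained from the split Bregman relations (\ref{sb-opt2})-(\ref{sb-opt4}) together with the multiplier rule $b^{k} = b^{k-1} + \nabla u^{k} - d^{k}$, collapse to the compact $cut$ formulas displayed in the lemma, thereby matching the fast iteration (\ref{new_step1_in})-(\ref{new_step3_in}). My plan is to dispatch (ii) by direct algebraic manipulation and (ii) by a Banach contraction argument.

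Assertion (ii) is a one-line identity. From (\ref{sb-opt2}) we have $d_w^k = shrink(\nabla_w u^k + b_w^{k-1}, \mu_1/\lambda_1)$ in the standard soft-thresholding sense. Substituting this into the multiplier update yields $b_w^k = (\nabla_w u^k + b_w^{k-1}) - shrink(\nabla_w u^k + b_w^{k-1}, \mu_1/\lambda_1)$, and invoking the complementary relation $cut(x,\gamma) = x - shrink(x,\gamma)$ gives $b_w^k = cut(\nabla_w u^k + b_w^{k-1}, \mu_1/\lambda_1)$. The same computation delivers the $b_x^k$ and $b_y^k$ formulas, completing (ii).

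For assertion (i) I would substitute (\ref{new_step4_in}) into (\ref{new_step1_in})-(\ref{new_step3_in}) to produce a self-map $T$ on the product $b$-space whose linear part is built from the compositions $\nabla_w \text{div}_w = -\Delta_w$, $\nabla_x \nabla_x^T$, $\nabla_y \nabla_y^T$, and analogous mixed cross terms. Using 1-Lipschitz continuity of $cut$ together with the discrete operator-norm bounds $\|\nabla_x\nabla_x^T + \nabla_y\nabla_y^T\| \leq 4$ and $\|\Delta_w\| \leq 20$ (the latter arising because NLTV retains at most ten non-local neighbors), and choosing a weighted Euclidean norm on the product $b$-space that balances the $\lambda_i/\mu_i$ factors appearing in the $cut$ thresholds, one obtains a Lipschitz bound on $T$ of at most $20\lambda_1 + 4\lambda_2 < 1$. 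Banach's fixed-point theorem then forces $b^k - b^{k-1} \to 0$ in each block, and substituting back into (\ref{new_step4_in}) transfers this stationarity to $u^{k+1} - u^k \to 0$.

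The hard part will be handling the cross-term contributions ($\nabla_w\nabla_x^T$, $\nabla_x \text{div}_w$, and so on) that arise after the substitution, and consolidating them with the diagonal terms into the single clean inequality $20\lambda_1 + 4\lambda_2 < 1$. The natural remedy is to work in a weighted product norm where each block is scaled by $\sqrt{\lambda_i/\mu_i}$, and to apply Cauchy-Schwarz to each off-diagonal block so that every mixed-operator contribution is dominated by the product of two single-operator norms. Once this bookkeeping is in place, the diagonal bounds $\|\Delta_w\|\leq 20$ and $\|\Delta\|\leq 4$ close the estimate and finish the proof.
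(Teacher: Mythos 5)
Your assertion (ii) is fine: writing the split Bregman multiplier update as $b^{k}=b^{k-1}+\nabla u^{k}-d^{k}$ with $d^{k}=shrink(\nabla u^{k}+b^{k-1},\gamma)$ and using $cut(x,\gamma)=x-shrink(x,\gamma)$ does collapse (\ref{sb-opt2})--(\ref{sb-opt4}) to the $cut$ formulas of the lemma, and this is exactly how the equivalence with (\ref{new_step1_in})--(\ref{new_step3_in}) is meant to be established.

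The contraction argument for assertion (i), however, has a genuine gap. Substituting (\ref{new_step4_in}) into the three $cut$ updates gives a map of the form $T(B)=cut\bigl((I-NN^{T}D)B+c\bigr)$, where $N$ is the stacked operator $(\nabla_{w};\nabla_{x};\nabla_{y})$ and $D=\mathrm{diag}(\tfrac{\lambda_1}{\mu_1}I,\tfrac{\lambda_2}{\mu_2}I,\tfrac{\lambda_2}{\mu_2}I)$. The linear part is $I-NN^{T}D$, not $-NN^{T}D$: the ``$+\,b^{k-1}$'' inside each $cut$ survives the substitution. Since $N^{T}$ maps a much larger space down to $\mathbb{R}^{N^{2}}$, the operator $NN^{T}D$ has a large kernel, on which the linear part acts as the identity; combined with the fact that $cut$ is an isometry wherever its argument stays inside the threshold ball, $T$ has Lipschitz constant exactly $1$ under the stated hypothesis, never $20\lambda_1+4\lambda_2<1$. (A sanity check: as $\lambda_1,\lambda_2\to 0$ the linear part tends to $I$, so the Lipschitz constant of $T$ tends to $1$, not to $0$; the role of $0<20\lambda_1+4\lambda_2<1$ is to keep the spectrum of $NN^{T}D$ in $[0,2]$ so that $\|I-NN^{T}D\|\le 1$, i.e.\ to guarantee \emph{non-expansiveness}, which is precisely the content of Lemma~\ref{lemma1}.) Banach's fixed-point theorem therefore does not apply. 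The argument that actually works --- and the one the paper inherits from \cite{fast_bregman}, whose scheme is being extended --- exploits that $cut(\cdot,\gamma)$ is the projection onto a ball and hence \emph{firmly} non-expansive: $\|P(a)-P(a')\|^{2}\le\langle P(a)-P(a'),a-a'\rangle$. Applying this to consecutive iterates and using the positive definiteness of $I+2\lambda_1\Delta_{w}+\lambda_2\Delta$ yields a Fej\'er-type inequality of the form $\|B^{k+1}-B^{k}\|_{D}^{2}\le\|B^{k}-B^{k-1}\|_{D}^{2}-c\,\|u^{k+1}-u^{k}\|^{2}$ with $c>0$, whence $\sum_{k}\|u^{k+1}-u^{k}\|^{2}<\infty$ and $u^{k+1}-u^{k}\to 0$ without ever needing $B^{k}$ to converge. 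You should replace the contraction step by this monotonicity estimate; the transfer from $b^{k}-b^{k-1}\to 0$ back to $u$ via (\ref{new_step4_in}) is then unnecessary, since $u^{k+1}-u^{k}$ is controlled directly.
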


\begin{lemma} \label{lemma_3}
  For $k=1,2,\ldots$, let $d_{w}^{k}, d_{x}^{k}, d_{y}^{k}, b_{w}^{k}, b_{x}^{k},b_{y}^{k}$ and $u^{k+1}$ be given by the iteration (\ref{new_step1_in}) to (\ref{new_step4_in}). Then all the sequences $(d_{w}^{k})_{k=1,2,\ldots},(d_{x}^{k})_{k=1,2,\ldots},(d_{y}^{k})_{k=1,2,\ldots},(b_{w}^{k})_{k=1,2,\ldots},(b_{x}^{k})_{k=1,2,\ldots},(b_{y}^{k})_{k=1,2,\ldots}$ and $(u^{k})_{k=1,2,\ldots}$ are bounded. Moreover,
  \begin{align*}\label{}
    u^{k+1} = v + \frac{\lambda_1}{\mu_1} div_{w} b_{w}^{k} - \frac{\lambda_2}{\mu_2} (\nabla_{x}^{T} b_{x}^{k} + \nabla_{y}^{T} b_{y}^{k})
  \end{align*}
\end{lemma}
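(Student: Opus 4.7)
The plan is to leverage the $\ell_\infty$-projection structure of the $\mathrm{cut}$ operator that is the basic building block of the fast iteration, and then connect the scheme back to the Split Bregman formulation to read off the $d$-variables. First I would note that for any $\gamma>0$ the definition $\mathrm{cut}(x,\gamma)=x-\mathrm{shrink}(x,\gamma)=\mathrm{sign}(x)\min(|x|,\gamma)$ identifies $\mathrm{cut}(\cdot,\gamma)$ as the component-wise orthogonal projection onto $[-\gamma,\gamma]^{N}$. Applying this to the updates (\ref{new_step1_in})--(\ref{new_step3_in}) immediately yields the a priori bounds $\|b_{w}^{k}\|_{\infty}\le \mu_{1}/\lambda_{1}$ and $\|b_{x}^{k}\|_{\infty},\|b_{y}^{k}\|_{\infty}\le \mu_{2}/\lambda_{2}$ for every $k\ge 1$, \emph{independently} of the input and of any descent argument. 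Boundedness of the three dual sequences is therefore automatic.

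Second, the asserted identity
$u^{k+1}=v+\tfrac{\lambda_{1}}{\mu_{1}}\,\mathrm{div}_{w}b_{w}^{k}-\tfrac{\lambda_{2}}{\mu_{2}}(\nabla_{x}^{T}b_{x}^{k}+\nabla_{y}^{T}b_{y}^{k})$
is literally the update rule (\ref{new_step4_in}), so it holds by construction; the content of restating it here is that it remains valid \emph{at every} $k$, which is what will be needed for Theorem~\ref{main_theorem}. Combining this identity with the uniform bounds on the $b$'s and with the fact that the finite-dimensional operators $\mathrm{div}_{w},\nabla_{x}^{T},\nabla_{y}^{T}$ have bounded spectral norms on the underlying finite-dimensional spaces, a triangle-inequality estimate gives a uniform bound $\|u^{k+1}\|\le \|v\|+C_{1}\|b_{w}^{k}\|+C_{2}(\|b_{x}^{k}\|+\|b_{y}^{k}\|)$ with explicit constants $C_{1},C_{2}$ depending only on $\lambda_{i},\mu_{i}$ and the operator norms. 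Hence $(u^{k})_{k\ge 1}$ is bounded.

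Third, to handle the $d$-variables, which do not appear explicitly in (\ref{new_step1_in})--(\ref{new_step4_in}), I would recover them through the shrink/cut duality $\mathrm{shrink}(y,\gamma)+\mathrm{cut}(y,\gamma)=y$. Defining
$d_{w}^{k}:=\nabla_{w}u^{k}+b_{w}^{k-1}-b_{w}^{k}$, $d_{x}^{k}:=\nabla_{x}u^{k}-\nabla_{x}u_{p}+b_{x}^{k-1}-b_{x}^{k}$, and likewise $d_{y}^{k}$, this duality forces $d_{w}^{k}=\mathrm{shrink}(\nabla_{w}u^{k}+b_{w}^{k-1},\mu_{1}/\lambda_{1})$ together with the corresponding $x$- and $y$-counterparts, matching exactly (\ref{sb-opt2})--(\ref{sb-opt4}) of the Split Bregman scheme. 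From these expressions each $d^{k}$ is a finite linear combination of the already-bounded quantities $u^{k}$, $u_{p}$, $b^{k-1}$ and $b^{k}$ with bounded operators, so uniform boundedness of the three $d$-sequences follows.

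The main obstacle is neither an analytic estimate nor a convexity argument but the bookkeeping needed to reconcile the $d$-variables with the new fast scheme, since the iteration (\ref{new_step1_in})--(\ref{new_step4_in}) never computes them directly; once the shrink/cut identity $I=\mathrm{shrink}(\cdot,\gamma)+\mathrm{cut}(\cdot,\gamma)$ is used to supply the missing definition, everything else is a straightforward finite-dimensional norm inequality. A subtle point worth flagging is that the identity in the conclusion must be kept self-consistent with the chosen definitions of the $d^{k}$'s, so I would verify that substituting the Split Bregman form of $d^{k}$ back into (\ref{sb-opt1}) reproduces (\ref{new_step4_in}), confirming that the fast iteration and the Split Bregman iteration generate \emph{the same} $u^{k+1}$ at every step.
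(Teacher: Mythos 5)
Your argument is correct, and for the two claims actually asserted in Lemma~\ref{lemma_3} it does the job: reading $\mathrm{cut}(\cdot,\gamma)$ as the componentwise projection onto $[-\gamma,\gamma]$ gives the uniform $\ell_\infty$ bounds $\|b_{w}^{k}\|_\infty\le\mu_1/\lambda_1$, $\|b_{x}^{k}\|_\infty,\|b_{y}^{k}\|_\infty\le\mu_2/\lambda_2$; boundedness of $(u^{k})$ then follows from (\ref{new_step4_in}) and finite-dimensional operator-norm estimates; and the displayed identity is (\ref{new_step4_in}) verbatim. Where you differ from the paper is in orientation. The lemma chain here (adapted from Jia--Zhao's treatment of the pure ROF case) takes the split Bregman sequences (\ref{sb-opt1})--(\ref{sb-opt4}), together with the implicit dual update $b^{k}=b^{k-1}+\nabla u^{k}-d^{k}$, as the primary objects, and the nontrivial content of Lemmas~\ref{lemma2} and~\ref{lemma_3} is precisely that these sequences obey the cut-recursions and the explicit formula (\ref{new_step4_in}), with boundedness harvested along the way. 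You instead take the fast iteration as primary --- correctly noting that it never computes the $d$'s, so the lemma's hypothesis is under-specified as written --- and reconstruct them via $d^{k}:=(\text{argument of }\mathrm{cut})-b^{k}=\mathrm{shrink}(\cdot,\gamma)$, which makes the identity tautological and the bounds immediate. Both directions are legitimate and yours is shorter for this lemma, but it shifts the substantive work to the consistency check you only flag at the end: that your reconstructed $d^{k}$, fed into (\ref{sb-opt1}), reproduces the $u^{k+1}$ of (\ref{new_step4_in}). That verification is not needed for Lemma~\ref{lemma_3} as stated, but it is needed before Lemma~\ref{lemma_4} and the subgradient relations $\tfrac{\lambda_1}{\mu_1}b_{w}^{k}\in\partial\|\cdot\|_{1}(d_{w}^{k})$ invoked in Theorem~\ref{main_theorem} can be applied to the same sequences, so you should carry it out explicitly (it does go through, by the same telescoping algebra the paper runs in the forward direction, using $\Delta=-\nabla_x^T\nabla_x-\nabla_y^T\nabla_y$ and $\langle\mathrm{div}_w p,q\rangle=-\langle p,\nabla_w q\rangle$) rather than leave it as a remark.
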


\begin{lemma}\label{lemma_4}
  For $k=1,2,\ldots$, let $d_{w}^{k}, d_{x}^{k}, d_{y}^{k}, b_{w}^{k}, b_{x}^{k},b_{y}^{k}$ and $u^{k+1}$ be given by the iteration (\ref{new_step1_in}) to (\ref{new_step4_in}). Then
  \begin{align*}\label{}
    &\lim_{k\rightarrow \infty} (d_{w}^{k} - \nabla_{w} u^{k}) = 0\\
    &\lim_{k\rightarrow \infty} (d_{x}^{k} - \nabla_{x} (u^{k} - u_p)) = 0\\
    &\lim_{k\rightarrow \infty} (d_{y}^{k} - \nabla_{y} (u^{k} - u_p)) = 0
  \end{align*}
\end{lemma}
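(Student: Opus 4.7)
The plan is to first reduce each of the three claims to a statement about consecutive $b$-iterate differences, and then use Lemmas~\ref{lemma1}--\ref{lemma_3} to make those differences vanish. The reduction is immediate from the elementary identity $x = cut(x,\gamma)+shrink(x,\gamma)$: substituting the iteration of Lemma~\ref{lemma2} together with the split-Bregman definition $d_w^k = shrink(\nabla_w u^k + b_w^{k-1}, \mu_1/\lambda_1)$ gives $b_w^k + d_w^k = \nabla_w u^k + b_w^{k-1}$, whence
\begin{align*}
d_w^k - \nabla_w u^k \;=\; b_w^{k-1} - b_w^k,
\end{align*}
and analogous identities hold for $d_x^k$ and $d_y^k$. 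So it suffices to prove that $b_w^k - b_w^{k-1}$, $b_x^k - b_x^{k-1}$, and $b_y^k - b_y^{k-1}$ each tend to zero.

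For the main step I would exploit the fact that $cut(\cdot,\gamma)$ is the nonexpansive Euclidean projection onto the ball of radius $\gamma$. Applied to the iteration for $b_w$ (and symmetrically for $b_x$, $b_y$), this yields
\begin{align*}
|b_w^{k+1} - b_w^k|^2 \;\le\; |\nabla_w(u^{k+1}-u^k) + (b_w^k - b_w^{k-1})|^2.
\end{align*}
Expanding the squares and combining the three inequalities with weights $\lambda_1$ and $\lambda_2$ produces a recursion $\Phi_{k+1} \le \Phi_k + A_k + 2B_k$ for the Lyapunov quantity
\begin{align*}
\Phi_k \;=\; \lambda_1\,|b_w^k-b_w^{k-1}|^2 + \lambda_2\bigl(|b_x^k-b_x^{k-1}|^2 + |b_y^k-b_y^{k-1}|^2\bigr),
\end{align*}
where $A_k$ collects the squared gradients of $u^{k+1}-u^k$ and $B_k$ collects the cross inner products. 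Using the adjoint identities $\text{div}_w = -\nabla_w^T$ together with the explicit formula for $u^{k+1}$ from Lemma~\ref{lemma_3}, $B_k$ can be rewritten as an inner product of $u^{k+1}-u^k$ with a bounded expression in the $b$-differences, and is therefore controlled by $\|u^{k+1}-u^k\|$; Lemma~\ref{lemma1} bounds $A_k$ by a multiple of $\|u^{k+1}-u^k\|^2$, since positive-definiteness of $I+2\lambda_1\Delta_w+\lambda_2\Delta$ is exactly the statement $2\lambda_1\|\nabla_w v\|^2 + \lambda_2(\|\nabla_x v\|^2+\|\nabla_y v\|^2) < \|v\|^2$ for $v \neq 0$.

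The main obstacle is to upgrade this recursion into a genuine $\Phi_k \to 0$ statement, since $A_k+2B_k \to 0$ alone merely prevents $\Phi_k$ from growing in the limit. The remedy, standard in this context, is to massage the cross terms so that the recursion also acquires a strictly negative dissipation term proportional to $\|u^{k+1}-u^k\|^2$; this is where the quantitative slack in the hypothesis $0<20\lambda_1+4\lambda_2<1$ is essential, since it provides the constant needed to absorb the remaining contributions. Telescoping the refined inequality then yields summability of the dissipation, and combined with the boundedness furnished by Lemma~\ref{lemma_3} and the vanishing of $u^{k+1}-u^k$ from Lemma~\ref{lemma2}, this forces $\Phi_k \to 0$, which is equivalent to the three claimed limits.
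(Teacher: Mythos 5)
Your reduction is correct and is surely the intended first step: from the identity $x = cut(x,\gamma)+shrink(x,\gamma)$, together with Lemma \ref{lemma2} and the split-Bregman updates, one gets $d_w^k-\nabla_w u^k = b_w^{k-1}-b_w^k$ (and likewise for the $x$ and $y$ components), so the lemma is equivalent to the vanishing of consecutive $b$-differences. The machinery you then invoke is also the right machinery: nonexpansiveness of $cut$, the adjoint identity $\mathrm{div}_w=-\nabla_w^T$, the explicit formula for $u^{k+1}$ from Lemma \ref{lemma_3}, and Lemma \ref{lemma1} to bound the weighted sum of $\|\nabla_w\eta\|^2,\|\nabla_x\eta\|^2,\|\nabla_y\eta\|^2$ by $\|\eta\|^2$ with $\eta=u^{k+1}-u^k$. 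The problem is the final step. After you obtain the refined recursion $\Phi_{k+1}\le\Phi_k-c\,\|u^{k+1}-u^k\|^2$ and telescope, what you have actually proved is that $\Phi_k$ is nonincreasing and $\sum_k\|u^{k+1}-u^k\|^2<\infty$; hence $\Phi_k$ converges to some $\ell\ge 0$. Nothing in your argument shows $\ell=0$. The appeal to ``boundedness from Lemma \ref{lemma_3} and the vanishing of $u^{k+1}-u^k$'' does not close this: a bounded sequence can have $\|b^k-b^{k-1}\|$ converge to a strictly positive constant (take $b^k=(-1)^k c$), and the vanishing of $u^{k+1}-u^k$ has already been spent in making the error terms $A_k$ and $B_k$ small. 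Since ``$\ell=0$'' is precisely the content of the lemma, this is the essential missing step, not a detail.

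The gap is repairable within your framework, but it needs an explicit argument. Use the \emph{firm} nonexpansiveness of the projection $cut$, namely $\|Px-Py\|^2+\|(I-P)x-(I-P)y\|^2\le\|x-y\|^2$, so that the recursion also carries the terms $\|d^{k+1}-d^k\|^2$ on the left-hand side; telescoping then yields $d^{k+1}-d^k\to 0$. Combining this with the identity $(b^{k+1}-b^k)+(d^{k+1}-d^k)=\nabla(u^{k+1}-u^k)+(b^k-b^{k-1})$ gives $(b^{k+1}-b^k)-(b^k-b^{k-1})\to 0$. Now if $\|b^k-b^{k-1}\|\to\sqrt{\ell}$ with $\ell>0$, then for any fixed $m$ the sum $b^{k+m}-b^k=\sum_{j=1}^{m}(b^{k+j}-b^{k+j-1})$ is, for large $k$, within $o(1)$ of $m\,(b^{k+1}-b^k)$ and hence has norm at least $m\sqrt{\ell}-o(1)$; letting $m$ grow contradicts the boundedness of $(b^k)$ guaranteed by Lemma \ref{lemma_3}, forcing $\ell=0$. (An equivalent route is a Lyapunov estimate measured from a fixed point of the iteration, which gives $\sum_k\|b^{k+1}-b^k\|^2<\infty$ directly.) One further bookkeeping point: for the cross term to collapse exactly into $-2\|u^{k+1}-u^k\|^2$ via the formula in Lemma \ref{lemma_3}, the weights in your $\Phi_k$ must be $\lambda_1/\mu_1$ and $\lambda_2/\mu_2$ rather than $\lambda_1$ and $\lambda_2$.
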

\noindent The proofs of the above lemmas are given in Appendix in supplemental material. 

\begin{thmA}
For $k=1,2,\ldots$, let $b_{w}^{k}, b_{x}^{k},b_{y}^{k}$ and $u^{k+1}$ be given by the iteration (\ref{new_step1_in}) to (\ref{new_step4_in}). If $0< 20\lambda_1 + 4\lambda_2 <1$, then $\lim_{k\rightarrow \infty} u^{k} = u^{\ast}$.
\end{thmA}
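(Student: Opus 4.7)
The plan is to combine Lemmas \ref{lemma1}--\ref{lemma_4} to show that every accumulation point of $(u^k)$ solves the mixed-ROF subproblem (\ref{split_bregman_rof}), and then to promote subsequential convergence to convergence of the whole sequence by using strict convexity of that subproblem.

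The argument proceeds in three steps. First, by Lemma \ref{lemma_3}, all of the sequences $(u^k), (b_w^k), (b_x^k), (b_y^k)$, together with the implicit $(d_w^k), (d_x^k), (d_y^k)$, are bounded, so a diagonal extraction yields a common subsequence along which every one of them converges; call the limits $u^*, b_w^*, b_x^*, b_y^*, d_w^*, d_x^*, d_y^*$. Lemma \ref{lemma2} gives $u^{k+1} - u^k \to 0$, so $u^{k+1}$ shares the same subsequential limit as $u^k$, which makes it legitimate to pass to the limit in each of (\ref{new_step1_in})--(\ref{new_step4_in}). Lemma \ref{lemma_4} yields $d_w^* = \nabla_w u^*$, $d_x^* = \nabla_x (u^* - u_p)$, $d_y^* = \nabla_y (u^* - u_p)$.

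Second, I would pass to the limit in the cut-function fixed-point equations from Lemma \ref{lemma2} and in the primal update (\ref{new_step4_in}). A direct case analysis of the fixed point $b^* = cut(\nabla u^* + b^*, \gamma)$ shows that it is equivalent to the pointwise subgradient inclusion $b^*/\gamma \in \partial |\cdot|_1 (\nabla u^*)$, so the limiting $b_w^*, b_x^*, b_y^*$ are (scaled) subgradients of $|\cdot|_1$ at $\nabla_w u^*, \nabla_x(u^*-u_p), \nabla_y(u^*-u_p)$ respectively. Substituting these inclusions into the limiting primal relation obtained from (\ref{new_step4_in}) recovers precisely the KKT system for (\ref{split_bregman_rof}), so $u^*$ is a minimizer. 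Since the data-fidelity term $\tfrac{1}{2}\|u-v\|_2^2$ is strictly convex and the regularizers are convex, the minimizer of (\ref{split_bregman_rof}) is unique; because every subsequential limit of $(u^k)$ equals this same $u^*$ and the sequence is bounded, the whole sequence converges to $u^*$.

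The hypothesis $0 < 20\lambda_1 + 4\lambda_2 < 1$ enters only through Lemma \ref{lemma1}: it is the coercivity required for $I + 2\lambda_1 \Delta_w + \lambda_2 \Delta$ to be positive definite, which is the structural fact that feeds boundedness of the iterates (Lemma \ref{lemma_3}) and the stabilization $u^{k+1} - u^k \to 0$ (Lemma \ref{lemma2}). The main obstacle I anticipate is not the clean last step above but the careful bookkeeping needed to verify that the cut-function fixed-point equation, with the exact thresholds $\mu_i/\lambda_i$ and the exact scalings in (\ref{new_step4_in}), translates to a subgradient inclusion that matches the KKT condition of (\ref{split_bregman_rof}) on the nose; any sign or scale slip there would break the identification of $u^*$ as the minimizer.
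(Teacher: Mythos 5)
Your proposal is correct and follows the same skeleton as the paper's proof: boundedness of all iterates (Lemma \ref{lemma_3}), vanishing increments $u^{k+1}-u^{k}\to 0$ (Lemma \ref{lemma2}), identification of the limits of the $d$-variables with the gradients of the limit point (Lemma \ref{lemma_4}), optimality of every subsequential limit, and finally uniqueness of the minimizer of (\ref{split_bregman_rof}) to upgrade subsequential to full convergence. The one place where you diverge is the middle step that certifies optimality of the limit $\tilde{u}$. You propose to pass to the limit in the $cut$ fixed-point equations, convert the limiting $b_w^{\ast}, b_x^{\ast}, b_y^{\ast}$ into subgradient inclusions for $\|\cdot\|_1$, and substitute them into the limiting form of (\ref{new_step4_in}) to obtain the KKT system. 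The paper instead writes the subgradient inequalities \emph{at each finite $k$} (using $\tfrac{\lambda_1}{\mu_1}b_w^{k}\in\partial\|d_w^{k}\|_1$, etc., together with the convexity inequality for $F(u)=\tfrac12\|u-v\|_2^2$ and the identity $-(u^{k+1}-v)=\tfrac{\lambda_2}{\mu_2}\nabla_x^{T}b_x^{k}+\tfrac{\lambda_2}{\mu_2}\nabla_y^{T}b_y^{k}-\tfrac{\lambda_1}{\mu_1}\mathrm{div}_w b_w^{k}$ from Lemma \ref{lemma_3}), and sums them so that all $b$-dependent terms cancel, leaving the objective-comparison inequality (\ref{proof_main_5}) in the variables $d^{k}$ and $u^{k+1}$ only; the limit is then taken in that inequality. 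The paper's route buys a small but real simplification: it never needs convergent subsequences of the $b^{k}$ sequences or a limiting analysis of the $cut$ fixed point, only the convergences already supplied by Lemmas \ref{lemma2}--\ref{lemma_4}. Your route is also valid, but it carries exactly the bookkeeping burden you anticipate (verifying that the fixed point $b^{\ast}=cut(\nabla u^{\ast}+b^{\ast},\mu_i/\lambda_i)$ is equivalent, via the Moreau decomposition $cut(x,\gamma)=x-shrink(x,\gamma)$, to $\tfrac{\lambda_i}{\mu_i}b^{\ast}\in\partial\|\cdot\|_1(\nabla u^{\ast})$, with the right thresholds and the right scalings in (\ref{new_step4_in})); the summed-inequality device is precisely how the paper sidesteps that verification.
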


\begin{proof}
  Let $F(u):= (1/2) \|u-v\|_{2}^{2}$, then $\partial F(u) = u-v$. For $w\in \mathbb{R}^{N^{2}}$ we can get
  \begin{equation}\label{}
    F(u^{k+1} + w) - F(u^{k+1}) - \langle u^{k+1}-v, w\rangle \geq 0
  \end{equation}
  From Lemma \ref{lemma_3}, we have $-(u^{k+1}-v) = \frac{\lambda_2}{\mu_2} \nabla_{x}^{T}b_{x}^{k} + \frac{\lambda_2}{\mu_2} \nabla_{y}^{T}b_{y}^{k} - \frac{\lambda_1}{\mu_1} \text{div}_{w}b_{w}^{k}$, and moreover $\langle \text{div}_{w}p, q \rangle = -\langle p, \nabla_{w} q \rangle$. Then,
  \begin{equation}\label{proof_main_1}
    F(u^{k+1} + w) - F(u^{k+1}) + \langle \frac{\lambda_2}{\mu_2} b_{x}^{k}, \nabla_{x}w \rangle + \langle \frac{\lambda_2}{\mu_2} b_{y}^{k}, \nabla_{y}w \rangle + \langle \frac{\lambda_1}{\mu_1} b_{w}^{k}, \nabla_{w}w \rangle\geq 0
  \end{equation}
  Recall that $G(d) = \|d\|_{1}$, $\frac{\lambda_1}{\mu_1} b_{w}^{k}\in \partial G(d_{w}^{k}), \frac{\lambda_2}{\mu_2} b_{x}^{k}\in \partial G(d_{x}^{k})$ and $\frac{\lambda_2}{\mu_2} b_{y}^{k}\in \partial G(d_{y}^{k})$, so,
  \begin{align}
     \|d_{w}^{k} + \nabla_{w}w\|_{1} - \|d_{w}^{k}\|_{1} - \langle \frac{\lambda_1}{\mu_1} b_{w}^{k}, \nabla_{w}w \rangle\geq 0 \label{proof_main_2}\\
     \|d_{x}^{k} + \nabla_{x}w\|_{1} - \|d_{x}^{k}\|_{1} - \langle \frac{\lambda_2}{\mu_2} b_{x}^{k}, \nabla_{x}w \rangle\geq 0 \label{proof_main_3}\\
     \|d_{y}^{k} + \nabla_{y}w\|_{1} - \|d_{y}^{k}\|_{1} - \langle \frac{\lambda_2}{\mu_2} b_{y}^{k}, \nabla_{y}w \rangle\geq 0 \label{proof_main_4}
  \end{align}
  Adding (\ref{proof_main_1}) $\sim$ (\ref{proof_main_4}) gives
  \begin{equation}\label{proof_main_5}
     \|d_{w}^{k}\|_{1} + \|d_{x}^{k}\|_{1} + \|d_{y}^{k}\|_{1} + F(u^{k+1}) \leq
     \|d_{w}^{k} + \nabla_{w}w\|_{1} + \|d_{x}^{k} + \nabla_{x}w\|_{1} + \|d_{y}^{k} + \nabla_{y}w\|_{1} + F(u^{k+1} + w)
  \end{equation}
  Suppose that $(k_j)_{j=1,2,\ldots}$ is an increasing sequence of positive integers such that the sequence $(u^{k_j})_{j=1,2,\ldots}$ converges to the limit $\tilde{u}$. By Lemma \ref{lemma2}, we have $\lim_{k\rightarrow \infty} (u^{k+1} - u^{k}) = 0$. Therefore, $\lim_{j\rightarrow \infty} u^{k_j+1} = \tilde{u}$. Moreover, we have
  the following via the Lemma \ref{lemma_4}:
  \begin{equation}\label{}
     \lim_{j\rightarrow \infty} d_w^{k} = \lim_{j\rightarrow \infty} [(d_w^{k} - \nabla_w u^{k}) + \nabla_w u^{k}]
      = \nabla_w \tilde{u}
  \end{equation}
  also have
  \begin{equation}\label{}
     \lim_{j\rightarrow \infty} d_x^{k} = \lim_{j\rightarrow \infty} [(d_x^{k} - \nabla_x (u^{k} - u_p)) + \nabla_x (u^{k} - u_p)]
      = \nabla_x (\tilde{u} - u_p)
  \end{equation}
  \begin{equation}\label{}
     \lim_{j\rightarrow \infty} d_y^{k} = \lim_{j\rightarrow \infty} [(d_y^{k} - \nabla_y (u^{k} - u_p)) + \nabla_y (u^{k} - u_p)]
      = \nabla_y (\tilde{u} - u_p)
  \end{equation}
  Replacing $k$ by $k_j$ in (\ref{proof_main_5}) and let $j\rightarrow \infty$, we have:
  \begin{align*}
    \|\nabla_{w} \tilde{u}\|_{1} + \|\nabla_{x} (\tilde{u} - u_p)\|_{1} + \|\nabla_{y} (\tilde{u} - u_p)\|_{1} + F(\tilde{u}) \leq \|\nabla_{w} (\tilde{u} + w)\|_{1} + \\
    \|\nabla_{x} (\tilde{u} - u_p + w)\|_{1} + \|\nabla_{y} (\tilde{u} - u_p + w)\|_{1} + F(\tilde{u} + w)
  \end{align*}
  The above equations hold for all the $w\in \mathbb{R}^{N^{2}}$. On the other hand, $u^{\ast}$ is the unique solution to the minimization problem (\ref{split_bregman_rof}). Therefore, we must have $\tilde{u} = u^{\ast}$. Since $(u^{k})_{k=1,2,\ldots}$ is a bounded sequence, we have
  \begin{equation}\label{}
    \lim_{k\rightarrow \infty} u^{k} = u^{\ast}.
  \end{equation}
  This completes the proof of the Main Theorem \ref{main_theorem}.

\end{proof}


%

%
%

\ifCLASSOPTIONcaptionsoff
  \newpage
\fi

\end{document}


%
\title{Appendix}
%
%
%
%

\author{Yuan~Xie,~\IEEEmembership{Member,~IEEE,}
        Wensheng~Zhang,
        Dacheng~Tao,~\IEEEmembership{Senior~Member,~IEEE,}
        Wenrui~Hu,
        Yanyun~Qu,
        Hanzi~Wang,~\IEEEmembership{Senior~Member,~IEEE,}
\IEEEcompsocitemizethanks{}}

%
%

\markboth{SUBMIT TO IEEE TRANSACTIONS ON PATTERN ANALYSIS AND MACHINE INTELLIGENCE}%
{Shell \MakeLowercase{\textit{et al.}}: Bare Demo of IEEEtran.cls for Computer Society Journals}
%




\maketitle

\IEEEdisplaynotcompsoctitleabstractindextext

%
\IEEEpeerreviewmaketitle

\section{Additional Experiments Results}

\begin{figure*}[htbp]
\setlength{\abovecaptionskip}{0pt}  
\setlength{\belowcaptionskip}{0pt} 
\renewcommand{\figurename}{Figure}
\centering
\fbox{\includegraphics[width=\textwidth]{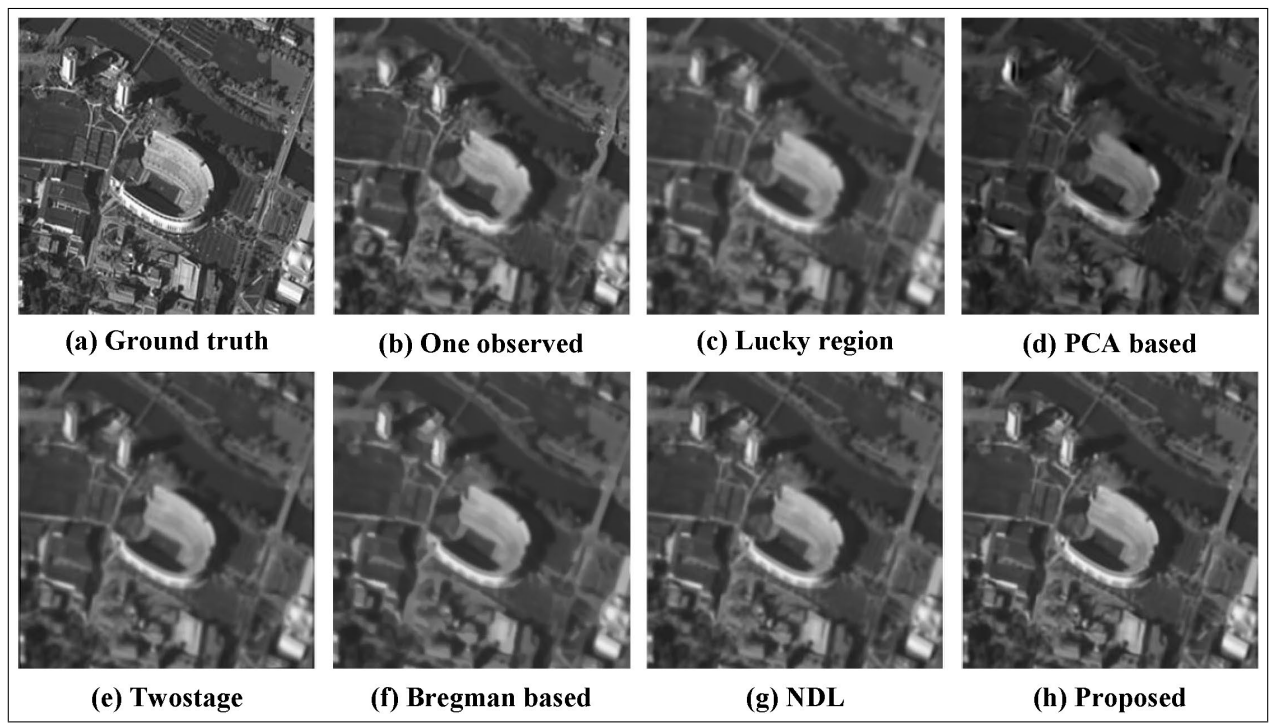}}
\caption{Image restoration results on simulated Arch sequence.}
\label{fig:arch_compare}
\end{figure*}

Another test sequence watertower shows the effect of severe turbulence during a hot summer day \footnote{This video sequence can be downloaded from the website http://www.iol.umd.edu/Movies/alot.php.}. As shown in Fig. \ref{fig:watertower_compare} (a) and (b), each frame ($320\times 240$) of the video is quite noisy and highly blurred since the video is captured by long exposure. $100$ frames are taken from the original video ($285$ frames), so the video for test is $320\times240\times100$. The lucky region, Twostage and BNLTV achieve similar results where the deformation and noise are well removed but severe blur still exists (Figs. \ref{fig:watertower_compare} (c), (e) and (f)). Only the NDL and the proposed method can give significantly improvement in visual quality. However, compared with NDL (Fig. \ref{fig:watertower_compare} (g)), the proposed method has done a bit better in handling the diffraction-limited blur (Fig. \ref{fig:watertower_compare} (h)). Since we do not have the latent sharp image of watertower sequence, the quantitative evaluation can not be provided.

\begin{figure*}[!htbp]
\setlength{\abovecaptionskip}{0pt}  
\setlength{\belowcaptionskip}{0pt} 
\renewcommand{\figurename}{Figure}
\centering
\fbox{\includegraphics[width=0.85\textwidth]{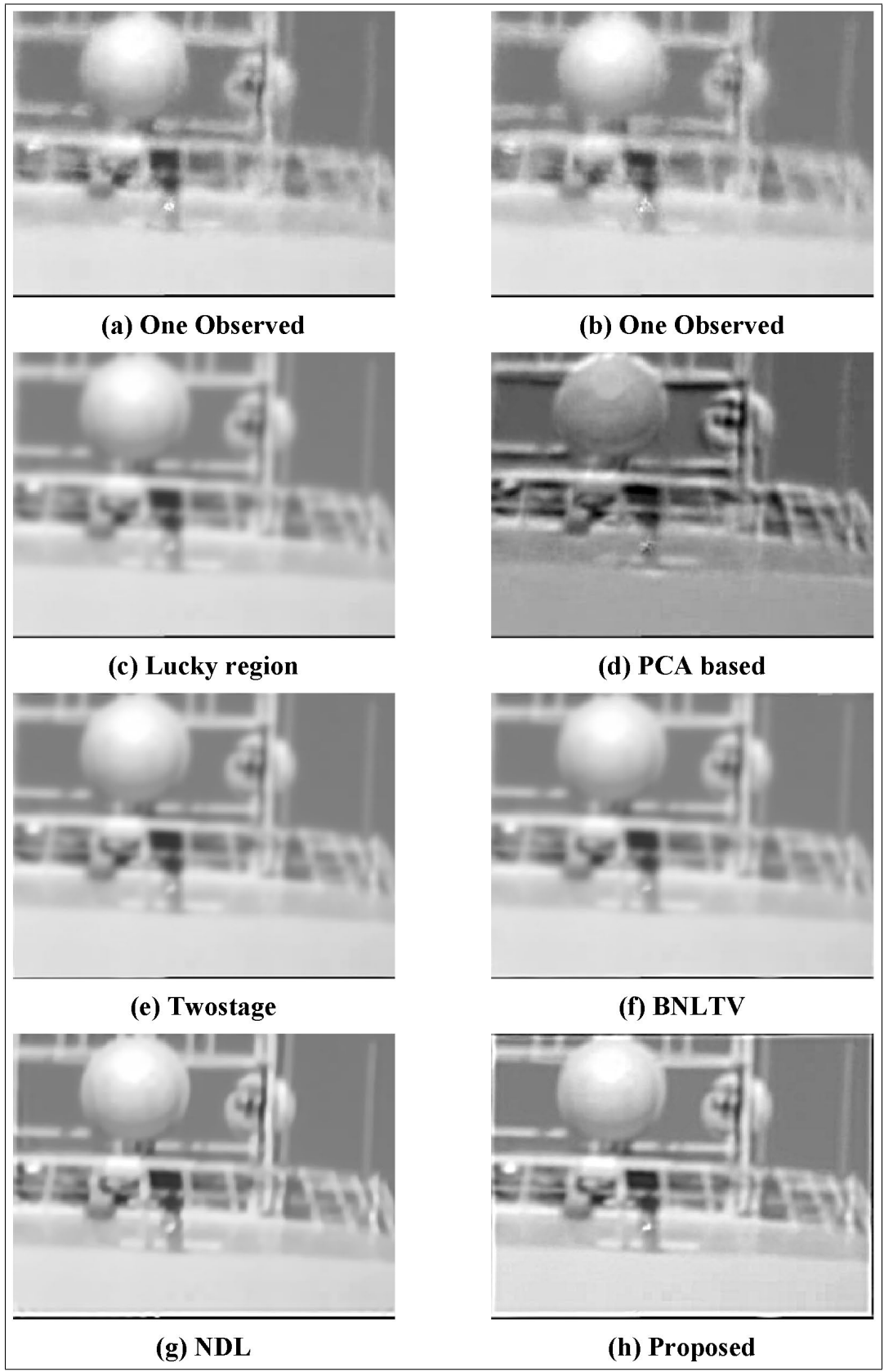}}
\caption{Image restoration results on Watertower sequence.}
\label{fig:watertower_compare}
\end{figure*}

\begin{figure*}[!htbp]
\setlength{\abovecaptionskip}{0pt}  
\setlength{\belowcaptionskip}{0pt} 
\renewcommand{\figurename}{Figure}
\centering
\fbox{\includegraphics[width=\textwidth]{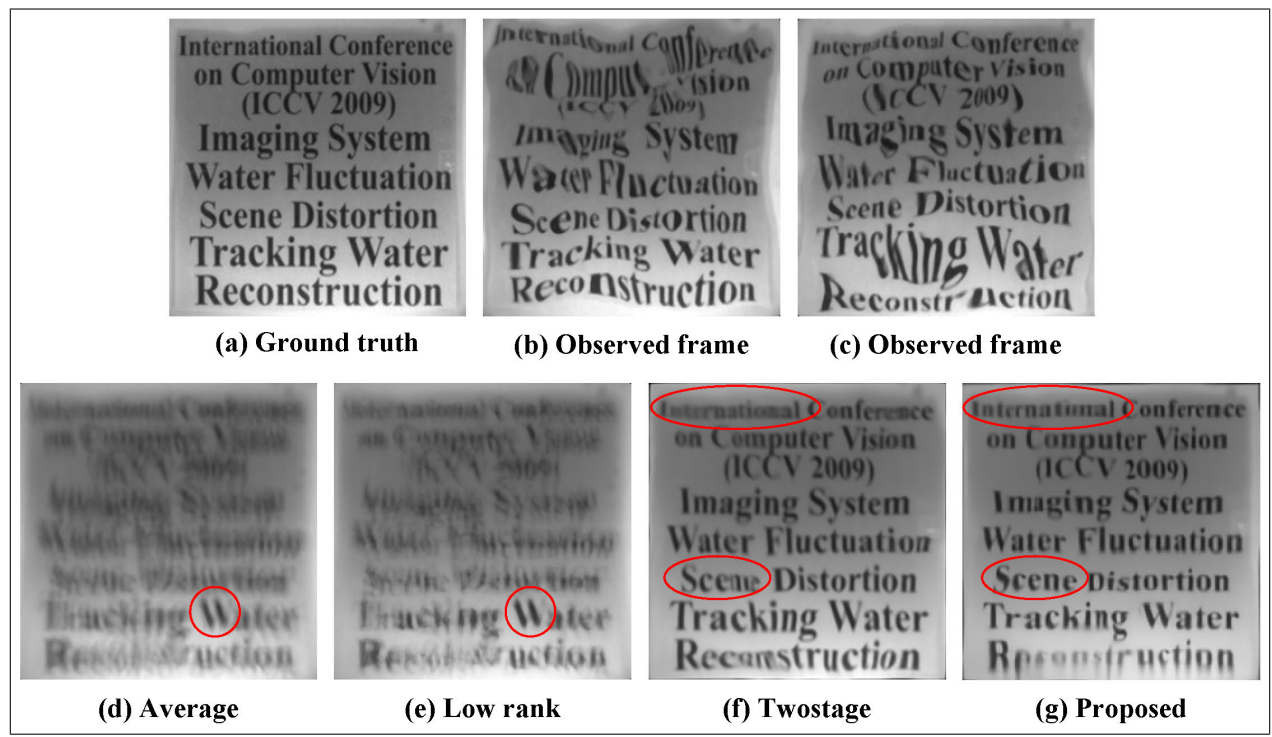}}
\caption{Image restoration results on standard underwater sequence {\it Water\_iccv} from \cite{tianyuandong}.}
\label{fig:water_iccv}
\end{figure*}

Additionally, we will discuss the performance of the proposed restoration method when the medium is changed to water. As suggested in \cite{xzhu2}, compared with atmospheric turbulence, the geometric warping effect produced by water is much stronger while the blur effect is relatively milder. Therefore, we apply the proposed method to recover the original image of an underwater scene to test whether it can handle the large geometric deformation. The Twostage method is originally designed for seeing through water, so we compare the performance between the Twostage and our method in one standard underwater sequence ($256\times 256\times 61$) from \cite{tianyuandong}. Fig. \ref{fig:water_iccv} shows the results of the two methods, where the Figs. (d) and (e) compare the quality of the reference images, and the Figs. (f) and (g) compare the visual quality of the final outputs. The circular regions in Figs. (d) and (e) demonstrate the low-rank decomposed reference image is a bit shaper than the average image. As is shown in Fig. \ref{fig:water_iccv} (g), it is clear that all the characters contained in the scene can be easily recognized except for few characters of the word ``Reconstruction'' on the lower boundary of the image. This simple test demonstrates the proposed method also can be used to tackle the {\it seeing through water} problem. However, it is still necessary to adapt the proposed method to the water case, this is our future direction.

\section{Appendix}		

\subsection{Total Variation and Nonlocal Total Variation Regularizer}\label{TVNLTV}
\paragraph{Total Variation Regularizer}
In the proposed restoration model, we impose a total variation regularizer on the difference between the current restored image $u$ and the restored $u_p$ in the previous iterative step. In other words, such regularizer will promote the sparsity in temporal domain in order to provide a smooth motion pattern between the consecutive restoration steps.
\begin{equation}\label{Jn}
    J_{t}(u) = |(u - u_p)|_{TV} = \| \nabla_{x}(u-u_p) \|_{1} + \| \nabla_{y}(u-u_p) \|_{1}
\end{equation}
where the subscript $x$ and $y$ represent the vertical and the horizontal component of the image domain. The difference operator $\nabla_{x}$ is given by $\nabla_{x} u(1,j)=0$ for $j=1,\ldots,N$ and
\begin{equation*}\label{}
    \nabla_{x}u(i,j) = u(i,j)-u(i-1,j), \text{ }i=2,\ldots,N,\text{ }j=1,\ldots,N.
\end{equation*}
Similarly, $\nabla_{y}$ is the difference operator given by $\nabla_{y}u(i,1)=0$ for $i=1,\ldots,N$ and
\begin{equation*}\label{}
    \nabla_{y}u(i,j) = u(i,j)-u(i,j-1), \text{ }i=1,\ldots,N,\text{ }j=2,\ldots,N.
\end{equation*}
Here, we define the conjugate operators of $\nabla_{x}$ and $\nabla_{y}$ respectively, they are $\nabla_{x}^{T}$ and $\nabla_{y}^{T}$.
\begin{equation}\label{}
    \nabla_{x}^{T} =
    \begin{cases}{}
    -u(2,j) & \text{if } i=1 \\
    u(i,j)-u(i+1,j) & \text{if } i=2,\ldots,N-1\\
    u(N,j)  & \text{if } i=N \\
 \end{cases}
\end{equation}
Similarly, the linear operator $\nabla_{y}^{T}$ is given by
\begin{equation}\label{}
    \nabla_{y}^{T} =
    \begin{cases}{}
    -u(i,2) & \text{if } j=1 \\
    u(i,j)-u(i,j+1) & \text{if } j=2,\ldots,N-1\\
    u(i,N)  & \text{if } j=N \\
 \end{cases}
\end{equation}
Let's define the discrete Laplace operator $\Delta := -\nabla_{x}^{T}\nabla_{x}- \nabla_{y}^{T}\nabla_{y}$. For $1<i,j<N$,
\begin{equation}\label{}
    -\Delta u(i,j) =
    \begin{cases}{}
    2u(1,1)-u(1,2)-u(2,1) & \text{if } i=1,j=1 \\
    3u(1,j)-u(2,j)-u(1,j-1)-u(1,j+1) & \text{if } i=1,1<j<N\\
    3u(i,1)-u(i,2)-u(i-1,1)-u(i+1,1) & \text{if } 1<i<N, j=1\\
    4u(i,j)-u(i+1,j)-u(i-1,j)-u(i,j+1)-u(i,j-1) & \text{if } 1<i<N, 1<j<N\\
 \end{cases}
\end{equation}

\paragraph{Nonlocal Total Variation Regularizer}
Suppose the digital image model by a graph $(\Omega, E)$, where $\Omega$ is a finite set of N nodes (pixels), $E$ is the set of edges. The notation $x\sim y$ is used to denote the edge between the nodes $x$ and $y$. An image $u$ is a function defined on $\Omega$, which can be represented by a column vector, then the value at node $x$ can be denoted by $u(x)$. In the following, we consider a weight function $w(x,y)$ for the edge $x\sim y \in E$. The weight function is symmetric and can be set to $0$ if two nodes $x$ and $y$ are not connected. In this case, unlike classical total variation, nodes of NLTV may directly interact with nodes that are not neighbors. That is why it called ``nonlocal''.

For a given image $u(x)$ defined on $\Omega$, the weight graph gradient $\nabla_{w} u(x)$ is defined as the vector of all directional derivatives (or edge derivative) $\nabla_{w} u(x,\cdot)$ at $x$:
\begin{equation*}\label{}
    \nabla_{w} u(x) := (\nabla_{w} u(x,y))_{y\in \Omega}
\end{equation*}
where
\begin{equation*}\label{}
    \nabla_{w} u(x,y) := (u(y) - u(x))\sqrt{w(x,y)},  \forall y\in \Omega
\end{equation*}
The directional derivatives apply to all the nodes $y$ since the weight $w(x,y)$ is extended to the whole domain $\Omega\times \Omega$. A graph divergence $div_w$ of a vector $p: \Omega\times \Omega \rightarrow R$ can be defined as follows:
\begin{equation*}\label{}
    div_{w}p(x) = \sum_{y\in \Omega} (p(x,y) - p(y,x))\sqrt{w(x,y)}
\end{equation*}
Then, the graph Laplacian is defined by:
\begin{equation*}\label{}
    \Delta_{w} u(x) := \frac{1}{2}div_{w}(\nabla_{w} u(x)) = \sum_{y\in \Omega} (u(y) - u(x))w(x,y)
\end{equation*}
Using the notations above, the nonlocal total variation can be defined as follows:
\begin{equation}\label{nltv}
    J_{s}(u) = J_{NLTV}(u) := \sum_{x\in \Omega} |\nabla_{w} u(x)| = \sum_{x\in \Omega} \sqrt{\sum_{y\in \Omega} (u(x) - u(y))^{2} w(x,y)}
\end{equation}

\subsection{Solve the Mixed-ROF Model Using Split Bregman}\label{spb_mixed_rof}

The optimization problem can be expanded as follows:
\begin{equation}\label{split_bregman_rof1}
    \min_{u} \mu_{1} |\nabla_{w} u|_{1} + \mu_{2} \Big(|\nabla_{x} (u-u_{p})|_{1} + |\nabla_{y} (u-u_{p})|_{1}\Big) + \frac{1}{2} \|u - v\|_{2}^{2}
\end{equation}
To apply Bregman Splitting, we first replace $\nabla_{w} u$ by $d_w$, $\nabla_{x} (u-u_p)$ by $d_x$ and $\nabla_{y} (u-u_p)$ by $d_y$. This yields a constrained problem:
\begin{equation*}\label{}
    \min_{u} \mu_1 |d_w| + \mu_2 |d_x| + \mu_2 |d_y| + \frac{1}{2} \|u - v\|_{2}^{2}, \text{ such that } d_w = \nabla_{w} u, d_x = \nabla_{x}(u-u_p) \text{ and } d_y = \nabla_{y} (u-u_p)
\end{equation*}
Then, convert the above problem to the unconstrained problem:
\begin{equation*}\label{}
    \min_{u,d_w,d_x,d_y} \mu_1 |d_w| + \mu_2 |d_x| + \mu_2 |d_y| + \frac{1}{2} \|u - v\|_{2}^{2} +
    \frac{\lambda_1}{2} \|d_w - \nabla_{w}u\|_{2}^{2} + \frac{\lambda_2}{2} \|d_x - \nabla_{x}(u-u_p)\|_{2}^{2} +
    \frac{\lambda_2}{2} \|d_y - \nabla_{y}(u-u_p)\|_{2}^{2}
\end{equation*}
By applying the Bregman iteration:
\begin{equation}\label{mixed_sb}
    \min_{u,d_w,d_x,d_y} \mu_1 |d_w| + \mu_2 |d_x| + \mu_2 |d_y| + \frac{1}{2} \|u - v\|_{2}^{2} +
    \frac{\lambda_1}{2} \|d_w - \nabla_{w}u - b_{w}^{k}\|_{2}^{2} + \frac{\lambda_2}{2} \|d_x - \nabla_{x}(u - u_p) -b_{x}^{k}\|_{2}^{2} +
    \frac{\lambda_2}{2} \|d_y - \nabla_{y} (u - u_p) - b_{y}^{k}\|_{2}^{2}
\end{equation}
Then the above problem is equivalent to
\begin{equation}\label{mixed_sb1}
\begin{aligned}
    (u^{k+1}, d_{w}^{k+1}, d_{x}^{k+1}, d_{y}^{k+1})& = \argmin_{u,d_w,d_x,d_y} \mu_1 |d_w| + \mu_2 |d_x| + \mu_2 |d_y| + \frac{1}{2} \|u - v\|_{2}^{2} + \frac{\lambda_1}{2} \|d_w - \nabla_{w}u - b_{w}^{k}\|_{2}^{2}\\
    &+ \frac{\lambda_2}{2} \|d_x - \nabla_{x}(u - u_p) -b_{x}^{k}\|_{2}^{2} +
    \frac{\lambda_2}{2} \|d_y - \nabla_{y} (u - u_p) - b_{y}^{k}\|_{2}^{2}\\
    b_{w}^{k+1}& = b_{w}^{k} + \nabla_{w}u^{k+1} - d_{w}^{k+1}\\
    b_{x}^{k+1}& = b_{x}^{k} + \nabla_{x}u^{k+1} - \nabla_{x}u_p - d_{x}^{k+1}\\
    b_{y}^{k+1}& = b_{y}^{k} + \nabla_{y}u^{k+1} - \nabla_{y}u_p - d_{y}^{k+1}\\
\end{aligned}
\end{equation}
The solution of (\ref{mixed_sb1}) is obtained by performing an alternative minimizing process:
\begin{align}\label{}
    u^{k+1} = &\argmin_{u} \frac{1}{2} \|u - v\|_{2}^{2} + \frac{\lambda_1}{2} \|d_w - \nabla_{w}u - b_{w}^{k}\|_{2}^{2} \label{sub_u}\\
    &+ \frac{\lambda_2}{2} \|d_x - \nabla_{x}(u - u_p) -b_{x}^{k}\|_{2}^{2} +
    \frac{\lambda}{2} \|d_y - \nabla_{y} (u - u_p) - b_{y}^{k}\|_{2}^{2} \notag \\
    d_{w}^{k+1} = &\argmin_{d_w} \mu_1 |d_w| + \frac{\lambda_1}{2} \|d_w - \nabla_{w}u^{k+1} - b_{w}^{k}\|_{2}^{2}\\
    d_{x}^{k+1} = &\argmin_{d_x} \mu_2 |d_x| + \frac{\lambda_2}{2} \|d_y - \nabla_{x}(u^{k+1} - u_p) - b_{x}^{k}\|_{2}^{2}\\
    d_{y}^{k+1} = &\argmin_{d_y} \mu_2 |d_y| + \frac{\lambda_2}{2} \|d_y - \nabla_{y}(u^{k+1} - u_p) - b_{y}^{k}\|_{2}^{2}
\end{align}
Now, the subproblem (\ref{sub_u}) for $u^{k+1}$ consists in solving the linear system:

\begin{equation}\label{}
\begin{aligned}
   (I - 2\lambda_1 \Delta_{w} - \lambda_2 \Delta)u^{k+1} = &v + \lambda_1 \text{div}_{w} (b_{w}^{k} - d_{w}^{k}) + \lambda_2 \nabla_{x}^{T} (d_{x}^{k} + \nabla_{x} u_p - b_{y}^{k})\\
   &+ \lambda_2 \nabla_{y}^{T}(d_{y}^{k} + \nabla_{y} u_p - b_{y}^{k})
\end{aligned}
\end{equation}

\begin{equation}\label{split_bregman_iter1}
\begin{aligned}
   u^{k+1} = &(I - 2\lambda_1 \Delta_{w} - \lambda_2\Delta)^{-1} (v + \lambda_1 \text{div}_{w} (b_{w}^{k} - d_{w}^{k}) + \lambda_2 \nabla_{x}^{T} (d_{x}^{k} + \nabla_{x} u_p - b_{y}^{k})\\
   &+ \lambda_2 \nabla_{y}^{T}(d_{y}^{k} + \nabla_{y} u_p - b_{y}^{k}))
\end{aligned}
\end{equation}

\begin{align}\label{}
   &d_{w}^{k+1} = shrink\Big( \nabla_{w}u^{k+1} + b_{w}^{k}, \frac{\mu_1}{\lambda_1}\Big)\\
   &d_{x}^{k+1} = shrink\Big( \nabla_{x}u^{k+1} - \nabla_{x}u_p + b_{x}^{k}, \frac{\mu_2}{\lambda_2}\Big)\\
   &d_{y}^{k+1} = shrink\Big( \nabla_{y}u^{k+1} - \nabla_{y}u_p + b_{y}^{k}, \frac{\mu_2}{\lambda_2}\Big)
\end{align}


%
%

\subsection{Solve the Mixed-ROF Model without PDE}\label{spb_mixed_rof_nopde}
We will propose a new iteration schema without involving PDE computing in order to find the unique solution $u^{\ast}$ for the minimization problem:
\begin{equation}\label{split_bregman_rof1}
    \min_{u} \mu_1|\nabla_{w} u|_{1} + \mu_2 |\nabla_{x} (u-u_p)|_{1} + \mu_2 |\nabla_{y} (u-u_p)|_{1} + \frac{1}{2} \|u - v\|_{2}^{2}
\end{equation}
Let $b_{w}^{0} = 0, b_{x}^{0} = 0, b_{y}^{0} = 0$ and $u^{1} = v$, for $k=1,2,\ldots$, the iteration is following:
\begin{align}\label{}
   &b_{w}^{k} = cut(\nabla_{w} u^{k} + b_{w}^{k-1}, \frac{\mu_1}{\lambda_1})\label{new_step1}\\
   &b_{x}^{k} = cut(\nabla_{x} u^{k} - \nabla_{x} u_p + b_{x}^{k-1}, \frac{\mu_2}{\lambda_2})\label{new_step2}\\
   &b_{y}^{k} = cut(\nabla_{y} u^{k} - \nabla_{y} u_p + b_{y}^{k-1}, \frac{\mu_2}{\lambda_2})\label{new_step3}\\
   &u^{k+1} = v + \frac{\lambda_1}{\mu_1} div_{w} b_{w}^{k} - \frac{\lambda_2}{\mu_2} (\nabla_{x}^{T} b_{x}^{k} + \nabla_{y}^{T} b_{y}^{k}) \label{new_step4}
\end{align}
where the function $cut(\cdot)$ defined as follows:
\begin{equation}\label{}
    cut(c, 1/\lambda) =
    \begin{cases}{}
    1/\lambda & \text{for } c>1/\lambda \\
    c & \text{for } -1/\lambda \leq c \leq 1/\lambda\\
    -1/\lambda  & \text{for } c<-1/\lambda \\
 \end{cases}
\end{equation}

\begin{lemma}\label{lemma1}
  Given $0< 20\lambda_1 + 4\lambda_2 <1$, the real symmetric linear operator $I + 2\lambda_1\Delta_{w} + \lambda_2\Delta$ is positive definite.
\end{lemma}
\begin{proof}
    Suppose that $\eta$ is an eigenvalue of the operator $I + 2\lambda_1\Delta_{w} + \lambda_2\Delta$. Then, we want to validate $\eta>0$ when $0< 20\lambda_1 + 4\lambda_2 <1$. There exists a nonzero vector $u\in \mathbb{R}^{N^2}$ such that $(I + 2\lambda_1\Delta_{w} + \lambda_2\Delta)u = \eta u$. It follows that $(I - \eta)u = -2\lambda_1\Delta_{w}u - \lambda_2 \Delta u$. Let $m = \|u\|_{\infty} = \max_{1\leq i,j \leq N}|u(i,j)|$. There exist $i_0, j_0\in \{1,\ldots,N\}$ such that $|u(i_0,j_0)| = m$. If $\eta\leq 0 $, then $|(1 - \eta)u(i_0,j_0)|\geq m$. On the other hand, $|-2\lambda_1\Delta_{w}u(i_0,j_0) - \lambda_2 \Delta u(i_0,j_0))|\leq (20\lambda_1 + 4\lambda_2)m$ (In practice, we choose $10$ neighbors for NLTV and $4$ neighbors for TV in image domain). Hence, $m \leq (20\lambda_1 + 4\lambda_2)m$. Since $0< 20\lambda_1 + 4\lambda_2 <1$, it is obvious that $m=0$ which means $u=0$. Therefore, any eigenvalue of $(\mu + 2\lambda\Delta_{w} + \lambda\Delta)$ is positive. The $I + 2\lambda_1\Delta_{w} + \lambda_2\Delta$ is positive definite.
\end{proof}


We would demonstrate that the algorithm given by (\ref{new_step1}), (\ref{new_step2}), (\ref{new_step3}) and (\ref{new_step4}) has the equivalent formulation described as follows:

Let $d_{w}^{k} = d_{x}^{k} = d_{y}^{k} = 0$, $b_{w}^{k} = b_{x}^{k} = b_{y}^{k} = 0$, $u^{1} = v$. For $k = 1,2,\ldots$, let
\begin{align}
   d_{w}^{k} &= \argmin_{d_w} \{ \mu_1\|d_w\|_{1} + \frac{\lambda_1}{2} \|d_w - \nabla_{w}u^{k} - b_{w}^{k-1}\| \} \label{equiv_problem1}\\
   d_{x}^{k} &= \argmin_{d_x} \{ \mu_2\|d_x\|_{1} + \frac{\lambda_2}{2} \|d_x - \nabla_{x} (u^{k} - u_p) - b_{x}^{k-1}\| \} \label{equiv_problem2}\\
   d_{y}^{k} &= \argmin_{d_y} \{ \mu_2\|d_y\|_{1} + \frac{\lambda_2}{2} \|d_y - \nabla_{y} (u^{k} - u_p) - b_{y}^{k-1}\| \} \label{equiv_problem3}\\
   b_{w}^{k} &= \nabla_{w} u^{k} + b_{w}^{k-1} - d_{w}^{k} \label{equiv_problem4}\\
   b_{x}^{k} &= \nabla_{x} u^{k} - \nabla_{x} u_p + b_{x}^{k-1} - d_{x}^{k} \label{equiv_problem5}\\
   b_{y}^{k} &= \nabla_{y} u^{k} - \nabla_{y} u_p + b_{y}^{k-1} - d_{y}^{k} \label{equiv_problem6}\\
   u^{k+1} = &\argmin_{u} \{ \frac{1}{2}\|B(u-v)\|_{2}^{2} - \langle B^{2}(u^{k} - v), u-u^{k}\rangle + \frac{\lambda_1}{2\mu_1}\|d_{w}^{k} - \nabla_{w}u\|_{2}^{2} \label{equiv_problem7}\\
   \notag &+ \frac{\lambda_2}{2\mu_2}\|d_{x}^{k} - \nabla_{x}u + \nabla_{x}u_p\|_{2}^{2} + \frac{\lambda_2}{2\mu_2}\|d_{y}^{k} - \nabla_{y}u + \nabla_{y}u_p\| \}
\end{align}
where $B = I + 2\lambda_1\Delta_{w} + \lambda_2\Delta$. Actually, the value of $u_p$ depends on the iteration $k$ rather than the the value of $u$.

\begin{lemma}\label{lemma2}
  For $k=1,2,\ldots$, let $d_{w}^{k}, d_{x}^{k}, d_{y}^{k}, b_{w}^{k}, b_{x}^{k},b_{y}^{k}$ and $u^{k+1}$ be given by the iteration (\ref{new_step1}) to (\ref{new_step4}). Then $\lim_{k\rightarrow \infty} (u^{k+1} - u^{k}) = 0$, and
  \begin{align*}\label{}
    &b_{w}^{k} = cut(\nabla_{w} u^{k} + b_{w}^{k-1}, \frac{\mu_1}{\lambda_1})\\
    &b_{x}^{k} = cut(\nabla_{x} u^{k} - \nabla_{x} u_p + b_{x}^{k-1}, \frac{\mu_2}{\lambda_2})\\
    &b_{y}^{k} = cut(\nabla_{y} u^{k} - \nabla_{y} u_p + b_{y}^{k-1}, \frac{\mu_2}{\lambda_2})
  \end{align*}
\end{lemma}

\begin{proof}
    According to (\ref{equiv_problem1}) $\sim$ (\ref{equiv_problem3}), we can obtain the following:
    \begin{align}\label{}
    &d_{w}^{k} = shrink\Big( \nabla_{w}u^{k} + b_{w}^{k-1}, \frac{\mu_1}{\lambda_1}\Big) \label{shrink1}\\
    &d_{x}^{k} = shrink\Big( \nabla_{x}u^{k} - \nabla_{x}u_p + b_{x}^{k-1}, \frac{\mu_2}{\lambda_2}\Big)\label{shrink2}\\
    &d_{y}^{k} = shrink\Big( \nabla_{y}u^{k} - \nabla_{y}u_p + b_{y}^{k-1}, \frac{\mu_2}{\lambda_2}\Big)\label{shrink3}
    \end{align}
    Combining the (\ref{shrink1}) with (\ref{equiv_problem4}):
    \begin{align*}\label{}
        b_{w}^{k} &= \nabla_{w} u^{k} + b_{w}^{k-1} - d_{w}^{k} \\
        &= \nabla_{w} u^{k} + b_{w}^{k-1} - shrink(\nabla_{w} u^{k} + b_{w}^{k-1}, \frac{\mu_1}{\lambda_1})\\
        &= cut(\nabla_{w} u^{k} + b_{w}^{k-1}, \frac{\mu_1}{\lambda_1})
    \end{align*}
    Similarly, we can get the Eqn.\ref{new_step2} and \ref{new_step3}. Therefore, $\|b_{w}^{k}\|_{\infty} \leq \mu_1/\lambda_1$, $\|b_{x}^{k}\|_{\infty} \leq \mu_2/\lambda_2$ and $\|b_{y}^{k}\|_{\infty} \leq \mu_2/\lambda_2$ for $k=1,2,\ldots$.

    Suppose $G(d):=\|d\|_{1}$ for $d\in \mathbb{R}^{N^{2}}$. let $g_{w}^{k} = \frac{\lambda_1}{\mu_1} b_{w}^{k}$, $g_{x}^{k} = \frac{\lambda_2}{\mu_2} b_{x}^{k}$ and $g_{y}^{k} = \frac{\lambda_2}{\mu_2} b_{y}^{k}$. From the (\ref{equiv_problem1}) and (\ref{equiv_problem4}), we can get
    \begin{equation*}\label{}
        g_{w}^{k} = g_{w}^{k-1} - \frac{\lambda_1}{\mu_1} (d_{w}^{k} - \nabla_{w}u^{k}) = -\frac{\lambda_1}{\mu_1}(d_{w}^{k} - \nabla_{w}u^{k} - b_{w}^{k-1})\in \partial G(d_{w}^{k})
    \end{equation*}
    So, $g_{w}^{k} - \frac{\lambda_1}{\mu_1} (d_{w}^{k+1} - \nabla_{w}u^{k+1})\in \partial G(d_{w}^{k+1})$ and
    \begin{equation}\label{proof_2_1}
        d_{w}^{k+1} = \argmin_{d_{w}} \{ \|d_{w}\|_{1} - \langle g_{w}^{k}, d_{w} - d_{w}^{k} \rangle + \frac{\lambda_1}{2\mu_1} \| d_{w} - \nabla_{w} u^{k+1} \|_{2}^{2} \}
    \end{equation}
    The same to
    \begin{align}
        d_{x}^{k+1} = \argmin_{d_{x}} \{ \|d_{x}\|_{1} - \langle g_{x}^{k}, d_{x} - d_{x}^{k} \rangle + \frac{\lambda_2}{2\mu_2} \| d_{x} - \nabla_{x} u^{k+1} + \nabla_{x} u_p \|_{2}^{2} \} \label{proof_2_2}\\
        d_{y}^{k+1} = \argmin_{d_{y}} \{ \|d_{y}\|_{1} - \langle g_{y}^{k}, d_{y} - d_{y}^{k} \rangle + \frac{\lambda_2}{2\mu_2} \| d_{y} - \nabla_{y} u^{k+1} + \nabla_{y} u_p \|_{2}^{2} \} \label{proof_2_3}
    \end{align}
    It follows from (\ref{proof_2_1}), (\ref{proof_2_2}) and (\ref{proof_2_3}) that
    \begin{align*}
        \|d_{w}^{k+1}\|_{1} - \langle g_{w}^{k}, d_{w}^{k+1} - d_{w}^{k} \rangle + \frac{\lambda_1}{2\mu_1} \| d_{w}^{k+1} - \nabla_{w} u^{k+1} \|_{2}^{2} \leq \|d_{w}^{k}\|_{1} + \frac{\lambda_1}{2\mu_1} \| d_{w}^{k} - \nabla_{w} u^{k+1} \|_{2}^{2} \\
        \|d_{x}^{k+1}\|_{1} - \langle g_{x}^{k}, d_{x}^{k+1} - d_{x}^{k} \rangle + \frac{\lambda_2}{2\mu_2} \| d_{x}^{k+1} - \nabla_{x} u^{k+1} + \nabla_{x} u_p \|_{2}^{2} \leq \|d_{x}^{k}\|_{1} + \frac{\lambda_2}{2\mu_2} \| d_{x}^{k} - \nabla_{x} u^{k+1} + \nabla_{x} u_p \|_{2}^{2}\\
        \|d_{y}^{k+1}\|_{1} - \langle g_{y}^{k}, d_{y}^{k+1} - d_{y}^{k} \rangle + \frac{\lambda_2}{2\mu_2} \| d_{y}^{k+1} - \nabla_{y} u^{k+1} + \nabla_{y} u_p \|_{2}^{2} \leq \|d_{y}^{k}\|_{1} + \frac{\lambda_2}{2\mu_2} \| d_{y}^{k} - \nabla_{y} u^{k+1} + \nabla_{y} u_p \|_{2}^{2}
    \end{align*}
    Since $g_{w}^{k}\in \partial G(d_{w}^{k})$, $g_{x}^{k}\in \partial G(d_{x}^{k})$ and $g_{y}^{k}\in \partial G(d_{y}^{k})$, by the definition of the Bregman Distance we can get
    \begin{align}
        \frac{\lambda_1}{2\mu_1} \| d_{w}^{k+1} - \nabla_{w} u^{k+1} \|_{2}^{2} \leq \frac{\lambda_1}{2\mu_1} \| d_{w}^{k} - \nabla_{w} u^{k+1} \|_{2}^{2} \label{proof_2_4} \\
        \frac{\lambda_2}{2\mu_2} \| d_{x}^{k+1} - \nabla_{x} u^{k+1} + \nabla_{x} u_p \|_{2}^{2} \leq \frac{\lambda_2}{2\mu_2} \| d_{x}^{k} - \nabla_{x} u^{k+1} + \nabla_{x} u_p \|_{2}^{2}\label{proof_2_5}\\
        \frac{\lambda_2}{2\mu_2} \| d_{y}^{k+1} - \nabla_{y} u^{k+1} + \nabla_{y} u_p \|_{2}^{2} \leq \frac{\lambda_2}{2\mu_2} \| d_{y}^{k} - \nabla_{y} u^{k+1} + \nabla_{y} u_p \|_{2}^{2} \label{proof_2_6}
    \end{align}
    By (\ref{equiv_problem7}) we achieve the following inequality
    \begin{align*}
        &\frac{1}{2}\|B(u^{k+1}-v)\|_{2}^{2} - \langle B^{2}(u^{k} - v), u^{k+1}-u^{k}\rangle + \frac{\lambda_1}{2\mu_1}\|d_{w}^{k} - \nabla_{w}u^{k+1}\|_{2}^{2} + \frac{\lambda_2}{2\mu_2}\|d_{x}^{k} - \nabla_{x}u^{k+1} + \nabla_{x}u_p\|_{2}^{2} \\
        &+\frac{\lambda_2}{2\mu_2}\|d_{y}^{k} - \nabla_{y}u^{k+1} + \nabla_{y}u_p\|_{2}^{2} \leq \frac{1}{2}\|B(u-v)\|_{2}^{2} - \langle B^{2}(u^{k} - v), u-u^{k}\rangle + \frac{\lambda_1}{2\mu_1}\|d_{w}^{k} - \nabla_{w}u\|_{2}^{2} \\
        &+ \frac{\lambda_2}{2\mu_2}\|d_{x}^{k} - \nabla_{x}u + \nabla_{x}u_p\|_{2}^{2} +\frac{\lambda_2}{2\mu_2}\|d_{y}^{k} - \nabla_{y}u + \nabla_{y}u_p\|_{2}^{2}
    \end{align*}
    Choosing $u = u^{k}$ and replacing the $u_p$ according to $u$ in the above inequality, we obtain
    \begin{align*}
        &\frac{1}{2}\|B(u^{k+1}-v)\|_{2}^{2} - \langle B(u^{k} - v), B(u^{k+1}-u^{k})\rangle + \frac{\lambda_1}{2\mu_1}\|d_{w}^{k} - \nabla_{w}u^{k+1}\|_{2}^{2} + \frac{\lambda_2}{2\mu_2}\|d_{x}^{k} - \nabla_{x}u^{k+1} + \nabla_{x}u^{k}\|_{2}^{2} \\
        &+\frac{\lambda_2}{2\mu_2}\|d_{y}^{k} - \nabla_{y}u^{k+1} + \nabla_{y}u^{k}\|_{2}^{2} \leq \frac{1}{2}\|B(u^{k}-v)\|_{2}^{2} + \frac{\lambda_1}{2\mu_1}\|d_{w}^{k} - \nabla_{w}u^{k}\|_{2}^{2} \\
        &+ \frac{\lambda_2}{2\mu_2}\|d_{x}^{k} - \nabla_{x}u^{k} + \nabla_{x}u^{k-1}\|_{2}^{2} +\frac{\lambda_2}{2\mu_2}\|d_{y}^{k} - \nabla_{y}u^{k} + \nabla_{y}u^{k-1}\|_{2}^{2}
    \end{align*}
    Since
    \begin{align*}
        \frac{1}{2}\|B(u^{k+1}-v)\|_{2}^{2} - \frac{1}{2}\|B(u^{k}-v)\|_{2}^{2} - \langle B(u^{k} - v), B(u^{k+1}-u^{k}) \rangle = \frac{1}{2}\|B(u^{k+1}-u^{k})\|_{2}^{2}
    \end{align*}
    Thus, we deduce that
    \begin{align*}
        &\frac{1}{2}\|B(u^{k+1}-u^{k})\|_{2}^{2} + \frac{\lambda_1}{2\mu_1}\|d_{w}^{k} - \nabla_{w}u^{k+1}\|_{2}^{2} + \frac{\lambda_2}{2\mu_2}\|d_{x}^{k} - \nabla_{x}u^{k+1} + \nabla_{x}u^{k}\|_{2}^{2} \\
        &+\frac{\lambda_2}{2\mu_2}\|d_{y}^{k} - \nabla_{y}u^{k+1} + \nabla_{y}u^{k}\|_{2}^{2} \leq \gamma_{k}, k=1,2,\ldots
    \end{align*}
    where
    \begin{align}\label{gamma}
        \gamma_{k} = \frac{\lambda_1}{2\mu_1}\|d_{w}^{k} - \nabla_{w}u^{k}\|_{2}^{2} + \frac{\lambda_2}{2\mu_2}\|d_{x}^{k} - \nabla_{x}u^{k} + \nabla_{x}u^{k-1}\|_{2}^{2}
        +\frac{\lambda_2}{2\mu_2}\|d_{y}^{k} - \nabla_{y}u^{k} + \nabla_{y}u^{k-1}\|_{2}^{2}
    \end{align}
    This together with (\ref{proof_2_4}) $\sim$ (\ref{proof_2_6}) gives:
    \begin{align}
        \frac{1}{2}\|B(u^{k+1}-u^{k})\|_{2}^{2} + \gamma_{k+1} \leq \gamma_{k}
    \end{align}
    This indicates that $\gamma_{1} \geq \gamma{2} \geq \ldots$ and $\gamma_{k} \geq 0$ for all $k$. Therefore, $\lim_{k\rightarrow \infty} \gamma_{k}$ exists. Since $B$ is positive definite, we can achieve that $\lim_{k\rightarrow \infty} (u^{k+1} - u^{k}) = 0$.
\end{proof}

\begin{lemma} \label{lemma_3}
  For $k=1,2,\ldots$, let $d_{w}^{k}, d_{x}^{k}, d_{y}^{k}, b_{w}^{k}, b_{x}^{k},b_{y}^{k}$ and $u^{k+1}$ be given by the iteration (\ref{new_step1}) to (\ref{new_step4}). Then all the sequences $(d_{w}^{k})_{k=1,2,\ldots},(d_{x}^{k})_{k=1,2,\ldots},(d_{y}^{k})_{k=1,2,\ldots},(b_{w}^{k})_{k=1,2,\ldots},(b_{x}^{k})_{k=1,2,\ldots},(b_{y}^{k})_{k=1,2,\ldots}$ and $(u^{k})_{k=1,2,\ldots}$ are bounded. Moreover,
  \begin{align*}\label{}
    u^{k+1} = v + \frac{\lambda_1}{\mu_1} div_{w} b_{w}^{k} - \frac{\lambda_2}{\mu_2} (\nabla_{x}^{T} b_{x}^{k} + \nabla_{y}^{T} b_{y}^{k})
  \end{align*}
\end{lemma}

\begin{proof}
  It is proved that $\|b_{w}^{k}\|_{\infty} \leq \mu_1/\lambda_1, \|b_{x}^{k}\|_{\infty} \leq \mu_2/\lambda_2$ and $\|b_{y}^{k}\|_{}\infty \leq \mu_2/\lambda_2$. Differentiate the right part of the (\ref{equiv_problem7}) and set to zero, we have
  \begin{align*}
        &B^{2}(u-v) - B^{2}(u^{k}-v) + \frac{\lambda_1}{\mu_1} \text{div}_{w}(d_{w}^{k} - \nabla_{w}u)\\
        &-\frac{\lambda_2}{\mu_2} \nabla_{x}^{T}(d_{x}^{k} - \nabla_{x}u + \nabla_{x}u_p)
        -\frac{\lambda_2}{\mu_2} \nabla_{y}^{T}(d_{y}^{k} - \nabla_{y}u + \nabla_{y}u_p) = 0
  \end{align*}
  Choosing $u = u^{k+1}$, the above equation can be reformulated as
  \begin{align*}
        &B^{2}(u^{k+1}-u^{k}) -2\frac{\lambda_1}{\mu_1} \Delta_{w}u^{k+1} - \frac{\lambda_2}{\mu_2} \Delta u^{k+1}
        = \frac{\lambda_2}{\mu_2} (\nabla_{x}^{T}d_{x}^{k} + \nabla_{x}^{T}d_{y}^{k}) - \frac{\lambda_1}{\mu_1} \text{div}_{w} d_{w}^{k}
        -\frac{\lambda_2}{\mu_2} \Delta u_p
  \end{align*}
  Since $B^{2} = (I+2\lambda \Delta_{w}+\lambda \Delta)$, we have
  \begin{align*}
        &(u^{k+1}-u^{k})
        = \frac{\lambda_2}{\mu_2} (\nabla_{x}^{T}d_{x}^{k} + \nabla_{x}^{T}d_{y}^{k}) + \frac{\lambda_2}{\mu_2} \Delta u^{k} - \frac{\lambda_2}{\mu_2} \Delta u_p
        - \frac{\lambda_1}{\mu_1} \text{div}_{w} d_{w}^{k} + 2\frac{\lambda_1}{\mu_1} \Delta_{w} u^{k}
  \end{align*}
  Using (\ref{equiv_problem4}) to (\ref{equiv_problem6}), we can reformulate the above equation as follows
  \begin{align*}
        (u^{k+1}-u^{k})
        = \frac{\lambda_2}{\mu_2} \nabla_{x}^{T}(b_{x}^{k-1} - b_{x}^{k}) + \frac{\lambda_2}{\mu_2} \nabla_{y}^{T}(b_{y}^{k-1} - b_{y}^{k})
        - \frac{\lambda_1}{\mu_1} \text{div}_{w} (b_{w}^{k-1} - b_{w}^{k})
  \end{align*}
  It follows that
  \begin{align*}
        \sum_{k=1}^{n} (u^{k+1}-u^{k})
        = \sum_{k=1}^{n} [\frac{\lambda_2}{\mu_2} \nabla_{x}^{T}(b_{x}^{k-1} - b_{x}^{k}) + \frac{\lambda_2}{\mu_2} \nabla_{y}^{T}(b_{y}^{k-1} - b_{y}^{k})
        - \frac{\lambda_1}{\mu_1} \text{div}_{w} (b_{w}^{k-1} - b_{w}^{k})]
  \end{align*}
  Since $u^{1} = v$, the above equation reduce to
  \begin{align*}
        u^{n+1}-v
        = -\frac{\lambda_2}{\mu_2} \nabla_{x}^{T}b_{x}^{n} - \frac{\lambda_2}{\mu_2} \nabla_{y}^{T}b_{y}^{n} + \frac{\lambda_1}{\mu_1} \text{div}_{w}b_{w}^{n}
  \end{align*}
  Hence, we can get the (\ref{new_step4}). Since $\|b_{w}^{k}\|_{\infty}, \|b_{x}^{k}\|_{\infty}$ and $\|b_{y}^{k}\|_{\infty}$ are bounded, we have the sequence $(u^{k})_{k=1,2,\ldots}$ is bounded. Moreover, by (\ref{equiv_problem4}) to (\ref{equiv_problem6}) the sequence $(d_{w}^{k})_{k=1,2,\ldots}, (d_{x}^{k})_{k=1,2,\ldots}$ and $(d_{y}^{k})_{k=1,2,\ldots}$ are also bounded.
\end{proof}

\begin{lemma}\label{lemma_4}
  For $k=1,2,\ldots$, let $d_{w}^{k}, d_{x}^{k}, d_{y}^{k}, b_{w}^{k}, b_{x}^{k},b_{y}^{k}$ and $u^{k+1}$ be given by the iteration (\ref{new_step1}) to (\ref{new_step4}). Then
  \begin{align*}\label{}
    &\lim_{k\rightarrow \infty} (d_{w}^{k} - \nabla_{w} u^{k}) = 0\\
    &\lim_{k\rightarrow \infty} (d_{x}^{k} - \nabla_{x} (u^{k} - u_p)) = 0\\
    &\lim_{k\rightarrow \infty} (d_{y}^{k} - \nabla_{y} (u^{k} - u_p)) = 0
  \end{align*}
\end{lemma}

\begin{proof}
  For $G(d_{x})=\|d\|_{1}$, since $g_{x}^{k} = \frac{\lambda_2}{\mu_2} b_{x}^{k}\in \partial G(d_{x}^{k}), k=1,2,\ldots$, we can get
  \begin{align*}\label{}
    &[G(d_{x}) - G(d_{x}^{k+1}) - \langle g_{x}^{k+1}, d_{x}-d_{x}^{k+1} \rangle]
    - [G(d_{x}) - G(d_{x}^{k}) - \langle g_{x}^{k}, d_{x}-d_{x}^{k} \rangle]\\
    &+ [G(d_{x}^{k+1}) - G(d_{x}^{k}) - \langle g_{x}^{k}, d_{x}^{k+1}-d_{x}^{k} \rangle] = \langle g_{x}^{k}-g_{x}^{k+1}, d - d_{x}^{k+1} \rangle
  \end{align*}
  Since $G(d_{x}^{k+1}) - G(d_{x}^{k}) - \langle g_{x}^{k}, d_{x}^{k+1}-d_{x}^{k}\rangle \geq 0$, we have
  \begin{equation}\label{proof_4_1}
    \langle g_{x}^{k}-g_{x}^{k+1}, d - d_{x}^{k+1} \rangle \geq -\|d_{x}^{k+1}\|_{1} - \langle g_{x}^{k+1}, d - d_{x}^{k+1} \rangle + \|d_{x}^{k}\|_{1} + \langle g_{x}^{k}, d - d_{x}^{k} \rangle
  \end{equation}
  Choosing $u_p = u^{k}$, the (\ref{equiv_problem5}) indicates that $g_{x}^{k} - g_{x}^{k+1} = \frac{\lambda_2}{\mu_2} (d_{x}^{k+1} - \nabla_{x}u^{k+1} + \nabla_{x}u^{k})$. Thus, by Bregman Distance we have
  \begin{align}\label{proof_4_2}
    \frac{\lambda_2}{2\mu_2}\| d_{x}-\nabla_{x}u^{k+1}+\nabla_{x}u^{k} \|_{2}^{2} + \frac{\lambda_2}{2\mu_2}\| d_{x}^{k+1}-\nabla_{x}u^{k+1}+\nabla_{x}u^{k} \|_{2}^{2} - \langle g_{x}^{k}-g_{x}^{k+1}, d_{x} - d_{x}^{k+1} \rangle \geq 0
  \end{align}
  From (\ref{proof_4_1}) and (\ref{proof_4_2}), we have
  \begin{align*}
    \frac{\lambda_2}{2\mu_2}\| d_{x}-\nabla_{x}u^{k+1}+\nabla_{x}u^{k} \|_{2}^{2} + \frac{\lambda_2}{2\mu_2}\| d_{x}^{k+1}-\nabla_{x}u^{k+1}+\nabla_{x}u^{k} \|_{2}^{2} \geq \\
    -\|d_{x}^{k+1}\|_{1} - \langle g_{x}^{k+1}, d_{x} - d_{x}^{k+1} \rangle + \|d_{x}^{k}\|_{1} + \langle g_{x}^{k}, d_{x} - d_{x}^{k} \rangle
  \end{align*}
  Choosing $d_{x} = \nabla_{x}u^{k+1} - \nabla_{x}u^{k}$ and defining $\beta_{k} = \langle g_{x}^{k}, \nabla_{x}u^{k} - \nabla_{x}u^{k-1} - d_{x}^{k} \rangle$, we have
  \begin{align*}
    \frac{\lambda_2}{2\mu_2}\| d_{x}^{k+1}-\nabla_{x}(u^{k+1}-u^{k}) \|_{2}^{2} \leq
    (\|d_{x}^{k+1}\|_{1}-\|d_{x}^{k}\|_{1}) + (\beta_{k+1}-\beta_{k})\\
     - \langle g_{x}^{k}, (\nabla_{x}u^{k+1}-\nabla_{x}u^{k})-(\nabla_{x}u^{k}-\nabla_{x}u^{k-1})\rangle
  \end{align*}
  Therefore, for $1\leq m \leq n$, we can get
  \begin{align*}
    \sum_{k=m}^{n-1}\frac{\lambda_2}{2\mu_2}\| d_{x}^{k+1}-\nabla_{x}(u^{k+1}-u^{k}) \|_{2}^{2} \leq
    \sum_{k=m}^{n-1}[(\|d_{x}^{k+1}\|_{1}-\|d_{x}^{k}\|_{1}) + (\beta_{k+1}-\beta_{k})\\
     - \langle g_{x}^{k}, (\nabla_{x}u^{k+1}-\nabla_{x}u^{k})-(\nabla_{x}u^{k}-\nabla_{x}u^{k-1})\rangle]
  \end{align*}
  Then
  \begin{align*}
    \sum_{k=m}^{n-1}\frac{\lambda_2}{2\mu_2}\| d_{x}^{k+1}-\nabla_{x}(u^{k+1}-u^{k}) \|_{2}^{2} \leq
    (\|d_{x}^{n}\|_{1}-\|d_{x}^{m}\|_{1}) + (\beta_{n}-\beta_{m})\\
     - \sum_{k=m}^{n-1}[\langle g_{x}^{k},(\nabla_{x}u^{k+1}-\nabla_{x}u^{k})-(\nabla_{x}u^{k}-\nabla_{x}u^{k-1})\rangle]
  \end{align*}
  Recall Lemma \ref{lemma_3}, the sequences $(d_{x}^{k})_{k=1,2,\ldots},(g_{x}^{k})_{k=1,2,\ldots},(u^{k})_{k=1,2,\ldots}$ are bounded, there exist positive constants $C_{1}$ and $C_{2}$ independent of $n$ and $m$ such that
  \begin{align}
    \sum_{k=m}^{n-1}\frac{\lambda_2}{2\mu_2}\| d_{x}^{k+1}-\nabla_{x}(u^{k+1}-u^{k}) \|_{2}^{2} \leq C_1 + C_2 (n-m)\eta_{m} \label{add1}
  \end{align}
  where $\eta_{m}:= \sup_{m\leq k}\|u^{k+1} - u^{k}\|_{2}$. Combining the Lemma \ref{lemma2}, we have $\lim_{m\rightarrow \infty} \eta_{m} = 0$.
  The same way to $d_{w}$ and $d_{y}$:
  \begin{align}
    \sum_{k=m}^{n-1}\frac{\lambda_1}{2\mu_1}\| d_{w}^{k+1}-\nabla_{w}u^{k+1} \|_{2}^{2} \leq C_1 + C_2 (n-m)\eta_{m} \label{add2}\\
    \sum_{k=m}^{n-1}\frac{\lambda_2}{2\mu_2}\| d_{y}^{k+1}-\nabla_{y}(u^{k+1}-u^{k}) \|_{2}^{2} \leq C_1 + C_2 (n-m)\eta_{m} \label{add3}
  \end{align}
  Adding (\ref{add1}), (\ref{add2}) and (\ref{add3}) gives
  \begin{align*}
    \sum_{k=m}^{n-1}\gamma_{k+1} \leq 3C_1 + 3C_2 (n-m)\eta_{m} \label{add2}
  \end{align*}
  Since $\gamma_{n}\leq \gamma_{k}$ for $k\leq n$, $(n-m)\gamma_{n}\leq 3C_1+3C_2(n-m)$, that means
  \begin{align*}
    \gamma_{n} \leq \frac{3C_1}{n-m} + 3C_2\eta_{m}
  \end{align*}
  The above inequality indicates that $\lim_{n\rightarrow \infty} \gamma_{n}=0$
\end{proof}

\begin{thmA}
\label{main_theorem}
For $k=1,2,\ldots$, let $b_{w}^{k}, b_{x}^{k},b_{y}^{k}$ and $u^{k+1}$ be given by the iteration (\ref{new_step1}) to (\ref{new_step4}). If $0< 20\lambda_1 + 4\lambda_2 <1$, then $\lim_{k\rightarrow \infty} u^{k} = u^{\ast}$.
\end{thmA}

\begin{proof}
  Let $F(u):= (1/2) \|u-v\|_{2}^{2}$, then $\partial F(u) = u-v$. For $w\in \mathbb{R}^{N^{2}}$ we can get
  \begin{equation}\label{}
    F(u^{k+1} + w) - F(u^{k+1}) - \langle u^{k+1}-v, w\rangle \geq 0
  \end{equation}
  From Lemma \ref{lemma_3}, we have $-(u^{k+1}-v) = \frac{\lambda_2}{\mu_2} \nabla_{x}^{T}b_{x}^{k} + \frac{\lambda_2}{\mu_2} \nabla_{y}^{T}b_{y}^{k} - \frac{\lambda_1}{\mu_1} \text{div}_{w}b_{w}^{k}$, and moreover $\langle div_{w}p, q \rangle = -\langle p, \nabla_{w} q \rangle$. Then,
  \begin{equation}\label{proof_main_1}
    F(u^{k+1} + w) - F(u^{k+1}) + \langle \frac{\lambda_2}{\mu_2} b_{x}^{k}, \nabla_{x}w \rangle + \langle \frac{\lambda_2}{\mu_2} b_{y}^{k}, \nabla_{y}w \rangle + \langle \frac{\lambda_1}{\mu_1} b_{w}^{k}, \nabla_{w}w \rangle\geq 0
  \end{equation}
  Recall that $G(d) = \|d\|_{1}$, $\frac{\lambda_1}{\mu_1} b_{w}^{k}\in \partial G(d_{w}^{k}), \frac{\lambda_2}{\mu_2} b_{x}^{k}\in \partial G(d_{x}^{k})$ and $\frac{\lambda_2}{\mu_2} b_{y}^{k}\in \partial G(d_{y}^{k})$, so,
  \begin{align}
     \|d_{w}^{k} + \nabla_{w}w\|_{1} - \|d_{w}^{k}\|_{1} - \langle \frac{\lambda_1}{\mu_1} b_{w}^{k}, \nabla_{w}w \rangle\geq 0 \label{proof_main_2}\\
     \|d_{x}^{k} + \nabla_{x}w\|_{1} - \|d_{x}^{k}\|_{1} - \langle \frac{\lambda_2}{\mu_2} b_{x}^{k}, \nabla_{x}w \rangle\geq 0 \label{proof_main_3}\\
     \|d_{y}^{k} + \nabla_{y}w\|_{1} - \|d_{y}^{k}\|_{1} - \langle \frac{\lambda_2}{\mu_2} b_{y}^{k}, \nabla_{y}w \rangle\geq 0 \label{proof_main_4}
  \end{align}
  Adding (\ref{proof_main_1}) $\sim$ (\ref{proof_main_4}) gives
  \begin{equation}\label{proof_main_5}
     \|d_{w}^{k}\|_{1} + \|d_{x}^{k}\|_{1} + \|d_{y}^{k}\|_{1} + F(u^{k+1}) \leq
     \|d_{w}^{k} + \nabla_{w}w\|_{1} + \|d_{x}^{k} + \nabla_{x}w\|_{1} + \|d_{y}^{k} + \nabla_{y}w\|_{1} + F(u^{k+1} + w)
  \end{equation}
  Suppose that $(k_j)_{j=1,2,\ldots}$ is an increasing sequence of positive integers such that the sequence $(u^{k_j})_{j=1,2,\ldots}$ converges to the limit $\tilde{u}$. By Lemma \ref{lemma2}, we have $\lim_{k\rightarrow \infty} (u^{k+1} - u^{k}) = 0$. Therefore, $\lim_{j\rightarrow \infty} u^{k_j+1} = \tilde{u}$. Moreover, we have
  the following via the Lemma \ref{lemma_4}:
  \begin{equation}\label{}
     \lim_{j\rightarrow \infty} d_w^{k} = \lim_{j\rightarrow \infty} [(d_w^{k} - \nabla_w u^{k}) + \nabla_w u^{k}]
      = \nabla_w \tilde{u}
  \end{equation}
  also have
  \begin{equation}\label{}
     \lim_{j\rightarrow \infty} d_x^{k} = \lim_{j\rightarrow \infty} [(d_x^{k} - \nabla_x (u^{k} - u_p)) + \nabla_x (u^{k} - u_p)]
      = \nabla_x (\tilde{u} - u_p)
  \end{equation}
  \begin{equation}\label{}
     \lim_{j\rightarrow \infty} d_y^{k} = \lim_{j\rightarrow \infty} [(d_y^{k} - \nabla_y (u^{k} - u_p)) + \nabla_y (u^{k} - u_p)]
      = \nabla_y (\tilde{u} - u_p)
  \end{equation}
  Replacing $k$ by $k_j$ in (\ref{proof_main_5}) and let $j\rightarrow \infty$, we have:
  \begin{align*}
    \|\nabla_{w} \tilde{u}\|_{1} + \|\nabla_{x} (\tilde{u} - u_p)\|_{1} + \|\nabla_{y} (\tilde{u} - u_p)\|_{1} + F(\tilde{u}) \leq \|\nabla_{w} (\tilde{u} + w)\|_{1} + \\
    \|\nabla_{x} (\tilde{u} - u_p + w)\|_{1} + \|\nabla_{y} (\tilde{u} - u_p + w)\|_{1} + F(\tilde{u} + w)
  \end{align*}
  The above equations hold for all the $w\in \mathbb{R}^{N^{2}}$. On the other hand, $u^{\ast}$ is the unique solution to the minimization problem (\ref{split_bregman_rof1}). Therefore, we must have $\tilde{u} = u^{\ast}$. Since $(u^{k})_{k=1,2,\ldots}$ is a bounded sequence, we have
  \begin{equation}\label{}
    \lim_{k\rightarrow \infty} u^{k} = u^{\ast}.
  \end{equation}
  This completes the proof of the Main Theorem 1.

\end{proof}
